\def\eqref#1{equation~\ref{#1}}
\def\1{\bm{1}}
\DeclareMathAlphabet{\mathsfit}{\encodingdefault}{\sfdefault}{m}{sl}
\SetMathAlphabet{\mathsfit}{bold}{\encodingdefault}{\sfdefault}{bx}{n}
\newcommand{\R}{\mathbb{R}}
\newcommand{\Sym}{\mathbb{S}}
\newcommand{\ip}[2]{\langle #1, #2 \rangle}
\newcommand{\norm}[1]{\left\lVert#1\right\rVert}
\newcommand{\sqr}[1]{\left[#1\right]}
\newcommand{\bbE}{\mathbb{E}}
\newcommand{\defeq}{\stackrel{\text{(def)}}{=}}
\newcommand{\bb}{\mathbf{b}}
\newcommand{\bX}{\mathbf{X}}
\newcommand{\bx}{\mathbf{x}}
\newcommand{\bg}{\mathbf{g}}
\newcommand{\bl}{\bm{\lambda}}
\newcommand{\bt}{\bm{\theta}}
\newcommand{\bsg}{\bm{\sigma}}
\newcommand{\bmu}{\bm{\mu}}
\newcommand{\btu}{\bm{\tau}}
\newcommand{\bmm}{\mathbf{m}}
\newcommand{\bS}{\mathbf{S}}
\newcommand{\bM}{\mathbf{M}}
\newcommand{\bN}{\mathbf{N}}
\newcommand{\bH}{\mathbf{H}}
\newcommand{\bK}{\mathbf{K}}
\newcommand{\bL}{\mathbf{L}}
\newcommand{\bV}{\mathbf{V}}
\newcommand{\bA}{\mathbf{A}}
\newcommand{\bB}{\mathbf{B}}
\newcommand{\bC}{\mathbf{C}}
\newcommand{\bD}{\mathbf{D}}
\newcommand{\bF}{\mathbf{F}}
\newcommand{\bW}{\mathbf{W}}
\newcommand{\cL}{\mathcal{L}}
\newcommand{\cD}{\mathcal{D}}
\newcommand{\cH}{\mathcal{H}}
\newcommand{\cN}{\mathcal{N}}
\newcommand{\Id}{\mathrm{I}}
\DeclareMathOperator{\tr}{tr}
\DeclareMathOperator{\diag}{diag}
\DeclareMathOperator{\vtr}{vec}
\DeclareMathOperator{\vech}{vech}
\DeclareMathOperator{\tril}{tril}
\DeclareMathOperator{\KLop}{KL}
\let\AND\relax   
\newcommand{\appendixTOC}{
    \begin{center}
        \textbf{Table of Contents}
    \end{center}
    \etocsettocstyle{\normalfont}{} 
    \localtableofcontents
}
\declaretheorem{lemma}
\declaretheorem{definition}
\declaretheorem{theorem}
\declaretheorem{assumption}
\theoremstyle{remark}
\declaretheorem{remark}
\title{Optimization Guarantees for Square-Root Natural-Gradient Variational Inference}
\author{\name Navish Kumar \email navish.kumar@unibas.ch \\
      \addr University of Basel, Basel, Switzerland\\
      \addr Department of Mathematics and Computer Science\\
      \AND\\
      \name Thomas M\"ollenhoff \email thomas.moellenhoff@riken.jp \\
      \addr RIKEN Center for AI Project, Tokyo, Japan \\
      \AND\\
      \name Mohammad Emtiyaz Khan \email emtiyaz.khan@riken.jp\\
      \addr RIKEN Center for AI Project, Tokyo, Japan \\
      \AND\\
      \name  Aurelien Lucchi \email aurelien.lucchi@unibas.ch \\
      \addr University of Basel, Basel, Switzerland\\
      \addr Department of Mathematics and Computer Science}
\begin{document}

\maketitle

\begin{abstract}
Variational inference with natural-gradient descent often shows fast convergence in practice, but its theoretical convergence guarantees have been challenging to establish. This is true even for the simplest cases that involve concave log-likelihoods and use a Gaussian approximation. We show that the challenge can be circumvented for such cases using a square-root parameterization for the Gaussian covariance. This approach establishes novel convergence guarantees for natural-gradient variational-Gaussian inference and its continuous-time gradient flow. Our experiments demonstrate the effectiveness of natural gradient methods and highlight their advantages over algorithms that use Euclidean or Wasserstein geometries. 
\end{abstract}
\section{Introduction}
\label{intro}

Variational inference (VI) is widely used in many areas of machine learning and can provide a fast approximation to the posterior distribution \citep{Jaakkola96b,  wainwright2008graphical, Gr11, kingma2013auto}. VI works by formulating Bayesian inference as an optimization problem over a restricted class of distributions which is then solved, for example, by using gradient descent (GD) \citep{Gr11, ranganath2014black, BlCo15}. A faster alternative is to use natural-gradient descent (NGD) \citep{amari1998natural} which exploits the information geometry of the posterior approximation and often converges much faster than GD, for example, see \Cref{fig:convergence-comparison}(a), \citet[Fig.~1b]{khan2018fast}, or \citet[Fig.~3]{lin2019fast}. The updates can often be implemented using message passing \citep{knowles2011non, hoffman2013stochastic, khan2017conjugate} and sometimes even as Newton's method or deep-learning optimizers \citep{khan2017variational, khan2018fast1, Salimans_2013}. Due to this, there has been a lot of interest in NGD based VI \citep{honkela2008natural, hensman2012fast, salimbeni2018natural, osawa2019practical, adam2021dual, khan2023bayesian}.

Despite their fast convergence, little has been done to understand the theoretical convergence properties of NGD, and its continuous counterpart, which we call natural-gradient (NG) flow. Instead, most works focus on GD \citep{AlRi16, domke2019provable, Do20, domke2023provable, kim2023black}. For NGD, \citet{khan2015faster} derived convergence results for general stochastic, non-convex settings. Still, their convergence rates are slow and there is plenty of room for improvement, especially for concave log-likelihoods. For such likelihoods, recently, \citet{chen2023gradient} studied the continuous-time flow. There are also some studies on stochastic NG variational inference; for example, \citet{wu2024understanding} prove non-asymptotic convergence rates for conjugate likelihoods. However, the conjugacy assumption limits their applicability to more general settings. Both in discrete and continuous time, however, there is a gap between theory and practice for convergence guarantees of NGD.

A major difficulty in obtaining stronger convergence guarantees for NGD and NG flow lies in a subtle technical issue: commonly used parameterizations can often destroy the concavity property of the log-likelihood \citep[Sec. 4.3]{cherief2019generalization}. It is well-known that the expectation of a concave log-likelihood for a Gaussian approximation is \emph{not} concave with respect to either the covariance or precision matrix, but only with respect to the Cholesky factor of the covariance \citep{challis2013gaussian}. Therefore, commonly used parameterizations, such as natural or expectation parameterization of the exponential family, destroy the concavity property and prohibit the application of tools from convex optimization to analyze the properties of NGD. Our goal in this paper is to address this issue.

We circumvent this problem by considering variational-Gaussian inference that is defined by the square root of the Gaussian covariance matrix. We show that discretizing the NG flow with this square-root parameterization results in an NGD update based on the Cholesky factor. This idea was proposed by \citet{tan2021analytic}, though it was also previously explored in a broader context by \citet{lin23c, lin24f}. Our method extends this idea to include any square-root parameterization. This way of parameterizing helps maintain the concavity of log-likelihood functions, which in turn facilitates convergence guarantees for both NG flow and NGD. The key contributions of our work are outlined as follows:

\begin{itemize}
    \item We prove that the NG flow exhibits an exponential convergence rate for a strongly concave log-likelihood, as stated in \Cref{theorem:FR-flow}. When discretized using the square-root parameterization, this aligns with the update provided by \citet[Theorem 1]{tan2021analytic}. The proof of convergence hinges on establishing a Riemannian Polyak-Łojasiewicz (PL) inequality, which is shown to be valid when the lowest eigenvalue of the metric tensor is bounded.
    \item We prove an exponential rate of convergence (\Cref{theorem:global-tan-cholesky}) for NGD using Cholesky and under strongly-concave log-likelihood; the proof extends to any square-root parameterization.
    \item We present empirical results showcasing the fast convergence of NGD, attributed to its Newton-like update. These results are illustrated in \Cref{fig:small_scale} and \Cref{fig:large_scale}. It is important to highlight that our experiments employ the piecewise bounds described in \citep{marlin2011piecewise} for computing expectations. This raises the question of how the resulting bias affects NGD convergence. To address this, we include a discrete-time analysis with bias in our supplementary section, see \Cref{theorem:biased-tan-cholesky} in \Cref{app:ngd_w_bias}. The reason for our preference for the piecewise over the stochastic implementation is justified in \Cref{sec:piece_vs_stoc}.
\end{itemize}Our proof addresses the additional challenges posed by the non-smoothness of entropy and does not require any proximal or projection operators such as those used by \citet{domke2019provable, domke2023provable}. Our results can be useful for further research on proving such results for generic NGD updates.~The use of square-roots could lead to suboptimal convergence, for instance, it may not converge in one step for quadratic problems (as shown in \Cref{fig:convergence-comparison}(a)), but this only implies that even better results are possible for the general case. We provide an insight into why this happens in \Cref{sec:SR_VN_vs_VN}, noting that square-root-based updates essentially compute an approximate estimate of the inverse of the preconditioning matrix that is used in Newton's method.  

\begin{figure}[t!]
    \centering
    \subfloat[\centering ]{{\includegraphics[trim={2cm 4cm 0 0},clip,width=0.226\textheight]{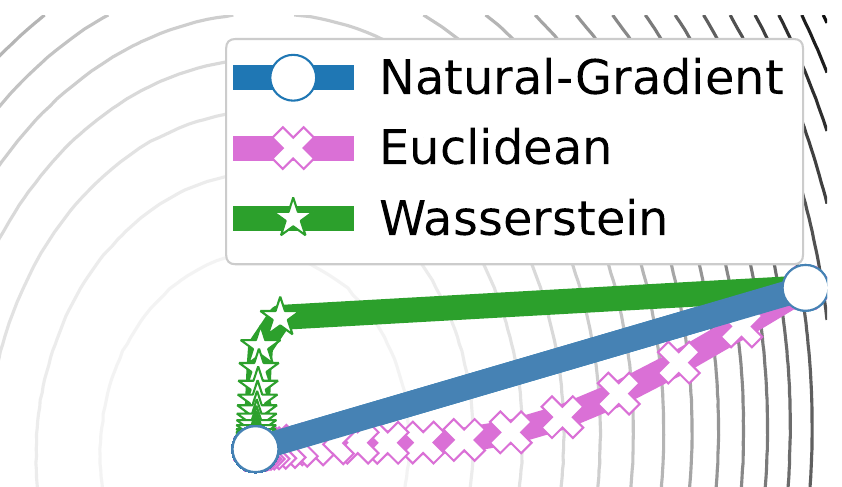} }}%
    \quad
    \subfloat[\centering ]{{\includegraphics[trim={2cm 4.1cm 0 0},clip,width=0.226\textheight,height=0.12\textheight]{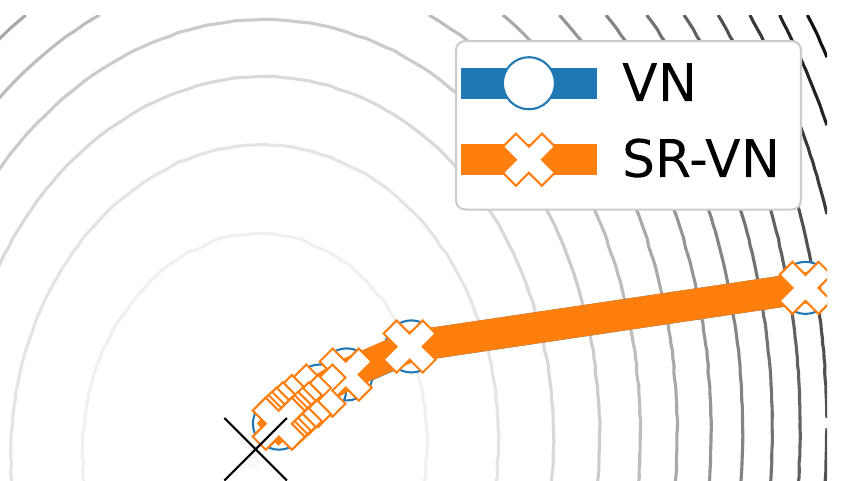}}}%
    \quad
    \subfloat[\centering ]{{\includegraphics[trim={2cm 4.1cm 4 4},clip,width=0.226\textheight,height=0.12\textheight]{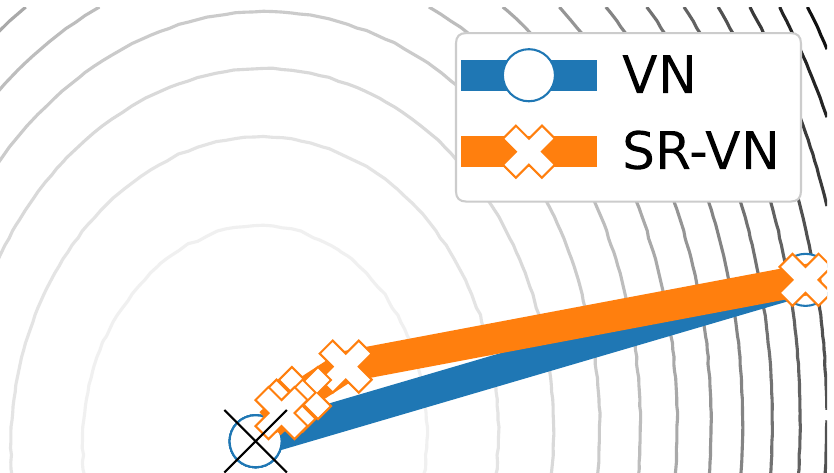}}}%
    \caption{\small{(a) Natural-gradient descent (circle) converges much faster (in just 1 step) than algorithms that use Euclidean (cross) or Wasserstein (star) geometries which take 10 and 470 iterations respectively. The illustration is on a 2-D Bayesian linear regression (quadratic loss in the background for all three figures). The 3 methods are taken from \cite{khan2023bayesian, ranganath2014black, lambert2022variational}, (b) For smaller step-sizes (of the order $\rho=10^{-5}$ or less), SR-VN and VN perform almost identically, (c) For larger step-sizes they start to differ, step-size for VN here is $1$.}}%
    \label{fig:convergence-comparison}%
\end{figure}

\section{Variational Inference}

The goal of variational inference is to approximate the posterior distribution by a simpler distribution, such as the Gaussian distribution. Given a Bayesian model with $n$ likelihoods, denoted by $p(\mathcal{D}_i|\bt)$ for the $i$'th data example with parameter $\bt$, and a prior $p(\bt)$ over the parameters, Bayesian inference aims to find the posterior $p(\bt\mid\mathcal{D}_1, \mathcal{D}_2, \ldots, \mathcal{D}_n)$. In variational inference, we aim to find an approximation $q_{\btu}(\bt)$ to the posterior by solving the following problem over the space $\btu\in\Omega $ of the valid \emph{variational parameters},
\begin{equation}
\label{eq:KL-form}
\min_{\btu \in \Omega } ~ \bbE_{q_{\btu}(\bt)}[\ell(\bt)] + \gamma \KLop \left[q_{\btu}(\bt) \| p(\bt) \right],
\end{equation}
where we denote $\ell(\bt) = -\sum_{i=1}^n \log p(\mathcal{D}_i|\bt)$, $\KLop[\cdot\|\cdot]$ to be the KL divergence, and $\gamma>0$ is a scalar parameter commonly referred to as the temperature. When $\gamma = 1$, the objective is referred to as the negative evidence lower bound (ELBO) \citep{Jaakkola96b, blei2017variational}, which is also the focus of our analysis. The value of $\gamma$ plays a crucial role in shaping the posterior variance.  If $\gamma$ is set too high, the KL term dominates and forces the variational posterior $q_{\btu}(\bt)$ to collapse to the prior $p(\bt)$, eliminating any learned posterior structure. Conversely, if $\gamma$ is set too low, the KL penalty becomes negligible, which can lead to high-variance gradient estimates and unstable training \citep{shen2024variational}.


In many problems, the negative log-likelihood (NLL) $\ell(\bt)$ is convex; it could also be an arbitrary loss function, giving rise to \emph{generalized} posterior \citep{zhang1999theoretical, catoni2007pac}. The regularizer, denoted by $R(\bt) = -\log p(\bt)$, too is chosen to be convex. Popular examples include Bayesian linear and logistic regressions, as well as Gaussian process models. In such cases, a more convenient form is used to separately define two terms as the expected log-loss and negative-entropy respectively, that is $f(\btu) = \bbE_{q_{\btu}(\bt)}[\ell(\bt) + \gamma R(\bt)],~\cH(\btu) = \bbE_{q_{\btu}(\bt)}[\log q_{\btu}(\bt)].$ Using these, we can rewrite \Cref{eq:KL-form} by expanding the KL divergence as 
\begin{equation}
\label{eq:general-elbo}
\min_{\btu \in \Omega } \cL(\btu) \quad \text{where} \quad \cL(\btu) := ~ f(\btu) + \gamma \cH(\btu),
\end{equation}
and where the first term involves the expectation of a convex loss function while the second term is the entropy. This reformulation allows us to connect VI to a wide variety of problems in supervised, unsupervised, and reinforcement learning which can be formulated as a minimization of the form
\begin{align}\label{eq:generic-opt}
\min_{\bt \in \R^p} \Bar\ell(\bt) \quad \text{where} \quad \Bar\ell(\bt) := \ell(\bt) + \gamma R(\bt).
\end{align}
VI approaches can be seen as a variant of such problems where instead of finding a single minimizer $\bt^*$, we seek a parametric distribution $q_{\btu^*}(\bt)$ by solving \Cref{eq:general-elbo}. This is closely related to techniques in robust or global optimization that use $q_{\btu}(\bt)$ to smooth the loss and mitigate the influence of local minima \citep{mobahi2015theoretical, leordeanu2008smoothing, hazan2016graduated}, often with $\gamma$ taking a value of $0$. 
This approach is also referred to as Variational Optimization (VO)~\citep{staines2012variational}. VO can lead us to the minimum $\bt^*$ because, for $\gamma=0$, $\cL(\btu)$ is an upper bound on the minimum value of $\Bar\ell$, that is, $\min_{\bt} \Bar\ell(\bt) \leq \min_{\btu \in \Omega }\cL(\btu)$. Therefore minimizing $\cL(\btu)$ minimizes $\Bar\ell(\bt)$, and when the distribution $q$ puts all its mass on $\bt^*$, we recover the minimum value. Problem~\ref{eq:general-elbo} with $\gamma=0$ is also encountered across various other domains, including random search \citep{baba1981convergence}, stochastic optimization \citep{spall2005introduction}, and evolutionary strategies \citep{beyer2001theory}. In this context, $q_{\btu}(\bt)$ serves as the `search' distribution employed to locate the global minimum of a black-box function, $\ell(\bt)$. Notably, in the realm of reinforcement learning, this distribution can take on the form of a policy distribution designed to minimize the expected value-function $\ell(\bt)$ \citep{sutton1998introduction}, sometimes incorporating entropy regularization techniques \citep{williams1991function, teboulle1992entropic, mnih2016asynchronous}.


\section{Background on NGD for VI}
We will review VI using GD and NGD methods and discuss the NG flow and NGD with the Cholesky factor, and for both, we will provide convergence guarantees in the later sections.

\subsection{Variational Gaussian Inference with GD}
Throughout, we will focus on VI with a Gaussian distribution $q_{\btu}(\bt):= \cN(\bt\,|\,\bmm, \bV)$ where $\bmm \in \R^d$ is the mean, $\bV \in \R^{d \times d}$ is the covariance. The variational parameter $\btu = (\bmm, \bV)$ is only defined within the set of valid parameters $\Omega'$ -- the set of $\btu$ where $\bV$ is positive definite, that is $\Omega ^\prime = \{(\bmm, \bV): \bmm\in\R^d, \bV \succ 0 \in\R^{d \times d}\}$. The problem \ref{eq:general-elbo} can then be rewritten as, ${\min_{\bmm, \bV} ~ \cL(\bmm, \bV).}$ The function $\cL$ is differentiable under mild conditions on the interior of $\Omega^\prime$ even when $\ell$ is not differentiable, as discussed by \cite{staines2012variational}. This makes it possible to
apply gradient-based optimization methods to optimize it. A straightforward approach to minimize $\cL$ is to use Gradient Descent (GD) as shown below,
\begin{equation}
    \begin{aligned}
    \text{\bf GD:} \quad \bV_{t+1} = \bV_t - \rho \nabla_\bV \cL(\bmm_t, \bV_t),\quad
    \bmm_{t+1} = \bmm_t - \rho \nabla_\bmm \cL(\bmm_t, \bV_t),
\end{aligned}
\end{equation}
where $\rho > 0$ is a step size. This approach is often referred to as the black-box VI (BBVI) in the literature \citep{ranganath2014black}, and offers simple and convenient updates to implement using modern automatic-differentiation methods and the reparameterization trick~\citep{Gr11, BlCo15, titsias2014doubly}. Unfortunately, such direct gradient-descent methods can be slow in practice.

\subsection{Variational Gaussian Inference with NGD}

An alternative approach is to use natural-gradient descent which exploits the information geometry \citep{amari2016information} of $q$ to speed up convergence.  Given the Fisher information matrix (FIM) \citep{amari2008information} of $q_{\btu}(\bt)$, that is $\mathbf{F}_{\btu} = \bbE_{q_{\btu}(\bt)}[\nabla_\tau \log q_{\btu}(\bt)(\nabla_\tau \log q_{\btu}(\bt))^\top]$, which is positive-definite for all $\btu \in \Omega$ in our setting, the NGD for VI in the natural-parameter space (with $\btu$ as natural parametrization) is given as follows \citep{khan2023bayesian}:
\begin{align}\label{eq:ngd}
    \text{\bf NGD:} \quad \btu_{t+1} = \btu_t- \rho \bF_{\btu_t}^{-1} \nabla_{\btu} \cL(\btu_t).
\end{align}The preconditioning of the gradient by the FIM leads to proper scaling in each dimension which often leads to faster convergence. A notable connection between NGD for \ref{eq:general-elbo} and Newton’s method for \ref{eq:generic-opt} was established by \cite{khan2017variational, khan2018fast1} in the case of a Gaussian distribution with mean $\bmm$ and the precision $\bS$, which is the inverse of the covariance $\bS = \bV^{-1}$. Denoting the expected gradient and the expected Hessian, respectively, as:
\begin{equation}
\begin{aligned}\label{eq:grads}
    \bg_t = \bbE_{\bt \sim \cN(\bmm_t, \bV_t)}[\nabla_{\bt} \Bar \ell(\bt)], \quad\bH_t = \bbE_{\bt \sim \cN(\bmm_t, \bV_t)}[\nabla^2_{\bt}  \Bar \ell(\bt)].
\end{aligned}
\end{equation}The resulting updates, 
\begin{equation}\label{eq:blr-updates}
\begin{aligned}
    \text{\bf VN:} \quad \bS_{t+1} = (1 - \gamma\rho)\bS_t + \rho \bH_t,\quad
    \bmm_{t+1} = \bmm_t - \rho (\bS_{t+1})^{-1}\bg_t,
\end{aligned}
\end{equation}referred to as the variational Newton (VN) update, optimize \ref{eq:general-elbo} instead of \ref{eq:generic-opt}. The standard Newton’s update for problem \ref{eq:generic-opt} is recovered by approximating the expectations at the mean $\bmm$ and using step-size $\rho = 1$ when $\gamma = 1$; see \cite{khan2023bayesian}. Since the precision $\bS$ lies in a positive-definite matrix space, the update \ref{eq:blr-updates} may violate this positivity constraint of the parameter space $\Omega$ \citep{khan2018fast}. For example, this happens when the loss $\ell(\bt)$ is non-convex.

\subsection{Natural Gradient Flow}
The NG flow minimizes the KL divergence functional on a parameterized manifold of Gaussian densities -- the objective stated in \Cref{eq:KL-form}. That is, NG flow is a continuous time update of the NGD (\Cref{eq:ngd}, for $\gamma=1$) that is, 
\begin{align}\label{eq:NGD_flow}
    \frac{d\btu_t}{dt} = - \bF_{\btu_t}^{-1} \nabla_{\btu} \cL(\btu_t) = - \bF_{\btu_t}^{-1} \nabla_{\btu}\KLop\bigg[q_{\btu}(\bt)\mid\mid p(\bt\mid\cD)\bigg].
\end{align}The updates for the NG flow in $\btu = (\bmm, \bV)$ parameterization were provided in \citet[Equation (4.18)]{chen2023gradient} as:
\begin{equation}
\begin{aligned}\label{eq:NG flow}
        \frac{d\bV_t}{dt} = \bV_t - \bV_t\bH_t\bV_t, \quad
        \frac{d\bmm_t}{dt} = -\bV_t\bg_t.
\end{aligned}
\end{equation}Upon discretization with respect to the natural parameters, this results in the VN update with $\gamma=1$. Refer to \Cref{app:SR_VN_vs_VN} for a comparison of the $\bV$ updates, as obtained from VN. However, due to the non-convex nature of $\KLop$ even with concave log-likelihood for $(\bmm, \bV)$ parameterization (see further discussion in \Cref{sec:problems_w_convexity}), we instead focus on analyzing the NG flow in the square-root parameterization, that is $\btu = (\bmm, \bC) \in \Omega$, where $\Omega = \{(\bmm, \bC): \bmm\in\R^d, \bV=\bC\bC^\top \succ 0 \in\R^{d \times d} \}$ under the assumption that $\bC$ is square and lower-triangular with positive real diagonal entries, ensuring its invertibility. Let us define a function that takes the lower-triangular part of a matrix and halves its diagonal, that is
\begin{align}\label{eq:tril}
    \tril[\bA]_{ij} := \left\{\begin{array}{lr}
    A_{ij} & i>j,\\
    \frac{1}{2}A_{ii} & i=j,\\
    0 & i<j.
    \end{array}\right.
\end{align}Then, to derive flow equations for the Cholesky factor, we first note from \citet[Equation (7)]{murray2016differentiation} that the derivative of the Cholesky factor $\bC_t$ can be obtained from the derivative of the covariance $\bV_t = \bC_t\bC_t^\top$ as
\begin{align}
    \frac{d\bC_t}{dt} = \bC_t \tril[\bC_t^{-1}\frac{d\bV_t}{dt}\bC_t^{-\top}],
\end{align}which after using \Cref{eq:NG flow} leads to the following flow equations:
\begin{equation}\label{eq:SR-VN-flow}
    \begin{aligned}
        \frac{d\bC_t}{dt} = \bC_t \tril[\Id - \bC_t^\top\bH_t\bC_t], \quad
        \frac{d\bmm_t}{dt} = -\bC_t\bC_t^\top\bg_t.
    \end{aligned}
\end{equation}In \Cref{theorem:FR-flow}, we prove the convergence of the flow dynamics in \Cref{eq:SR-VN-flow} under the concave log-likelihood. Importantly, the NG flow remains the same regardless of the parameterization, and therefore the flow dynamics in \Cref{eq:NG flow} and \Cref{eq:SR-VN-flow} trace out the same trajectory on the Gaussian manifold. Hence, we focus on square-root parametrization solely for its theoretical convenience in proving convergence.

\subsection{Square-Root Variational Newton (SR-VN)}\label{sec:SR-VN}

\begin{algorithm}[t!]
\caption{\textbf{Square-Root Variational Newton (SR-VN)}}
\begin{algorithmic}[1]
\STATE {\bf{Parameter:}} Step-size $\rho>0$ is chosen using \Cref{eq:permissible-step-size};~$\tril$ is defined in \Cref{eq:tril}.
\STATE Initialize $(\bmm_0, \bC_0)$ to satisfy Assumption~\ref{assum:init}
\FOR{$t=0, \hdots, T$}
\STATE Update $\bg_t$ and $\bH_t$ using \Cref{eq:grads}
\STATE $\bC_{t+1} \leftarrow \bC_t - \rho \bC_t\tril[\bC_t^\top\bH_t\bC_t - \gamma \Id]$
\STATE $\bmm_{t+1} \leftarrow \bmm_t - \rho \bC_t\bC_t^\top \bg_t$
\ENDFOR
\end{algorithmic}
\label{algo:TC}
\end{algorithm}
In this section, we introduce a different NGD update called square-root variational Newton (SR-VN), outlined in \Cref{algo:TC}, which utilizes the Cholesky factor $\bC$ of the covariance matrix $\bV$. SR-VN is derived through a forward Euler discretization of the NG flow defined in square-root parameterization (\Cref{eq:SR-VN-flow}). The resultant update aligns with independently proposed updates by \citet{tan2021analytic}, who defined a VN-like update using the Cholesky factor, given as
\begin{align*}
    \bC_{t+1} = \bC_t - \rho \bC_t \tril[\bC_t^\top\nabla_{\bC} \cL (\bmm_t, \bC_t)], \quad
    \bmm_{t+1} = \bmm_t - \rho \bC_t\bC_t^\top \nabla_{\bmm} \cL (\bmm_t, \bC_t).
\end{align*}
Now, as calculated in \Cref{eq:nablaf_C} in Appendix \ref{app:grad}, we have 
\begin{align}
    &\nabla_{\bC}\cL(\bmm_t, \bC_t)= (\bH_t-\gamma (\bC_t\bC_t^\top)^{-1})\bC_t.\label{eq: nabla_C_L}
\end{align}Finally, by seeing that $\nabla_{\bmm} \cL (\bmm_t, \bC_t)=\bg_t$, we can rewrite the updates from \citet[ Theorem 1]{tan2021analytic} as 
\begin{align}
    \text{\bf SR-VN:} ~ \bC_{t+1} = \bC_t - \rho \bC_t\tril[\bC_t^\top\bH_t\bC_t - \gamma \Id], \quad
    \bmm_{t+1} = \bmm_t - \rho \bC_t\bC_t^\top \bg_t, \label{eq:tan-cholesky}
\end{align}allowing us to immediately see that these updates indeed come from a direct forward Euler discretization of \Cref{eq:SR-VN-flow}. In \Cref{theorem:global-tan-cholesky}, we establish convergence guarantees for SR-VN. We also note here that other updates based on the Cholesky factor have been considered in the literature, for example, see \citep{sun2009efficient, salimbeni2018natural, glasmachers2010exponential}.

\subsection{SR-VN as an Approximation of VN}\label{sec:SR_VN_vs_VN}

It is interesting to see that SR-VN performs a Newton-like update in $\bmm$ (\Cref{eq:tan-cholesky}), but does not require a matrix inversion of the preconditioner (such as the inversion of the Hessian in VN, \Cref{eq:blr-updates}). This makes it easier to implement and analyze. However, we show in \Cref{app:SR_VN_vs_VN} that, in reality, SR-VN computes an approximation of the inverse and is therefore only suboptimal compared to VN. Looking at \Cref{fig:convergence-comparison}(b), we can see that when the step-size is very small, SR-VN and VN converge to the same dynamics because they both approximate the NG flow. However, the quadratic offset introduced by the inverse approximation used in SR-VN leads it to perform suboptimally compared to VN. VN, on the other hand, computes the full inverse of the FIM defined using natural parameters (\Cref{eq:ngd}), and it shows its superiority by achieving one-step convergence in \Cref{fig:convergence-comparison}(c) when the step-size is larger, specifically when it is set to 1.

\subsection{Variational Parameters and Convexity}\label{sec:problems_w_convexity}

NGD updates are often written in the natural-parameter space \ref{eq:ngd}, but this can destroy the convexity of the problem. Specifically, given a convex function $\bar{\ell}(\bt)$, it is well-known that $\bbE_{\bt \sim \cN(\bmm, \bV)}[\Bar \ell(\bt)]$ is jointly convex w.r.t. $\bmm$ and the Cholesky factor $\bC$ of $\bV$, but not w.r.t $\bV$ or its inverse \citep{challis2013gaussian, domke2019provable}. That is, even for the simplest convex cases, such as logistic regression, the convexity is lost in the natural and expectation parameter spaces. For example, given $n$ training data points $\{\bx_i, y_i\}_{i=1}^n$ with $\bx_i \in \R^d$ and $y_i \in \{-1, +1\}$, in Bayesian logistic regression we minimize the following loss that contains the $\ell_2$-regularization with regularization constant $\beta >0$:
\begin{align}\label{eq:logistic-loss}
    \Bar\ell_{lg}(\bt) = \sum_{i=1}^n \bigg[\log(1+\exp\{y_i(\bt^\top\bx_i)\})\bigg] + \frac{\beta}{2} \norm{\bt}^2
\end{align}The Hessian takes the form $\nabla^2_{\bt}\Bar\ell_{lg}(\bt) = \bX\bD\bX^\top + \beta \Id$, where $\bX = [\bx_1, \ldots, \bx_n]^\top \in \R^{n \times d}$ is the data matrix, $\bD \in \R^{d \times d}$ is a diagonal matrix such that $D_{ii} = \bsg(\bt^\top\bx_i)(1 - \bsg(\bt^\top\bx_i))$ with the sigmoid function $\bsg(z) = 1/(1+e^{-z})$. Given this form of the Hessian, it is clear that the Hessian is positive-definite (and hence regularized logistic loss is $\beta$-strongly convex), however, the expected loss $\bbE_{\bt \sim q_{\btu}(\bt)}[\Bar\ell_{lg}(\bt)]$ is non-convex in natural and expectation parameterizations, or even in the $(\bmm, \bV)$ parameterization of the Gaussian. 

This issue makes it harder to analyze convergence guarantees of NGD in \Cref{eq:ngd} in general, and therefore for the algorithm given in \Cref{eq:blr-updates}. The same is true for the NG flow in \Cref{eq:NG flow}, because $\KLop$ is non-convex in $\btu = (\bmm, \bV)$. The issue is specifically noted by \cite{cherief2019generalization} for NGD. They analyze the convergence of online VI by using tools from convex optimization, but their proof techniques do not extend to NGD because of the loss of convexity in the natural or expectation parameters. Our goal in this paper is to address this challenge. Our key idea is to simply switch to the square-root parameterization where the convexity properties are preserved. We consider specific NGD variants that use such square-root parameterization and derive strong convergence guarantees for them. Additionally, by leveraging the strong convexity of the log-likelihood we show that $\KLop$ functional satisfies a Riemannian PL inequality in square-root parameterization, which eventually leads us to prove convergence for NG flow.

\section{Assumptions}\label{sec:assumptions}

We will use the following assumptions for our analysis.
\begin{assumption}[Initialization]
\label{assum:init}
We initialize \Cref{algo:TC} with $\btu_0 = (\bmm_0, \bC_0) = (\bmm_0, C_0\Id)$ for a constant $C_0 > 0$ and $\bmm_0 \in \R^d$.
\end{assumption}

\begin{assumption}[Strong Convexity]\label{assum:strong-convexity}
    We assume that the negative log-likelihood $\Bar\ell(\bt)$ is $\delta$-strongly-convex in $\bt$, that is,
    $\nabla_{\bt}^2\Bar\ell(\bt) \succcurlyeq \delta\Id$. Then, from \cite[Theorem 9]{domke2019provable} $\cL(\bmm, \bC)$ is $\delta$-strongly-convex in square-root parametrization $\btu = (\bmm, \bC)$, that is
    \begin{align}\label{eq:strong_convex_new}
    \cL(\btu') \geq \cL(\btu) + \ip{\nabla_{\btu}\cL(\btu)}{\btu'-\btu} + \frac{\delta}{2}\norm{\btu'-\btu}^2
\end{align}
\end{assumption}

\begin{assumption}[Lipschitz Gradient]\label{assum:lipschitz}
    We assume that the negative log-likelihood $\Bar\ell(\bt)$ is $M$-Lipschitz-smooth in $\bt$, that is,
    $\nabla_{\bt}^2\Bar\ell(\bt) \preccurlyeq M\Id.$ Then, from \cite[Theorem 1]{Do20} $\cL(\bmm, \bC)$ is $M$-Lipschitz-smooth in square-root parametrization $\btu = (\bmm, \bC)$, that is
    \begin{align}\label{eq:lip_new}
    \cL(\btu') \leq \cL(\btu) + \ip{\nabla_{\btu}\cL(\btu)}{\btu'-\btu} + \frac{M}{2}\norm{\btu'-\btu}^2
\end{align}
\end{assumption}

\begin{assumption}[Bounded Iterates]\label{assum:bounded-iterates}
   Let the iterates generated at time $t \in \mathbb{N}_0$ from \Cref{algo:TC} be $\bC_t$. Then the following two statements are true.
    \begin{enumerate}
        \item There exist constants $0<\xi_l\leq \xi_u < \infty$ such that $\xi_l \leq \norm{\bC_t}_F \leq \xi_u$ for all $t \geq 0$.
        \item There exists a constant $\lambda_{\min} > 0$ such that $\bV_t \succcurlyeq \lambda_{\min}\Id$ for all $t \geq 0$.
    \end{enumerate}
\end{assumption}

\begin{remark}\label{remark:high_eigval}
The largest singular value of $\bV_t$ (that is, $\norm{\bV_t}$) can be upper bounded using Assumption \ref{assum:bounded-iterates} as follows: $\norm{\bV_t} = \norm{\bC_t\bC_t^\top} \leq \norm{\bC_t}^2 \leq \norm{\bC_t}^2_F \leq \xi_u^2$.
\end{remark}
Assumptions \ref{assum:strong-convexity} and \ref{assum:lipschitz} hold for commonly encountered loss functions, e.g. the $\ell_2$-regularized logistic loss \ref{eq:logistic-loss}. \Cref{assum:bounded-iterates} ensures that the updates remain bounded throughout optimization, and enforces the positive-definiteness of $\bV$.  

\textbf{The assumptions hold for Logistic Regression.}
We saw in \Cref{sec:problems_w_convexity} that the Hessian of the logistic loss takes the form $\bX\bD\bX^\top + \beta \Id$, where the diagonal matrix $\bD$ has entries bounded between $0$ and $1/4$. Hence, the Hessian of the regularized logistic loss is bounded as long as the data matrix $\bX$ is bounded (which is often the case), i.e., $\beta \Id \preccurlyeq \bH \preccurlyeq \zeta \Id$ where $\zeta = \lambda_{\max}(\bX\bD\bX^\top) + \beta$. Thus, the assumption of a bounded square root in part 1 of \Cref{assum:bounded-iterates} holds with an appropriate selection of step-size in \Cref{eq:tan-cholesky}. Additionally, \Cref{assum:strong-convexity} and part $2$ of \Cref{assum:bounded-iterates} are met because regularized logistic regression is $\beta$-strongly convex. Therefore, Assumptions~\ref{assum:init}--\ref{assum:bounded-iterates} apply to the original VN's $\bV$ update when working with the regularized logistic loss. As confirmed empirically, the same conclusion extends to the SR-VN's $\bC$ update. 


\section{Analysis}
 In this section, we will analyze the convergence guarantees for the NG flow (\Cref{eq:SR-VN-flow}) and SR-VN (\Cref{algo:TC}). We will do this by applying tools from convex optimization, considering a set of assumptions related to boundedness and convexity.  To prove convergence, we need \Cref{assum:strong-convexity} and part $2$ of \Cref{assum:bounded-iterates} for the NG flow, and Assumptions \ref{assum:init}--\ref{assum:bounded-iterates} for SR-VN.

\subsection{Flow Convergence with Riemannian PL}
We first show that, under the assumption of strongly convex NLL (\Cref{assum:strong-convexity}), the $\KLop$ functional satisfies a local Riemannian PL inequality in the square-root parameterization (refer to \Cref{lemma:PL}). This result paves the way for proving the convergence of the NG flow, as outlined in \Cref{theorem:FR-flow}. 

We first state the following lemma that ensures that the FIM is bounded, and thus well-behaved, which is crucial for our entire analysis. The detailed proof for all the stated results can be found in \Cref{app:ngd_conv}. 

\begin{lemma}[Bounded FIM eigenvalues]\label{lemma:bdd_FIM_eig}
    For $\btu = (\bmm, \bC)$, given Assumptions \ref{assum:bounded-iterates}, we conclude that 
    \begin{align*}
        \lambda^{g}_{\min}\begin{pmatrix}
      \Id_{d\times d} & 0 \\
       0 & \Id_{d^2\times d^2} \\
  \end{pmatrix} \preccurlyeq \bF_{\btu}^{-1} \preccurlyeq \lambda^{g}_{\max}\begin{pmatrix}
      \Id_{d\times d} & 0 \\
       0 & \Id_{d^2\times d^2} \\
  \end{pmatrix},
    \end{align*}where 
\begin{align}
    \lambda^{g}_{\min} = \min\left\{\lambda_{\min}, \frac{\lambda_{\min}^2}{2}\right\}, \quad \text{and} \quad \lambda^{g}_{\max} = \xi_u^2.
\end{align}
\end{lemma}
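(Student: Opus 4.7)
The plan is to exploit the block-diagonal structure of $\bF_\btu$ in the $(\bmm,\bC)$ parameterization. Because $\nabla_\bmm \log q(\bt)$ is linear in $\bz=\bC^{-1}(\bt-\bmm)$ while $\nabla_\bC \log q(\bt)$ is quadratic in $\bz$ and centered, the mixed expectation $\bbE_q[\nabla_\bmm \log q \,\nabla_\bC \log q^\top]$ vanishes by the odd-moment symmetry of the standard Gaussian. This reduces the lemma to bounding the eigenvalues of $[\bF_\btu^{-1}]_{\bmm\bmm}$ and $[\bF_\btu^{-1}]_{\bC\bC}$ separately and then assembling the worst constants into the claimed two-sided bound.

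For the mean block, I would use $\nabla_\bmm \log q(\bt) = \bV^{-1}(\bt-\bmm)$ together with $\bbE_q[(\bt-\bmm)(\bt-\bmm)^\top] = \bV$ to get $[\bF_\btu]_{\bmm\bmm} = \bV^{-1}$, hence $[\bF_\btu^{-1}]_{\bmm\bmm} = \bV$. Part~2 of \Cref{assum:bounded-iterates} then supplies $\bV \succcurlyeq \lambda_{\min}\Id$, and \Cref{remark:high_eigval} supplies $\bV \preccurlyeq \xi_u^2\Id$, pinching the mean block of $\bF_\btu^{-1}$ into $[\lambda_{\min}\Id,\xi_u^2\Id]$, which matches the mean-block contribution to both $\lambda_{\min}^g$ and $\lambda_{\max}^g$.

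For the $\bC$-block, the plan is to reparametrize $\bt = \bmm + \bC\bz$ with $\bz\sim\cN(\vzero,\Id)$ so that $\log q$ equals $-\log|\bC| - \tfrac12\bz^\top\bz$ up to an additive constant, and then compute $\nabla_\bC \log q = \bC^{-\top}(\bz\bz^\top - \Id)$. Vectorizing and applying Isserlis' theorem to the outer-product expectation evaluates $\bbE[\mathrm{vec}(\bz\bz^\top-\Id)\mathrm{vec}(\bz\bz^\top-\Id)^\top] = \Id + \bK$, where $\bK$ is the $d^2\times d^2$ commutation matrix, yielding the Kronecker form $[\bF_\btu]_{\bC\bC} = (\Id \otimes \bC^{-\top})(\Id + \bK)(\Id \otimes \bC^{-1})$. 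Bounding the spectral norm of each Kronecker factor using $\norm{\bC^{-1}}^2 \leq 1/\lambda_{\min}$ (from part~2 of \Cref{assum:bounded-iterates}) and $\norm{\bC}^2 \leq \xi_u^2$ (from \Cref{remark:high_eigval}), together with the extremal eigenvalues $\{0,2\}$ of $\Id + \bK$, then delivers the bounds on (the relevant restriction of) $[\bF_\btu^{-1}]_{\bC\bC}$.

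The main obstacle is the rank deficiency of $\Id + \bK$, which annihilates the antisymmetric subspace of $\R^{d^2}$: the full $d^2$-dimensional embedding of $\bC$ is redundant relative to the $d(d+1)/2$ genuine degrees of freedom of a lower-triangular Cholesky factor, so the inversion implicit in the lemma must be interpreted on the effective (symmetric / lower-triangular) subspace where $[\bF_\btu]_{\bC\bC}$ is positive definite. Tracking the Kronecker and symmetrization factors carefully is what produces the factor $1/2$ in $\lambda_{\min}^g = \min\{\lambda_{\min}, \lambda_{\min}^2/2\}$, and combining with the mean-block pinching then yields the two-sided bound on $\bF_\btu^{-1}$ claimed by the lemma.
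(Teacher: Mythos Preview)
Your block-diagonal reduction and the mean-block bounds match the paper's argument exactly. The route for the $\bC$-block, however, differs: the paper quotes the closed form for $\bF_{\bC}^{-1}$ in the reduced $d(d+1)/2$-dimensional $\vech$ space from \cite[Lemma~1]{tan2021analytic}, which already carries the elimination matrix $\bL$, and then bounds $\norm{\bF_{\bC}}$ and $\norm{\bF_{\bC}^{-1}}$ by submultiplicativity. For the upper bound it explicitly works out how $\bL\bN\bL^\top$ (with $\bN = (\Id + \bK)/2$) acts on a generic $\vech(\bA)$ and reads off that its eigenvalues lie in $[\tfrac12,1]$, whence $\norm{(\bL\bN\bL^\top)^{-1}}\le 2$. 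Your approach instead derives the full $d^2$-dimensional FIM from the score and Isserlis, which is more self-contained, but then must confront the singularity of $\Id + \bK$ directly.

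That confrontation is where your sketch has a gap. The effective subspace is the lower-triangular one, not the symmetric one, and these do not coincide: the lower-triangular subspace is transversal to $\ker(\Id+\bK)$ but is not an eigenspace, so the raw spectrum $\{0,2\}$ of $\Id+\bK$ does not by itself give a lower bound on the restricted quadratic form needed for the upper bound on $\bF_{\bC}^{-1}$. The missing observation is that for lower-triangular $\bU$ with symmetric part $\bU_s=(\bU+\bU^\top)/2$ one has $\norm{\bU_s}_F^2 \ge \tfrac12\norm{\bU}_F^2$ (diagonal entries kept, strictly-lower entries split across two disjoint supports), giving $u^\top(\Id+\bK)u = 2\norm{\bU_s}_F^2 \ge \norm{\bU}_F^2$ on that subspace. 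Combined with the fact that $\Id\otimes\bC^{-1}$ preserves lower-triangularity and has smallest singular value $1/\norm{\bC} \ge 1/\xi_u$, this yields $\lambda_{\min}(\bF_{\bC})\ge 1/\xi_u^2$ and hence $\norm{\bF_{\bC}^{-1}}\le\xi_u^2$. With that step inserted your argument is equivalent to the paper's: the range $\norm{\bU_s}_F^2 \in [\tfrac12\norm{\bU}_F^2,\norm{\bU}_F^2]$ for lower-triangular $\bU$ is precisely what the paper's $\bL\bN\bL^\top$ computation encodes.
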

We know that any minimizer $q_{\btu_*}(\bt) = \cN(\bt|\bmm_*, \bV_*)$ satisfies \citep{khan2023bayesian, lambert2022recursive}
\begin{equation}
\begin{aligned}\label{eq:limit_pnts}
    \bbE_{\bt \sim q_{\btu_*}(\bt)}[\nabla_{\bt} \Bar \ell(\bt)] = 0, \quad \text{and} \quad
    \bV^{-1}_* = \bbE_{\bt \sim q_{\btu_*}(\bt)}[\nabla^2_{\bt} \Bar \ell(\bt)].
\end{aligned}
\end{equation}

\paragraph{Existence of limit points. }The limit points $\bmm^*, \bC^*$ exist because of the strong convexity assumption (\Cref{assum:strong-convexity}) which implies there exists a $\btu^* =(\bmm^*, \bC^*)$ that uniquely minimizes $\cL$. The existence of this $\btu^*$ can be argued using the fact that $\cL$ is continuous and coercive and  $\Omega$ (the set of $\btu'$s) can be extended to a closed set (but the minimizer can be shown to lie inside $\Omega$ due to \Cref{eq:limit_pnts}). Then, a global minimum exists due to the Weierstrass extreme value theorem.

As a consequence of \Cref{lemma:bdd_FIM_eig}, one can prove the following lemma, given that NLL is strongly-convex, which establishes a PL inequality where we measure the magnitude of the gradient in the (natural) Riemannian geometry. This will be needed to prove the convergence guarantees for the Riemannian gradient flow and descent.

\begin{lemma}[Riemannian PL Inequality]\label{lemma:PL}
    The KL functional satisfies a local Polyak-Łojasiewicz (PL) inequality with PL constant $\mu = \delta\lambda^g_{\min}$, that is,
    \begin{align}\label{eq:PL_KL}
        \norm{\nabla_{\btu}\KLop\bigg[q_{\btu}(\bt)\mid\mid p(\bt\mid\cD)\bigg]}_{\bF_{\btu}^{-1}}^2 \geq 2\mu \left(\KLop\bigg[q_{\btu}(\bt)\mid\mid p(\bt\mid\cD)\bigg] - \KLop\bigg[q_{\btu_*}(\bt)\mid\mid p(\bt\mid\cD)\bigg]\right) .
    \end{align}
\end{lemma}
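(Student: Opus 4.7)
The plan is to deduce the Riemannian PL inequality by combining a standard Euclidean PL inequality for $\cL$ with the spectral lower bound on $\bF_{\btu}^{-1}$ supplied by \Cref{lemma:bdd_FIM_eig}. As a preliminary I would identify the KL functional with $\cL$: with $\gamma=1$, expanding $\log p(\bt\mid\cD) = \log p(\cD\mid\bt)+\log p(\bt) - \log p(\cD)$ and substituting into $\KLop[q_{\btu}\,\|\,p(\bt\mid\cD)] = \bbE_{q_{\btu}}[\log q_{\btu} - \log p(\bt\mid\cD)]$ yields $\KLop[q_{\btu}\,\|\,p(\bt\mid\cD)] = \cL(\btu) + \log p(\cD)$. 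Since the additive constant drops out of both the gradient and the suboptimality gap, it suffices to show $\|\nabla_{\btu}\cL(\btu)\|^2_{\bF_{\btu}^{-1}} \geq 2\mu(\cL(\btu) - \cL(\btu_*))$ with $\mu = \delta\lambda^{g}_{\min}$.

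For the Euclidean PL step I would use \Cref{assum:strong-convexity}: applying $\delta$-strong convexity of $\cL$ in the square-root parameterization with $\btu' = \btu_*$, rearranging the inequality \eqref{eq:strong_convex_new}, and maximizing the resulting bound over the displacement $v = \btu_* - \btu$ via the elementary inequality $-\langle g, v\rangle - \tfrac{\delta}{2}\|v\|^2 \leq \tfrac{1}{2\delta}\|g\|^2$ gives the classical estimate $\|\nabla_{\btu}\cL(\btu)\|^2 \geq 2\delta\,(\cL(\btu) - \cL(\btu_*))$. For the Riemannian step I invoke \Cref{lemma:bdd_FIM_eig}, which asserts $\bF_{\btu}^{-1} \succcurlyeq \lambda^{g}_{\min} I$ under \Cref{assum:bounded-iterates}. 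Consequently $\|g\|^2_{\bF_{\btu}^{-1}} = g^\top \bF_{\btu}^{-1} g \geq \lambda^{g}_{\min}\|g\|^2$ for every vector $g$, and taking $g = \nabla_{\btu}\cL(\btu)$ and composing with the Euclidean PL bound produces $\|\nabla_{\btu}\cL(\btu)\|^2_{\bF_{\btu}^{-1}} \geq 2\delta\lambda^{g}_{\min}(\cL(\btu) - \cL(\btu_*))$, which together with the KL identification is precisely the claim with $\mu = \delta\lambda^{g}_{\min}$.

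The manipulations are short and standard; the real content lies in the setup that was arranged in the preceding sections of the paper. Specifically, the Euclidean PL step requires strong convexity of the expected loss in exactly the parameterization in which the Riemannian gradient norm is measured, and as the paper emphasizes in \Cref{sec:problems_w_convexity}, this fails in the $(\bmm, \bV)$ or natural parameterizations even for strongly convex $\Bar\ell$. The square-root parameterization, together with \citet[Theorem 9]{domke2019provable}, is what rescues strong convexity and makes the above derivation legitimate; without it the argument simply would not close. The main technical obstacle is therefore not algebraic but structural: one must ensure that the parameterization chosen for measuring the Riemannian gradient is the same one in which $\cL$ inherits strong convexity from $\Bar\ell$, and that the minimizer $\btu_*$ entering the PL gap actually exists (the latter is guaranteed by the coercivity and continuity argument given just before the lemma).
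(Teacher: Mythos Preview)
Your proposal is correct and follows essentially the same route as the paper: identify the KL functional with $\cL$ up to an additive constant, use $\delta$-strong convexity of $\cL$ in the $(\bmm,\bC)$ parameterization to obtain the Euclidean PL bound $\|\nabla_{\btu}\cL\|^2 \geq 2\delta(\cL(\btu)-\cL(\btu_*))$, and then apply the spectral lower bound $\bF_{\btu}^{-1}\succcurlyeq \lambda^g_{\min} I$ from \Cref{lemma:bdd_FIM_eig} to upgrade this to the Riemannian norm. The paper's proof in \Cref{app:PL} is exactly this two-line argument, so your exposition is if anything slightly more detailed than the original.
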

\begin{remark}
    \Cref{lemma:PL} is a 'local` PL inequality because it only holds for the chosen parametrization. This is because of the use of strong-convexity of ELBO for its proof (see \Cref{app:PL}), which only holds for $(\bmm, \bC)$ parametrization. Since one does not know if convexity will hold in all parametrizations there is no guarantee that the PL inequality will hold globally.
\end{remark}
Once the Riemannian PL condition is established, we can derive a convergence rate using a standard Lyapunov function analysis. As mentioned earlier, this analysis can be modified to incorporate a bias term as shown in \Cref{app:ngd_w_bias}.

\begin{theorem}\label{theorem:FR-flow}
Under Assumptions \ref{assum:strong-convexity} and \ref{assum:bounded-iterates}, the NG flow dynamics defined in \Cref{eq:SR-VN-flow} satisfy
\begin{align*}
    &\KLop\bigg[q_{\btu_t}(\bt)\mid\mid p(\bt\mid\cD)\bigg] -  \KLop\bigg[q_{\btu_*}(\bt)\mid\mid p(\bt\mid\cD)\bigg] \\
    &\qquad \leq e^{-2\mu t} \bigg(\KLop\bigg[q_{\btu_0}(\bt)\mid\mid p(\bt\mid\cD)\bigg] - \KLop\bigg[q_{\btu_*}(\bt)\mid\mid p(\bt\mid\cD)\bigg]\bigg). 
\end{align*}
\end{theorem}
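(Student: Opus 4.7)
The plan is to run a standard Lyapunov-function argument, with the Riemannian Polyak-Łojasiewicz inequality from \Cref{lemma:PL} doing all of the heavy lifting. Define the Lyapunov function
\begin{equation*}
V(t) := \KLop\bigl[q_{\btu_t}(\bt)\,\big\|\,p(\bt\mid\cD)\bigr] - \KLop\bigl[q_{\btu_*}(\bt)\,\big\|\,p(\bt\mid\cD)\bigr],
\end{equation*}
which is nonnegative since $\btu_*$ is the (unique) minimizer of the KL functional under \Cref{assum:strong-convexity}. Since $p(\bt\mid\cD)$ and hence the evidence are independent of $\btu$, the gradient of $V$ with respect to $\btu$ agrees with $\nabla_{\btu}\cL(\btu)$ used in \Cref{eq:NGD_flow}.

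Next, differentiate $V$ along the NG flow trajectory. Using the chain rule together with the flow equation $\frac{d\btu_t}{dt} = -\bF_{\btu_t}^{-1}\nabla_{\btu}\KLop[q_{\btu_t}\,\|\,p(\cdot\mid\cD)]$ from \Cref{eq:NGD_flow}, I get
\begin{equation*}
\frac{dV(t)}{dt} = \Bigl\langle \nabla_{\btu}\KLop\bigl[q_{\btu_t}\,\big\|\,p(\cdot\mid\cD)\bigr],\, \tfrac{d\btu_t}{dt}\Bigr\rangle = -\bigl\|\nabla_{\btu}\KLop\bigl[q_{\btu_t}\,\big\|\,p(\cdot\mid\cD)\bigr]\bigr\|_{\bF_{\btu_t}^{-1}}^{2}.
\end{equation*}
Invoking the Riemannian PL inequality from \Cref{lemma:PL} with constant $\mu = \delta\lambda^{g}_{\min}$ then yields the differential inequality
\begin{equation*}
\frac{dV(t)}{dt} \leq -2\mu\, V(t).
\end{equation*}

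Finally, applying Grönwall's inequality (or, equivalently, noting that $\tfrac{d}{dt}(e^{2\mu t}V(t)) \leq 0$) delivers
\begin{equation*}
V(t) \leq e^{-2\mu t} V(0),
\end{equation*}
which is exactly the claim of the theorem. The one subtlety I expect to have to address is that \Cref{lemma:PL} is only a \emph{local} PL inequality that requires the iterates to satisfy the boundedness conditions of \Cref{assum:bounded-iterates} (so that $\lambda_{\min}^{g}$ is strictly positive along the trajectory). Thus the main obstacle is arguing that \Cref{assum:bounded-iterates} is preserved along the continuous NG flow: in square-root parametrization $\bC_t$ stays lower-triangular with positive diagonal (so $\bV_t \succ 0$) because the flow $\frac{d\bC_t}{dt} = \bC_t\tril[\Id - \bC_t^{\top}\bH_t\bC_t]$ in \Cref{eq:SR-VN-flow} preserves the triangular structure, and strong convexity of $\bar{\ell}$ combined with \Cref{eq:limit_pnts} keeps $\bV_t$ bounded away from $0$ and from $\infty$. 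Given this invariance, $\mu$ is a fixed positive constant along the entire trajectory and the Lyapunov argument goes through unchanged.
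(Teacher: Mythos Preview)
Your proposal is correct and essentially identical to the paper's proof: both differentiate the KL gap along the NG flow, invoke the Riemannian PL inequality of \Cref{lemma:PL}, and close via the Lyapunov function $e^{2\mu t}V(t)$ (the paper defines this product directly as its energy $\mathcal{E}(t)$, which is exactly the ``equivalently'' remark you make). The only difference is that you spend a paragraph trying to argue that \Cref{assum:bounded-iterates} is preserved along the flow, whereas the paper simply takes it as a standing hypothesis of the theorem; since the statement explicitly assumes it, that extra discussion is not needed for the proof.
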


\subsection{Convergence of SR-VN}\label{sec:conv-result}

Under \Cref{assum:init}--\ref{assum:strong-convexity}, we give what we believe are the first rigorous convergence guarantees for optimizing \ref{eq:general-elbo} using \Cref{algo:TC}. 
Specifically, we show an exponential rate of convergence for \Cref{algo:TC}, as stated in Theorem \ref{theorem:global-tan-cholesky}, with a detailed proof provided in \Cref{app:proof-tan-chol}. The literature has suggested various updates, as seen in references \cite{lin2019fast, lin2021tractable, lin2021structured, khan2023bayesian}. It is also worth highlighting that our proof extends to generic NGD algorithms that use \emph{any} square-root parametrization. The proof technique used to derive Theorem~\ref{theorem:global-tan-cholesky} can easily be adapted to prove convergence for these other variants and is omitted here to avoid redundancy.

\begin{theorem}[Global Convergence]
\label{theorem:global-tan-cholesky}
 Given Assumptions \ref{assum:init}--\ref{assum:bounded-iterates} and considering the limit points $\bmm_*$ and $\bC_*$ in \Cref{eq:limit_pnts}, the updates provided by Algorithm \ref{algo:TC} satisfy
\begin{equation}\label{eq:global_convergence}
\begin{aligned}
    &\cL(\bmm_{t+1}, \bC_{t+1})  - \cL(\bmm_*, \bC_*) \leq(1 - 2\eta\delta)^{t+1} (\cL(\bmm_0, \bC_0)  - \cL(\bmm_*, \bC_*)),
\end{aligned}
\end{equation}where
\begin{equation}
\eta = \min\left\{\lambda_{\min} \rho-\frac{M\rho^2\xi_u^4}{2},\quad \sqrt{\frac{5}{2}}\rho\xi_l^2 - \frac{5\rho^2\xi_u^4M}{4}\right\}.
\label{thm:eta}
\end{equation}
For convergence, we require that $0 < (1 - 2\eta\delta) < 1$. This constraints the step-size $\rho > 0$ to satisfy certain conditions. The permissible step-sizes satisfying the above constraint are discussed in \Cref{sec:adm-rho}, along with the time complexity of SR-VN in \Cref{eq:time_complexity}.
\end{theorem}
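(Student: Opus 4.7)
The plan is to prove a one-step descent inequality of the form $\cL(\btu_{t+1}) - \cL(\btu_t) \le -\eta\,\norm{\nabla \cL(\btu_t)}^2$ via the $M$-smoothness descent lemma from \Cref{assum:lipschitz}, and then close the recursion using the Polyak--{\L}ojasiewicz (PL) inequality $\norm{\nabla\cL(\btu)}^2 \ge 2\delta\,(\cL(\btu) - \cL(\btu_*))$, which is an immediate consequence of $\delta$-strong convexity in $(\bmm,\bC)$ by \Cref{assum:strong-convexity}. The starting point is the smoothness estimate
\begin{equation*}
    \cL(\btu_{t+1}) \le \cL(\btu_t) + \ip{\nabla\cL(\btu_t)}{\btu_{t+1}-\btu_t} + \tfrac{M}{2}\,\norm{\btu_{t+1}-\btu_t}^2,
\end{equation*}
which splits across the $\bmm$ and $\bC$ blocks after substituting the SR-VN increments $\bmm_{t+1}-\bmm_t = -\rho\bV_t\bg_t$ and $\bC_{t+1}-\bC_t = -\rho\bC_t\tril[\bC_t^\top\nabla_\bC\cL]$, together with $\nabla_\bmm\cL = \bg_t$ and $\nabla_\bC\cL = (\bH_t - \gamma \bV_t^{-1})\bC_t$ from \Cref{eq: nabla_C_L}.

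The critical algebraic step is the treatment of the $\bC$ block. Setting $\bM := \bC_t^\top \nabla_\bC\cL = \bC_t^\top \bH_t\bC_t - \gamma\Id$, the key observation is that $\bM$ is symmetric, so by the definition of $\tril$ in \Cref{eq:tril} one has the decomposition $\bM = \tril[\bM] + \tril[\bM]^\top$. From this I would derive two matrix identities: $\tr(\bM\,\tril[\bM]) = \tfrac{1}{2}\norm{\bM}_F^2$ and $\norm{\tril[\bM]}_F^2 \le \tfrac{1}{2}\norm{\bM}_F^2$. The first converts the $\bC$-block inner product to exactly $-\tfrac{\rho}{2}\norm{\bC_t^\top\nabla_\bC\cL}_F^2$, and the second controls $\norm{\bC_{t+1}-\bC_t}_F^2$ after pulling out $\norm{\bC_t}_F \le \xi_u$. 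For the $\bmm$ block the analogous bounds follow directly from the spectral control $\lambda_{\min}\Id \preccurlyeq \bV_t$ and $\norm{\bV_t}\le\xi_u^2$ provided by \Cref{assum:bounded-iterates} and \Cref{remark:high_eigval}, yielding $\ip{\bg_t}{\bmm_{t+1}-\bmm_t} \le -\rho\lambda_{\min}\norm{\bg_t}^2$ and $\norm{\bmm_{t+1}-\bmm_t}^2 \le \rho^2\xi_u^4\norm{\bg_t}^2$. Plugging all four estimates back into the smoothness bound and using $\norm{\nabla\cL}^2 = \norm{\bg_t}^2 + \norm{\nabla_\bC\cL}_F^2$ produces a descent with constant $\eta$, where the two entries in the $\min$ defining $\eta$ in \Cref{thm:eta} correspond to the $\bmm$ and $\bC$ contributions; the factor $\xi_l^2$ (and the numerical constant $\sqrt{5/2}$) in the second entry arises from bookkeeping a Young-type inequality that absorbs the cross terms between the descent and smoothness pieces using the lower bound $\xi_l \le \norm{\bC_t}_F$.

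Once the per-step descent is in hand, the PL inequality gives the contraction $\cL(\btu_{t+1}) - \cL(\btu_*) \le (1-2\eta\delta)\bigl(\cL(\btu_t) - \cL(\btu_*)\bigr)$, and iterating yields \Cref{eq:global_convergence}. The main technical obstacle I anticipate lies in the $\bC$ block: the $\tril$ operator is not unitarily invariant and does not commute with transposition or matrix products, so a naive Cauchy--Schwarz chain inflates constants and destroys the balance between the descent $-\tfrac{\rho}{2}\norm{\bM}_F^2$ and the quadratic penalty $\tfrac{M}{2}\norm{\bC_{t+1}-\bC_t}_F^2$. Exploiting the symmetry of $\bM$ in a targeted way to obtain the tight identities above, and then balancing the constants so that both entries of the $\min$ defining $\eta$ are simultaneously positive (which carves out the admissible range of $\rho$ in \Cref{sec:adm-rho} and guarantees $0<1-2\eta\delta<1$), is the most delicate part of the argument.
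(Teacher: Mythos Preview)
Your high-level scheme---smoothness descent lemma, then PL from strong convexity, then contraction---is exactly the paper's. The $\bmm$-block bounds you describe ($-\rho\lambda_{\min}\norm{\bg_t}^2$ from the inner product, $\rho^2\xi_u^4\norm{\bg_t}^2$ from the quadratic) are identical to the paper's term $A$.

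Where you diverge is the $\bC$-block. Your route exploits the symmetry of $\bM=\bC_t^\top\nabla_\bC\cL=\bC_t^\top\bH_t\bC_t-\gamma\Id$ to obtain the exact identities $\tr(\bM\,\tril[\bM])=\tfrac{1}{2}\norm{\bM}_F^2$ and $\norm{\tril[\bM]}_F^2\le\tfrac{1}{2}\norm{\bM}_F^2$. These are correct and sharper than what the paper uses. The paper instead runs a chain of Cauchy--Schwarz and submultiplicativity bounds on $\norm{\bC_t\tril[\bM]}_F$: it writes $\tril[\bM]=\bar{\bM}-\tfrac12\diag(\bM)$, applies $(a-b)^2\le 2a^2+2b^2$ to get $\norm{\tril[\bM]}_F^2\le\tfrac{5}{2}\norm{\bM}_F^2$, and then peels off two factors of $\norm{\bC_t}_F$. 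That is where $\sqrt{5/2}$ and the $\xi_u^4$ in the second entry of $\eta$ come from; the paper then invokes $\norm{\bC_t}_F\ge\xi_l$ on the (negative) inner-product term to produce the $\xi_l^2$ prefactor.

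This means your hand-wave that $\sqrt{5/2}$ and $\xi_l^2$ arise from a ``Young-type inequality absorbing cross terms'' is off: in your own clean approach there are no cross terms, and you would instead land on something like $-\omega_C=\tfrac{\rho\lambda_{\min}}{2}-\tfrac{M\rho^2\xi_u^4}{4}$ after converting $\norm{\bM}_F^2\ge\lambda_{\min}\norm{\nabla_\bC\cL}_F^2$ via $\sigma_{\min}(\bC_t)^2\ge\lambda_{\min}$. So your argument proves a (tighter) variant of the theorem, but not the stated constants; if you want to match \Cref{thm:eta} exactly, you must abandon the symmetry identity for the inner product and follow the paper's looser Cauchy--Schwarz chain instead.
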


\newcommand{\rulesep}{\unskip\ \vrule\ }
\begin{figure}[t!]
   \centering
\begin{tabular}{ccc}
\includegraphics[height=0.22\linewidth]{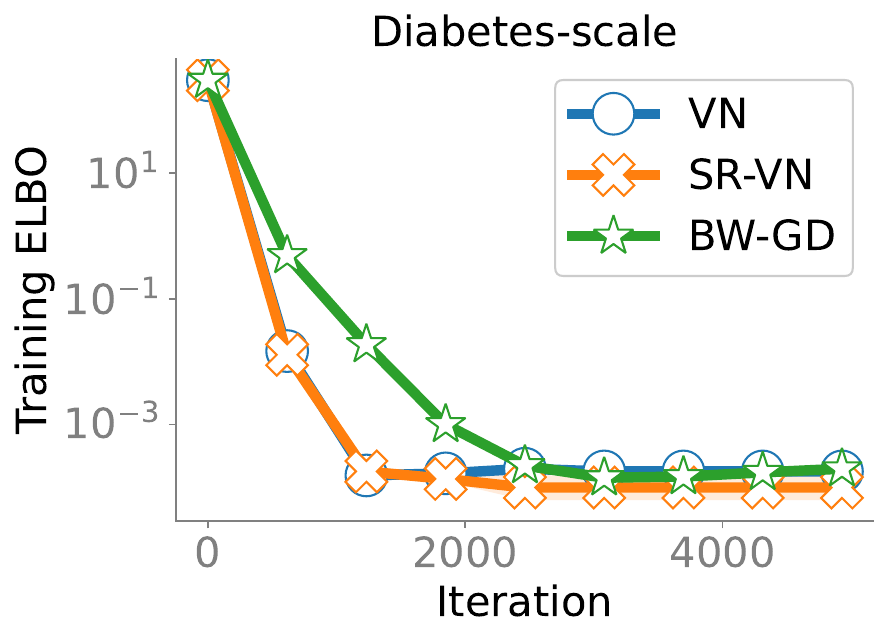}&
\includegraphics[width=0.3\linewidth]{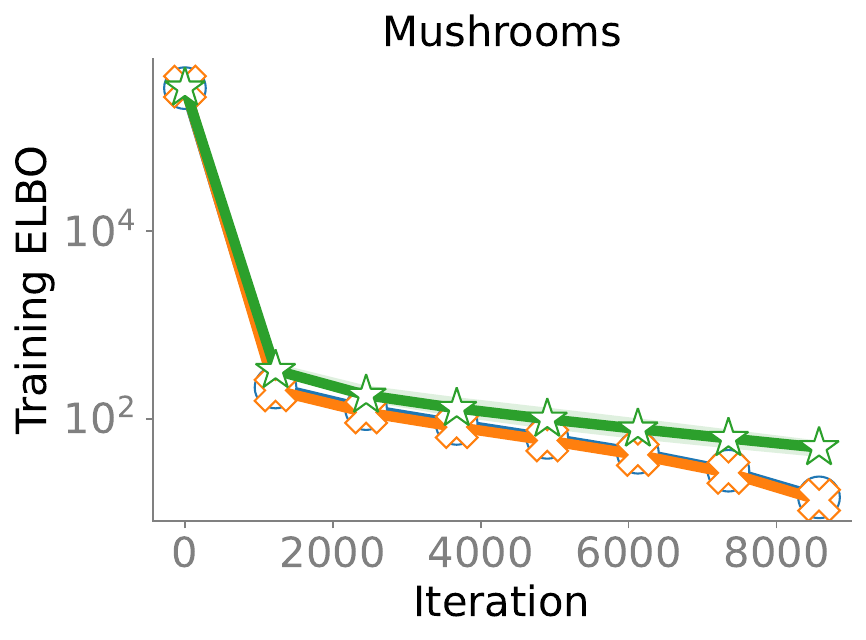}&
\includegraphics[width=0.3\linewidth]{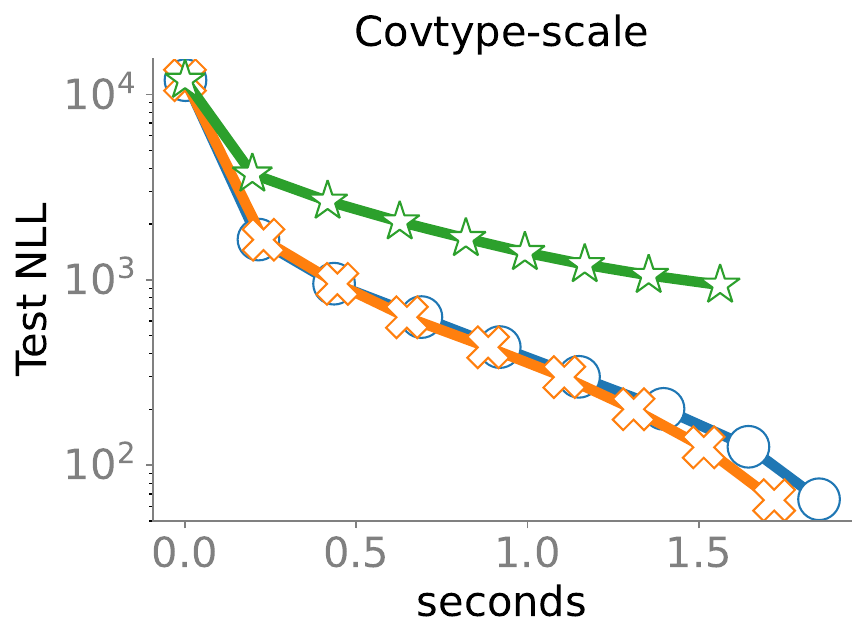}\\
\includegraphics[width=0.3\linewidth]{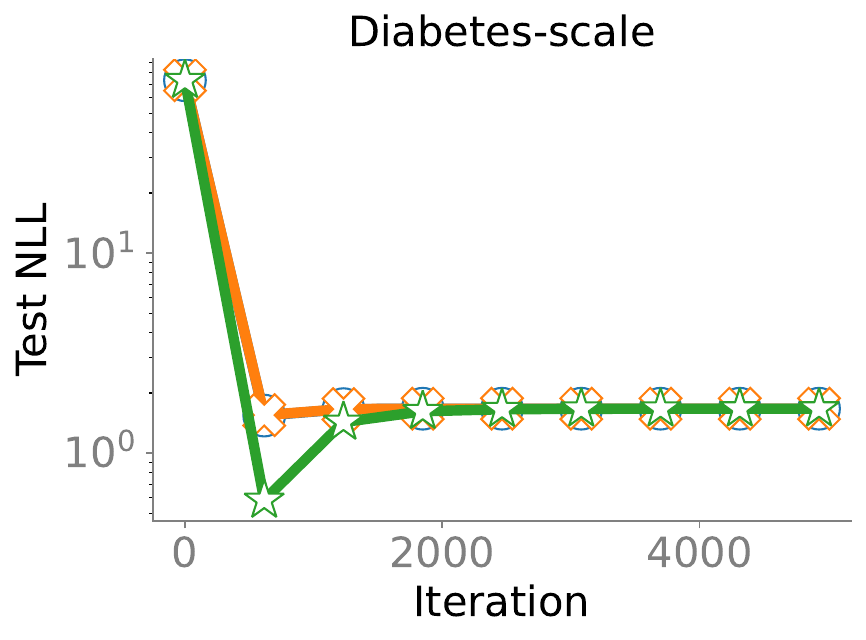}&
\includegraphics[width=0.3\linewidth]{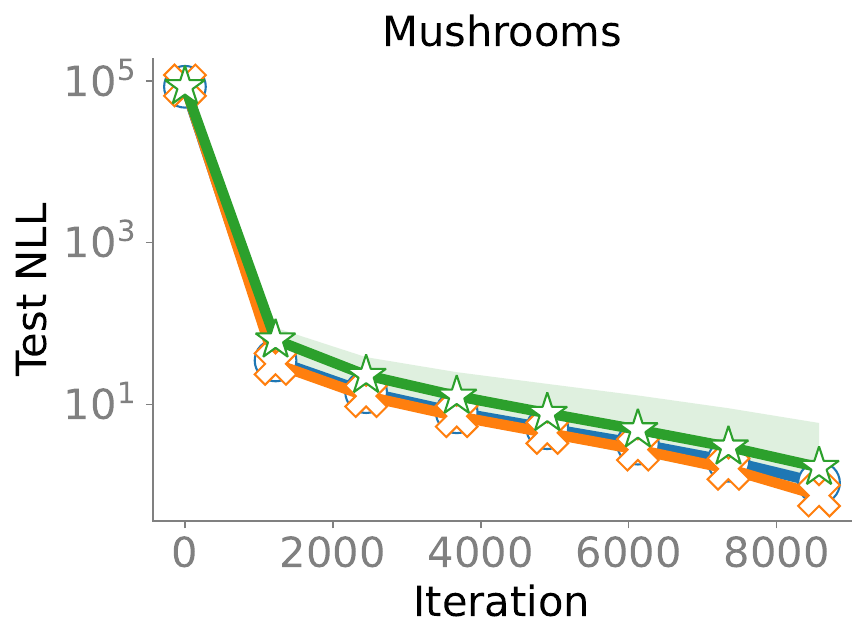}&
\includegraphics[width=0.3\linewidth]{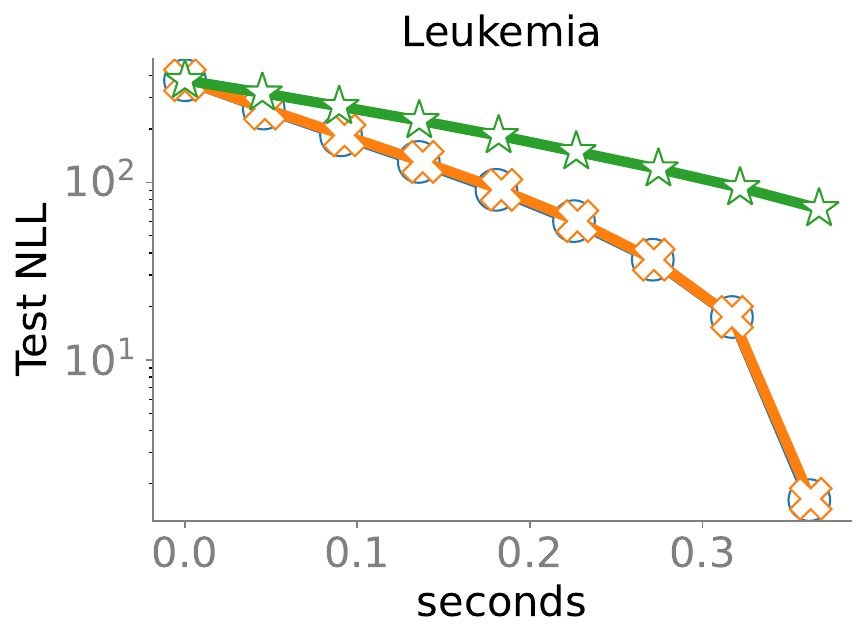}
\end{tabular}

    \caption{\small{For small-scale LIBSVM datasets, we show the training ELBO and test NLL w.r.t number of iterations (left and middle panel) and time (right panel). The $\min$ and $\max$ of the plotted values are displayed around their averages (taken over five random initializations). We also subtracted the $\min$ values achieved (for both training and testing) over all iterations from all three methods and only plotted the resulting values. We can see that, in most cases, SR-VN and VN perform similarly while BW-GD tends to be slower. However, sometimes SR-VN could also be slower than the two, for example, see \Cref{app:add_plots}.}}
    \label{fig:small_scale} 
\end{figure}

\begin{figure}[t!]
   \centering
\begin{tabular}{ccc}
\includegraphics[width=0.3\linewidth]{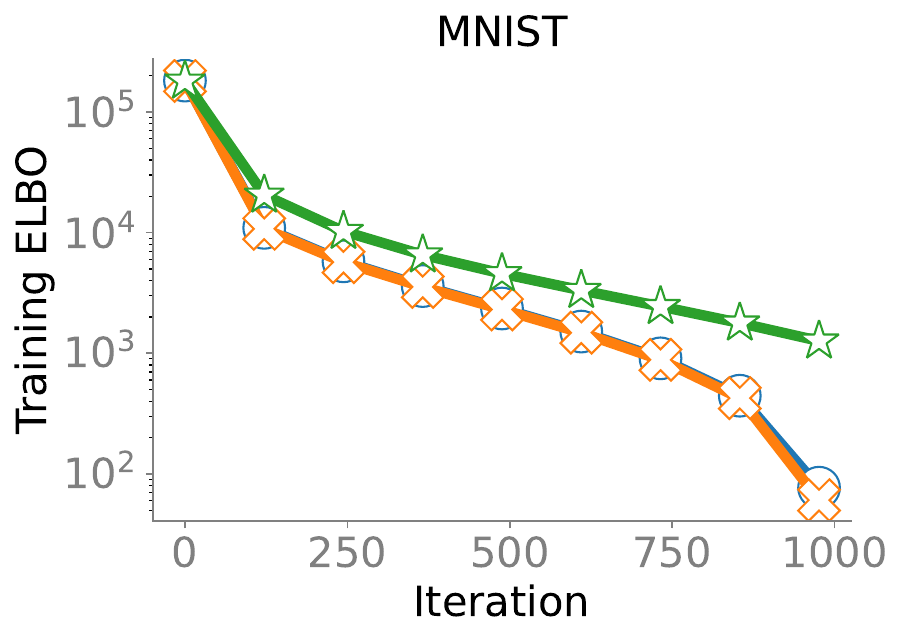}&
\includegraphics[width=0.3\linewidth]{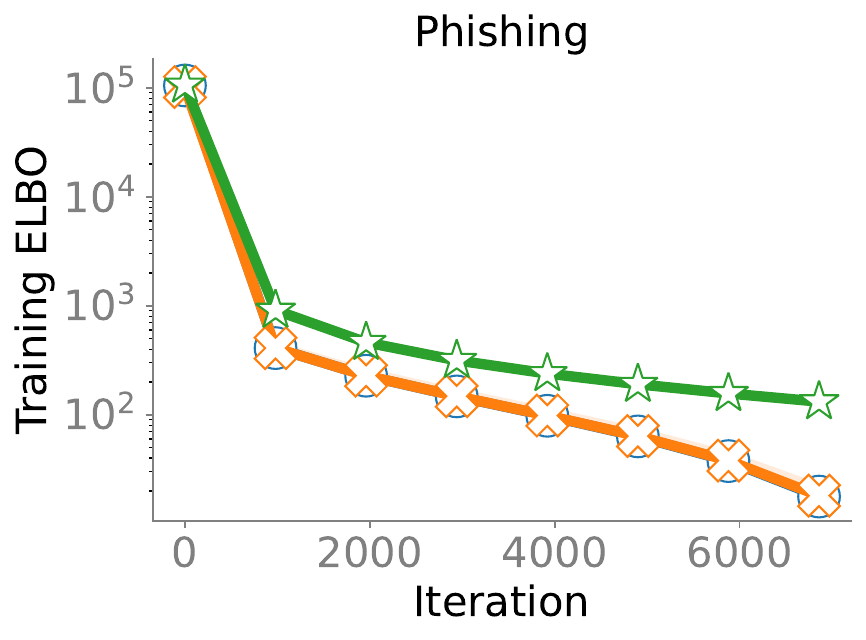}&
\includegraphics[width=0.3\linewidth]{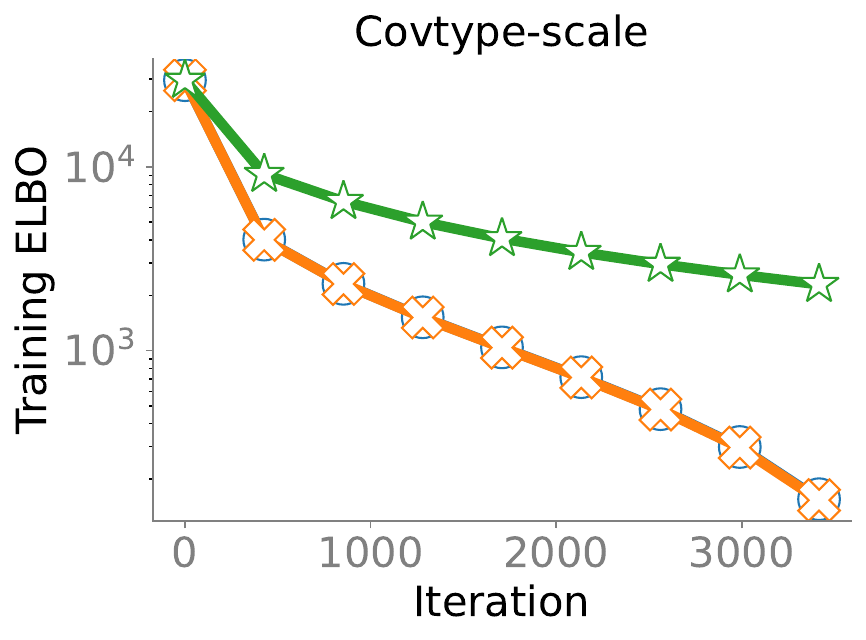}\\
\includegraphics[width=0.3\linewidth]{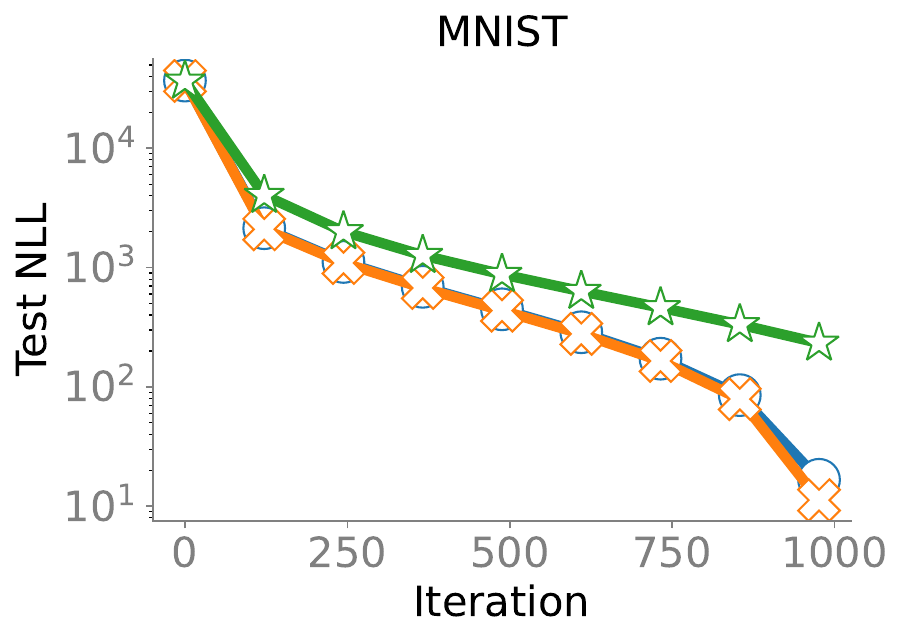}&
\includegraphics[width=0.3\linewidth]{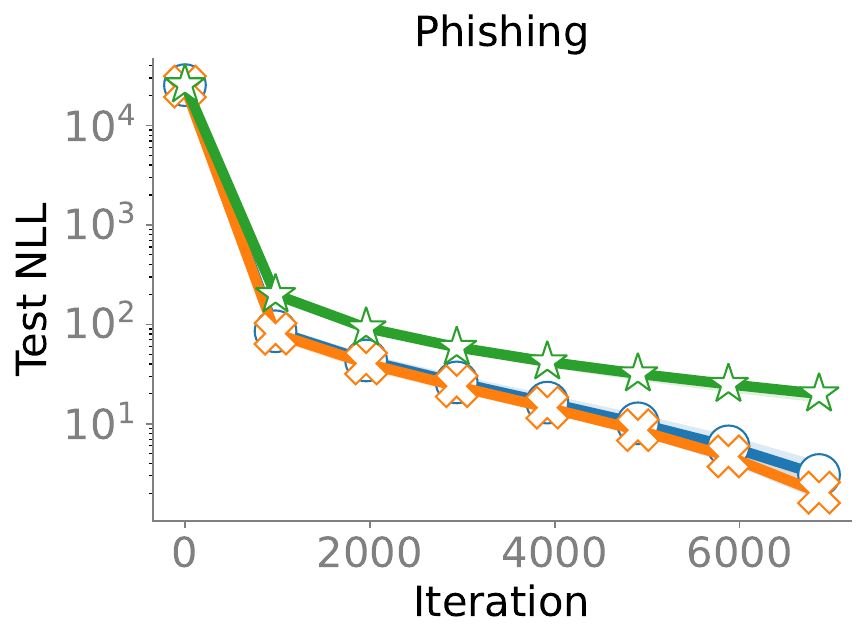}&
\includegraphics[width=0.3\linewidth]{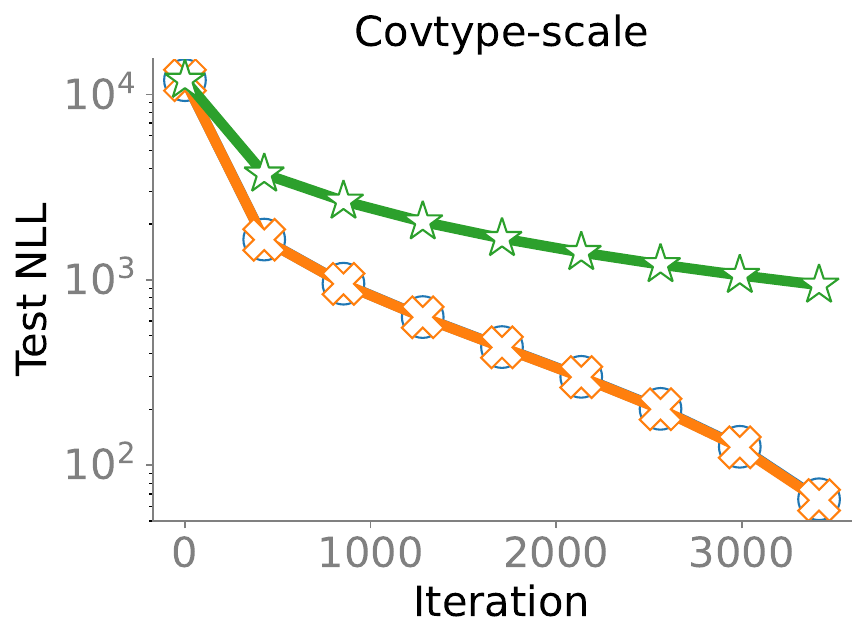}
\end{tabular}

\caption{\small{Comparison on Large-Scale Datasets. The same trends hold at scale: both SR-VN and VN exhibit comparable performance, whereas BW-GD tends to be slower.}}
\label{fig:large_scale}
\end{figure}

\section{Experiments} \label{sec:exp}

In this section, we present results on the Bayesian logistic regression problem introduced in \ref{eq:logistic-loss} to optimize \ref{eq:general-elbo}, where we compare SR-VN to two other methods, namely 1) BW-GD  \citet[Algorithm 1]{lambert2022variational} and 2) Variational Newton in \Cref{eq:blr-updates}. The BW-GD updates are given as:
\begin{gather*}
    \text{BW-GD:} \quad \bmm_{t+1} = \bmm_t -\alpha~ \bbE_{\cN(\bmm_t, \bV_t)}[\nabla_{\bt} \Bar \ell(\bt)], \\
    \bM_t = \Id - \alpha \left(\bbE_{\cN(\bmm_t, \bV_t)}[\nabla^2_{\bt} \Bar \ell(\bt)] - \bS_t\right), \quad
    \bV_{t+1} = \bM_t\bV_t\bM_t\label{eq:BW-GD}
\end{gather*}where $\alpha>0$ is the step-size. Due to space constraints, details of all the datasets used and the setting of various algorithmic parameters for these methods are given in Appendix \ref{app:exp}. The additional results and experiments are provided in Appendix \ref{app:algo-details}.

\subsection{Bayesian logistic regression}

We run our experiments on both small-scale and large-scale datasets to compare the aforementioned methods. The way we measure the performance of methods is via two metrics: the negative ELBO on the training dataset and the NLL on the test dataset. We report the average of the NLL over all test points, i.e. we compare the NLL on the test set computed as follows: $-\sum_{i=1}^{n_{\text{test}}} \log(1+\exp\{-y_i(\Hat\bt^\top\bx_i)\}/n_{\text{test}}$ where $\Hat\bt$ is parameter estimate and $n_{\text{test}}$ is the number of examples in the test set.

\textbf{Datasets.} We consider eight different LIBSVM datasets\footnote{Available at \url{https://www.csie.ntu.edu.tw/~cjlin/libsvmtools/datasets/}} \citep{chang2011libsvm}, consisting of five small and three large-scale datasets. The description of these datasets is provided in \Cref{tab:dataset-hyperparameters} of \Cref{app:exp}. Here, we show results for two small-scale datasets (see \Cref{fig:small_scale}), namely Diabetes-scale ($n = 768, d = 8, n_{\text{train}} = 614$) and Mushrooms ($n = 8124, d = 112, n_{\text{train}} = 64,99$). For large-scale datasets (see \Cref{fig:large_scale}), we show MNIST ($n = 70,000, d = 784, n_{\text{train}} = 60,000$), Covtype-scale ($n = 581,012, d = 54, n_{\text{train}} = 500,000$), and Phishing ($n = 11,055, d = 68, n_{\text{train}} = 8,844$) datasets. All these datasets fall in the case where $n > d$. For the $d > n$ case, we choose the Leukemia dataset ($n = 38, d = 7,129, n_{\text{train}} = 34$) and show time plots to be inclusive of all possible different settings. Note that different datasets use binary labels other than $\{-1, +1\}$, we mapped them all to the same labels so that loss evaluation is uniform across all datasets. All experiments are performed on NVIDIA GeForce RTX 3090 GPUs.

We can observe that in the majority of instances, SR-VN performs similarly to VN and both of them are always better than BW-GD. Most often, VN is slightly better than SR-VN, indicating its superiority, and motivating its analysis as a future work. As a side note, one can observe that the wall-clock time shoots up for SR-VN in the rightmost panel of \Cref{fig:small_scale}. The reason for this lies in the difference between the computational complexity of algorithms presented, this is discussed in more detail in \Cref{sec:runtime_comp}.


\section{Related Works}
\paragraph{NGVI.} NGVI approximates intractable posterior distributions by employing natural gradients, which consider the geometry of the parameter space, instead of standard gradients to update the variational parameters \citep{hoffman2013stochastic}. This approach is believed to lead to more robust and efficient optimization \citep{amari1998natural}, especially when dealing with complex models and high-dimensional data \citep{Yuesong}. Deterministic NGVI \citep{honkela2008natural, honkela2010approximate, godichon2024natural}, in contrast to its stochastic counterpart, computes the exact natural gradient at each iteration. This can be computationally more demanding but can also lead to more accurate and stable updates. When feasible, the benefit of this approach is that one can use a constant step-size.

\paragraph{Convergence Analysis.} While NGVI has shown promising empirical performance$-$for example, in latent Dirichlet
allocation topic models \citep{hoffman2013stochastic}, Bayesian
neural networks\citep{khan2018fast, khan2018fast1, osawa2019practical, Yuesong}, probabilistic graphical models \citep{johnson2016composing}, and
large-scale Gaussian processes \citep{hensman2013gaussian, hensman2015scalable, salimbeni2018natural}$-$the theoretical analysis of its convergence rates is still an active area of research \citep{theis2015trust}. Existing works often focus on specific model classes or makes simplifying assumptions to derive convergence guarantees. For instance, \citep{wu2024understanding} showed that stochastic NGVI, exhibits a non-asymptotic convergence rate of $\mathcal{O}(1/T)$ for conjugate models, where $T$ represents the number of iterations. This rate is comparable to SGD, but stochastic NGVI often has has better constants in the convergence rate, leading to faster convergence in practice \citep{khan2018fast}. However, for non-conjugate models, stochastic NGVI with canonical parameterization operates on a non-convex objective, making it challenging to establish a global convergence rate.  In the context of Gaussian process models, \citep{tang2019variational} demonstrated the effectiveness of natural gradients in both conjugate and non-conjugate scenarios, showing potential for faster convergence compared to traditional gradient-based methods.  

Despite these advancements, a comprehensive understanding of the convergence behavior of NGVI in general settings remains an open question \citep{regier2017fast}.  This is partly due to the difficulty in analyzing NGD for commonly-used parameterizations that can destroy the concavity of the log-likelihood \citep{cherief2020convergence}.  Our work addressed this issue by considering a square-root parameterization for variational Gaussian inference, which helps maintain the concavity of the log-likelihood and facilitates the convergence analysis. We proved an exponential convergence rate for both NG flow and NGD under this parameterization, bridging the gap between theory and practice for NGD convergence guarantees. Similar efforts to analyze NGD convergence have been made by \cite{khan2015faster} for general stochastic, non-convex settings and \citep{chen2023gradient} for continuous-time flow with concave log-likelihoods. However, these studies either have slow convergence rates or focus on restrictive settings, leaving room for improvement in more general cases.

\paragraph{Parameterization Choices.} The choice of parameterization plays a crucial role in the effectiveness of NGVI \citep{lin2019fast}. Different parameterizations can lead to different geometries in the parameter space, affecting the trajectory of the optimization process. Studies have explored various parameterizations for NGVI, including natural and expectation parameters \citep{honkela2010approximate, khan2023bayesian, godichon2024natural}.  \cite{cai2024batch} proposed an alternative approach based on a score-based divergence that can be optimized by a closed-form proximal update for Gaussian variational families with full covariance matrices. They proved that in the limit of infinite batch size, the variational parameter updates converge exponentially quickly to the target mean and covariance. Our work focused on square-root parameterizations for variational Gaussian inference, extending the idea introduced by \cite{tan2021analytic} who utilized the Cholesky factor. This parameterization helps maintain the concavity of log-likelihood functions, which in turn facilitates convergence guarantees for both NG flow and NGD. While this approach may lead to suboptimal convergence in some cases, it provides a valuable framework for analyzing and understanding the convergence behavior of deterministic NGVI.  


\section{Discussion and Conclusion} \label{sec:conc}

In this paper, we studied the performance of NG flow and NGD for optimizing the ELBO, a popular objective function in VI. Our findings suggest that the effectiveness of these methods can be affected by the choice of parameterization and the underlying geometry of the optimization landscape; however, a definitive link has yet to be established. Motivated by the need for a principled convergence analysis, we focused on a square-root parameterization for variational Gaussian inference. This approach, building upon the work of \cite{domke2019provable}, has the promise of preserving the concavity of the log-likelihood, a property often lost in traditional parameterizations. By preserving concavity, we were able to establish theoretical guarantees for the convergence of both NG flow and NGD under this parameterization, thus bridging a gap between theoretical analysis and practical implementation of NGD.

Our empirical evaluations further validated the benefits of the square-root parameterization. We observed that it generally performs comparably to the standard natural parameterization, exhibiting fast convergence due to its Newton-like update. Moreover, our experiments, particularly the toy (ill-conditioned) problem in \Cref{fig:convergence-comparison}, highlighted the role of the square-root parameterization in maintaining stable updates, especially in scenarios where traditional parameterizations might be sensitive to parameter choices or errors in the Fisher information matrix estimation.

Despite the advancements, challenges and open questions remain. Our findings pave the way for future research in several key directions, this includes extending our analysis to stochastic NGVI$-$common in real-world settings$-$to study how noise influences optimization and whether the square-root parameterization remains effective, along with the validity of key assumptions (e.g., Assumptions \ref{assum:strong-convexity}-\ref{assum:bounded-iterates}) for more complex applications like deep learning. 






\subsubsection*{Broader Impact Statement}

Our work makes several contributions: i) we propose a theoretical analysis using a square-root parameterization for the Gaussian covariance, which we expect can encourage further research into establishing convergence guarantees for natural-gradient variational inference, particularly for cases involving concave log-likelihoods. ii) Our experimental results reveal that natural gradient methods outperform algorithms relying on Euclidean or Wasserstein geometries, showcasing their practical advantages in terms of convergence speed and effectiveness, which could impact the development of more robust inference algorithms in probabilistic modeling.

\subsubsection*{Acknowledgments}
Navish Kumar and Aurelien Lucchi acknowledge the financial support of the Swiss National Foundation, SNF grant No 207392.



\bibliography{references}
\bibliographystyle{tmlr}

\newpage
\appendix
\addcontentsline{toc}{part}{Appendix} 

\appendixTOC 


\section{Gradient Calculations}
\label{app:cal}

This section focuses on showing calculations that lead to the gradients used to prove results in the main paper. In Appendix D in \cite{khan2023bayesian}, the gradients used to define VN were calculated using the moment parametrization (which we will define in a bit) $\btu = \bmu = (\bmu_1, \bmu_2)$, i.e. gradient of the objective $\cL(\bmu)$ were calculated under this parameterization. The goal of this section is to use these gradients to derive the gradients w.r.t $\btu = (\bmm, \bV)$ and $\btu = (\bmm, \bC)$ parameterizations. 

Before we begin, it is useful to learn about the two important parameterizations for the Exponential family. The first one is called \emph{moments} or \emph{mean/expectation parameters}, and are given as:
\begin{equation}
  \bmu_{1} = \bmm \in \R^d, ~ \bmu_2 = \bS^{-1} + \bmm \bmm^\top \in \Sym^d_{++},
  \label{eq:mudef}
\end{equation}
where $\Sym^d_{++} = \{\bX \in \R^{d \times d} : \bX = \bX^\top, \bX \succ 0\}$ is the set of symmetric positive-definite matrices. The second parametrization is given via the \emph{natural parameters} as
\begin{equation}
  \bl_{1} = \bS \bmm \in \R^d, ~ \bl_2 = -\frac{1}{2} \bS \in \Sym^d_{++},
  \label{eq:msl}
\end{equation}
The way these two parameterizations are interrelated is via the following relation
\begin{equation}\label{eq:entropy_mom_nat_parm}
   \bmu = \nabla_{\bl} \cH^*(\bl) \iff \nabla_{\bmu} \cH(\bmu) = \bl,
\end{equation}
where $\cH^*(\bl)$ is a convex function, and its \emph{convex conjugate} $\cH(\bmu)$ returns the \emph{entropy}\footnote{We take $\cH(\bmu)$ as the negative entropy in this paper.} of the Gaussian distribution given its moments $\bmu$. These details are explained in Chapter~3 in \cite{wainwright2008graphical}, see also the paper by \cite{MaPi15}.

\subsection{Gradients in \texorpdfstring{$(\bmm,\bV)$}{ } Parametrization}

Now let us look at the derivatives of $f$ w.r.t $\bmu_1$ and $\bmu_2$, but first we note that: 
    \begin{align}
    \nabla_{\bmu_1} \cH(\bmu) = \bl_1 \stackrel{(\ref{eq:entropy_mom_nat_parm}),(\ref{eq:msl})}{=} \bS\bmm,
\end{align}
Let us define 
\begin{align*}
    \bg(\btu) := \bbE_{\bt \sim q_{\btu}}[\nabla_{\bt}\Bar{\ell}(\bt)], \quad \bH(\btu) := \bbE_{\bt \sim q_{\btu}}[\nabla^2_{\bt}\Bar{\ell}(\bt)],
\end{align*} where $\btu = (\bmm, \bV)$ when $q_{\btu} = \mathcal{N}(\bmm, \bV)$, and when this is the case, we omit writing $\bg(\bmm, \bV), \bH(\bmm, \bV)$ in favour of only writing $\bg, \bH$.
Then, in Equation~10 in \cite{khan2023bayesian}, it is given that:
\begin{align}
    \nabla_{\bmu_1} f(\bmu) =  \bbE_{\bt \sim \mathcal{N}(\bmm, \bV)}[\nabla_{\bt} \Bar\ell(\bt)] - \bbE_{\bt \sim \mathcal{N}(\bmm, \bV)}[\nabla_{\bt}^2 \Bar\ell(\bt)]\bmm = \bg - \bH\bmm.
\end{align}
Combining the above two equations, we get:
\begin{align}
    \nabla_{\bmu_1} \cL(\bmu) &= \nabla_{\bmu_1}\bigg(\bbE_{\bt \sim \mathcal{N}(\bmm, \bV)}[\Bar\ell(\bt)] + \gamma \cH(\bmu) \bigg)\notag\\
    &=\nabla_{\bmu_1}\bbE_{\bt \sim \mathcal{N}(\bmm, \bV)}[\Bar\ell(\bt)] + \gamma \nabla_{\bmu_1} \cH(\bmu)\notag\\
    &= \bg - \bH\bmm + \gamma \bS\bmm, \label{eq:nabla_mu_1}
\end{align}
and from Equation~11 in \cite{khan2023bayesian}, we have
\begin{align}
    \nabla_{\bmu_2} \cL(\bmu) = \frac{1}{2} (\bH - \gamma \bS)\label{eq:nabla-mu2}
\end{align}
This further leads to\begin{align*}
    (\nabla_{\bmm} \cL(\bmu))_{i} &= \frac{\partial \cL(\bmu)}{\partial m_{i}} = \frac{\partial \cL}{\partial \bmu_1}^\top \frac{\partial \bmu_1}{\partial m_i} + \tr \left( \frac{\partial \cL}{\partial \bmu_2}^\top \frac{\partial \bmu_2}{\partial m_i} \right) \\
    &=  \left(\frac{\partial \cL}{\partial \bmu_1}\right)^\top_i + \sum_{k,l} \left( \frac{\partial \cL}{\partial \bmu_2}^\top \right)_{kl} \left( \frac{\partial \bmu_2}{\partial m_i} \right)_{lk} \\
    &=  \left(\frac{\partial \cL}{\partial \bmu_1}\right)^\top_i + \sum_{k \neq i} \left( \frac{\partial \cL}{\partial \bmu_2}^\top \right)_{ki} m_k  + \sum_{l \neq i} \left( \frac{\partial \cL}{\partial \bmu_2}^\top \right)_{il} m_l + 2 \left( \frac{\partial \cL}{\partial \bmu_2}^\top \right)_{ii} m_i \\
    &=  \left(\frac{\partial \cL}{\partial \bmu_1}\right)^\top_i + 2\sum_{k \neq i} \left( \frac{\partial \cL}{\partial \bmu_2}^\top \right)_{ki} m_k  + 2 \left( \frac{\partial \cL}{\partial \bmu_2}^\top \right)_{ii} m_i \\
    &=  \left(\nabla_{\bmu_1} \cL\right)^\top_i + 2\underbrace{(\nabla_{\bmu_2} \cL \bmm)_i}_{i\text{-th row}}
\end{align*}
\begin{align}
        \Rightarrow \nabla_\bmm \cL &= \nabla_{\bmu_1} \cL + 2\nabla_{\bmu_2} \cL \bmm \notag\\
    &\stackrel{(\ref{eq:nabla_mu_1}), (\ref{eq:nabla-mu2})}{=} \bigg(\bg - \bH \bmm + \gamma\bS\bmm_t\bigg) + 2\bigg(\frac{1}{2}(\bH - \gamma\bS)^\top \bmm\bigg) \notag\\
    &= \bg \label{eq:d-f-m} .
\end{align}
Similarly,
\begin{align}
    (\nabla_{\bV} \cL(\bmu))_{ij} &= \frac{\partial \cL(\bmu)}{\partial V_{ij}} = \frac{\partial \cL}{\partial \bmu_1}^\top \frac{\partial \bmu_1}{\partial V_{ij}} + \tr \left( \frac{\partial \cL}{\partial \bmu_2}^\top \frac{\partial \bmu_2}{\partial V_{ij}} \right) \notag\\
    &= \sum_{k,l} \left( \frac{\partial \cL}{\partial \bmu_2}^\top \right)_{kl} \left( \frac{\partial \bmu_2}{\partial S^{-1}_{ij}} \right)_{lk} \notag\\
    &= \left( \frac{\partial \cL}{\partial \bmu_2}^\top \right)_{ji} = \left( \frac{\partial \cL}{\partial \bmu_2} \right)_{ij},\notag\\
    \Rightarrow \nabla_\bV \cL &= \nabla_{\bmu_2} \cL \notag\\
    \Rightarrow \nabla_{(\bS)^{-1}} \cL(\bmu)  &= \left(\frac{\partial \cL(\bmu)}{\partial \bmu_2} \right)^\top\frac{\partial \bmu_2}{\partial (\bS)^{-1}} \notag\\
    &= \frac{1}{2}(\bH - \gamma\bS). \label{eq:d-f-V}
\end{align}


\subsection{Gradients in \texorpdfstring{$(\bmm,\bC)$}{ } Parametrization} \label{app:grad}

Let us represent $\cL(\bmu(\bC))$ as $\cL(\bmu_1, \bmu_2(\bC))$ where $\bmu_1 = \bmm$ (independent of $\bC$) and $\bmu_2(\bC) = \bV = \bC \bC^\top$. Using the chain rule for composite matrix functions given in Equation 1884 in \cite{Dattorro2015} combined with Equation 137 in \cite{cookbook}, we arrive at
\begin{align*}
\nabla_{\bC} \cL(\bmu(\bC)) &= \tr\left((\nabla_{\bmu_1}\cL(\bmu))^\top~(\nabla_{\bC} \bmu_1(\bC))  \right) + \tr\left((\nabla_{\bmu_2}\cL(\bmu))^\top~\nabla_{\bC} \bmu_2(\bC)\right) \\
&= \tr\left((\nabla_{\bmu_2}\cL(\bmu))^\top~\nabla_{\bC} \bmu_2(\bC)\right),
\end{align*}
which leads to
\begin{equation*}
\frac{\partial \cL(\bmu(\bC))}{\partial C_{ij}} = \tr \left( \frac{\partial \cL}{\partial \bmu_2}^\top \frac{\partial \bmu_2}{\partial C_{ij}} \right).
\end{equation*}
Note that $\frac{\partial \bmu_2}{\partial \bC}$ is a fourth-order tensor whose entries are 
\begin{equation}\label{eq: al-nablaF-C}
\frac{\partial (\bC \bC^\top)_{kl}}{\partial C_{ij}} = 
\begin{cases}
C_{lj}   & k = i, l \neq i \\
2 C_{ij} & k = i, l = i \\
C_{kj}   & k \neq i, l = i \\
0 & \text{ else}.
\end{cases}
\end{equation}
where we used the expression $(\bC \bC^\top)_{kl} = \sum_n C_{kn} C^\top_{nl} = \sum_n C_{kn} C_{ln}$. Then
\begin{align*}
\frac{\partial \cL(\bmu(\bC))}{\partial C_{ij}} &= \tr \left( \frac{\partial \cL}{\partial \bmu_2}^\top \frac{\partial \bmu_2}{\partial C_{ij}} \right) \\
&= \sum_{k,l} \left( \frac{\partial \cL}{\partial \bmu_2}^\top \right)_{kl} \left( \frac{\partial \bmu_2}{\partial C_{ij}} \right)_{lk}\\
&= \sum_{k=i,l \neq i} \left( \frac{\partial \cL}{\partial \bmu_2}^\top \right)_{kl} \left( \frac{\partial \bmu_2}{\partial C_{ij}} \right)_{lk} + \sum_{k=i,l=i} \left( \frac{\partial \cL}{\partial \bmu_2}^\top \right)_{kl} \left( \frac{\partial \bmu_2}{\partial C_{ij}} \right)_{lk} + \sum_{k \neq i,l = i} \left( \frac{\partial \cL}{\partial \bmu_2}^\top \right)_{kl} \left( \frac{\partial \bmu_2}{\partial C_{ij}} \right)_{lk} \\
&= \sum_{l \neq i} \left( \frac{\partial \cL}{\partial \bmu_2}^\top \right)_{il} C_{lj} + 2 \left( \frac{\partial \cL}{\partial \bmu_2}^\top \right)_{ii} C_{ij} + \sum_{k \neq i} \left( \frac{\partial \cL}{\partial \bmu_2}^\top \right)_{ki} C_{kj} \\
&= \sum_{l} \left( \frac{\partial \cL}{\partial \bmu_2}^\top \right)_{il} C_{lj} + \sum_{k} \left( \frac{\partial \cL}{\partial \bmu_2}^\top \right)_{ki} C_{kj} \\
&= \left( \frac{\partial \cL}{\partial \bmu_2}^\top \bC \right)_{ij} + \left( \frac{\partial \cL}{\partial \bmu_2} \bC \right)_{ij}.
\end{align*}
Therefore, we conclude that
\begin{align}
\frac{\partial \cL(\bmu_2(\bC))}{\partial \bC} &= (\nabla_{\bmu_2} \cL)^\top \bC + (\nabla_{\bmu_2} \cL) \bC = 2(\nabla_{\bmu_2} \cL) \bC, \label{eq:2_nabla_f_V}\\
\Rightarrow \frac{\partial \cL(\bmu_2(\bC))}{\partial \bC} &\stackrel{(\ref{eq:nabla-mu2})}{=} \bH\bC - \gamma (\bC^\top)^{-1}\label{eq:nablaf_C}
\end{align}
as $\nabla_{\bmu_2} \cL = \nabla_\bV \cL = (\nabla_\bV \cL)^\top$.

\section{SR-VN as an Approximation of VN (Detailed)}\label{app:SR_VN_vs_VN}

For $\gamma = 1$, the update of $\bS_t$ from VN in \Cref{eq:blr-updates} can be rewritten in terms of $\bV_t$ as,
\begin{align*}
   \bV_{t+1}^{-1} &= (1-\rho)\bV_t^{-1}+\rho \bH_t\\
   &= (1-\rho)\bC_t^{-\top}\bC_t^{-1}+\rho \bH_t\\
   &= \bC_t^{-\top}\bigg((1-\rho)\Id+\rho \bC_t^\top\bH_t\bC_t\bigg)\bC_t^{-1}\\
   \Rightarrow \bV_{t+1} &= \frac{\bC_t}{(1-\rho)}\underbrace{\bigg(\Id+\frac{\rho}{(1-\rho)} \bC_t^\top\bH_t\bC_t\bigg)^{-1}}_\text{Inverse}\bC_t^{\top}
\end{align*}This above inverse can be approximated using the following truncated Neumann series for any symmetric matrix $\bA$,
$$
   (\Id + \bA)^{-1} = \Id - \bA + \bA^2 + \mathcal{O}(\norm{\bA}^3).
$$to give
\begin{align*}
     \bV_{t+1} \approx \frac{\bC_t}{(1-\rho)}&\Bigg[\Id - \frac{\rho}{(1-\rho)} \bC_t^\top\bH_t\bC_t + \left(\frac{\rho}{(1-\rho)}\right)^2 \bC_t^\top\bH_t\bC_t\bC_t^\top\bH_t\bC_t\Bigg]\bC_t^{\top}
\end{align*}
Let us define, $\bW_t = \bC_t^\top\bH_t\bC_t,~ \hat{\bW}_t = \tril[\bW_t],$ and make use of the following identity: $\bW = \hat{\bW}_t + \hat{\bW}_t^\top$ to arrive at
\begin{align}
    \bV_{t+1} &\approx \frac{\bC_t}{(1-\rho)}\Bigg[\Id - \frac{\rho}{(1-\rho)} \sqr{ \hat{\bW}_t^\top + \hat{\bW}_t} + \left(\frac{\rho}{(1-\rho)}\right)^2\sqr{\hat{\bW}_t^\top + \hat{\bW}_t}^2\Bigg]\bC_t^{\top}\notag \\
    &= \frac{\bC_t}{(1-\rho)}\Bigg[\bigg(\Id - \frac{\rho}{2(1-\rho)}\bW_t\bigg)^2 + \frac{3}{4}\left(\frac{\rho}{(1-\rho)}\right)^2\bW_t^2\Bigg]\bC_t^{\top}\notag\\
    &\stackrel{(a)}{\approx} \bC_t\bigg[(1+\rho)\Id - \rho \bW_t\bigg]\bC_t^\top + \mathcal{O}(\rho^2)
    \label{eq:V_t_VN},
\end{align}where $(a)$ comes from also expanding $(1-\rho)^{-1} = 1+\rho+\rho^2 + \mathcal{O}(\rho^2)$, at all occurrences.

Now, let us write down the update for $\bV_{t+1}$ using the updates from SR-VN (\ref{eq:tan-cholesky}) (writing $\alpha$ for the step-size for SR-VN here). We have
\begin{align}
    &\bV_{t+1} = \bC_{t+1}\bC_{t+1}^\top\notag\\
    &= (\bC_t - \alpha \bC_t \tril[\bC_t^\top\bH_t\bC_t - \Id])(\bC_t - \alpha \bC_t \tril[\bC_t^\top\bH_t\bC_t - \Id])^\top\notag\\
    &= \bC_t\Bigg[\left(\left(1 + \frac{\alpha}{2}\right)\Id - \alpha \tril[\bC_t^\top\bH_t\bC_t]\right)\left(\left(1 + \frac{\alpha}{2}\right)\Id - \alpha \tril[\bC_t^\top\bH_t\bC_t]\right)^\top\Bigg]\bC_t^\top\notag\\
    &= \bC_t\Bigg[\left(\left(1 + \frac{\alpha}{2}\right)\Id - \alpha \hat{\bW}_t\right)\left(\left(1 + \frac{\alpha}{2}\right)\Id - \alpha \hat{\bW}_t\right)^\top\Bigg]\bC_t^\top\notag\\
    &=\bC_t\Bigg[\Id + \alpha(\Id - [\hat{\bW}_t+\hat{\bW}_t^\top])+\alpha^2\left(\frac{\Id - 2[\hat{\bW}_t+\hat{\bW}_t^\top] + 4\hat{\bW}_t\hat{\bW}_t^\top}{4} \right)\Bigg]\bC_t^\top\notag\\
    &=\bC_t\bigg[(1+\rho)\Id - \rho \bW_t\bigg]\bC_t^\top + \mathcal{O}(\rho^2)\label{eq:V_t_SR_VN}
\end{align}where $\alpha$ is chosen to be $\rho$. Comparing \Cref{eq:V_t_VN} with \Cref{eq:V_t_SR_VN} shows that the two updates for $\bV$ only match up to the first order. The presence of a quadratic difference in \Cref{eq:V_t_SR_VN} arises due to an alternative approximation of the inverse, highlighting the distinctions in the update methods and exposing the less-than-optimal nature of SR-VN.

\section{Analysis Proofs}\label{app:ngd_conv}
\begin{table}[H]\caption{Vectorization Notation for a square matrix $\bA$.}
\begin{center}
\begin{tabular}{r c p{10cm} }
\toprule
$\bar{\bA}$ & $\defeq$ & lower triangular matrix derived from $\bA$ by replacing all supra-diagonal elements by zero\\
$\diag(\bA)$ & $\defeq$ & diagonal matrix derived from $\bA$ by replacing all non-diagonal elements by zero\\
$\bar{\bar{\bA}}$ & $\defeq$ & $\bar{\bar{\bA}} - \bar{\bA} - \diag(\bA)/2$\\
$\vtr(\bA)$ & $\defeq$ & vector obtained by stacking the columns of $\bA$ in order from left to right\\  
$\vech(\bA)$ & $\defeq$ & vector obtained from $\vtr(\bA)$ by omitting supra-diagonal elements.\\  
$\bK$ & $\defeq$ & commutation matrix such that $\bK \vtr(\bA) = \vtr(\bA^\top)$\\  
$\bL$ & $\defeq$ & elimination matrix such that $\bL \vtr(\bA) = \vech(\bA)$\\  
\bottomrule
\end{tabular}
\end{center}
\label{tab:TableOfNotationForMyResearch}
\end{table}

\subsection{Proof of \texorpdfstring{\cref{lemma:bdd_FIM_eig}}{ }}\label{app:bdd_FIM_eig}

\begin{proof}
In \cite[Lemma 1]{tan2021analytic}, the inverse FIM in $(\bmm, \bC)$ parametrisation has the following form:
\begin{align}
  \bF^{-1}_{\btu}(\bmm, \bC)=\begin{pmatrix}
      \bF_{\bmm}^{-1} & 0 \\
       0 & \bF_{\bC}^{-1} \\
  \end{pmatrix}=\begin{pmatrix}
      \bV & 0 \\
       0 & \bF_{\bC}^{-1} \\
  \end{pmatrix},
\end{align}where, given $\bN = (\bK + \Id_{d^2})/2$,
\begin{align}
    \bF_{\bC}^{-1} = \frac{1}{2} \bL(\Id_d \otimes \bC)\bL^\top(\bL\bN\bL^\top)^{-1}\bL(\Id_d \otimes \bC^\top)\bL^\top.
\end{align}Let us work first to prove the LHS of the claim. From part $2$ of \Cref{assum:bounded-iterates}, we first see that $\bF_{\bmm}^{-1} = \bV \succcurlyeq \lambda_{\min}\Id$. For $\bF_{\bC}^{-1}$, we proceed as follows:
\begin{align*}
    \norm{\bF_{\bC}^{-1}}_2 \geq \frac{1}{\norm{\bF_{\bC}}_2} \stackrel{\cite[\text{~Lemma 1(i)}]{tan2021analytic}}{=} \frac{1}{\norm{2\bL(\Id \otimes \bC^{-\top})\bN(\Id \otimes \bC^{-1})\bL^\top}_2}.
\end{align*}Now, notice that $\norm{\bN}_2 \leq \frac{1}{2}(\norm{\bK}_2 + \norm{\Id_{d^2}}_2) \leq 1$ since $\norm{\bK}_2 \leq 1$, while $\bK$ being a permutation matrix. Since $\bL$ is also form of a permutation matrix (with additional zeros at appropriate places), we have that $\norm{\bK}_2 \leq 1$. Combining these facts with the norm inequality, we obtain
\begin{align*}
     \norm{\bF_{\bC}^{-1}}_2 &\geq \frac{1}{\norm{2\bL(\Id \otimes \bC^{-\top})\bN(\Id \otimes \bC^{-1})\bL^\top}_2} \geq \frac{1}{2\norm{\bL}_2^2\norm{\bN}_2\norm{\Id \otimes \bC^{-\top}}_2\norm{\Id \otimes \bC^{-1}}_2}\\\
     &\geq \frac{1}{2\norm{\Id \otimes \bC^{-\top}}_2\norm{\Id \otimes \bC^{-1}}_2}\\
     &\stackrel{(a)}{\geq} \frac{1}{2\norm{\bC^{-1}}_2^2}\\
     &\geq \frac{\lambda_{\min}^2}{2} \qquad \left(\because \norm{\bC^{-1}}_2^2 \geq \norm{\bV^{-1}}_2=\frac{1}{\lambda_{\min}} \right)
\end{align*}where $(a)$ comes from using $\norm{\bA \otimes \bB}_2 \leq \norm{\bA}_2\norm{\bB}_2,~\forall \bA, \bB \in \R^{d \times d}$, and $\norm{\bB^\top}_2 = \norm{\bB}_2$.

Now let us prove the RHS of the claim. From \Cref{remark:high_eigval}, we have $\bF_{\bmm}^{-1} = \bV \preccurlyeq \xi_u^2\Id$. For $\bF_{\bC}^{-1}$, we use the norm inequality as follows:
\begin{align}
    \norm{\bF_{\bC}^{-1}}_2 &\leq \frac{1}{2} \norm{\bL}^4_2\norm{(\bL\bN\bL^\top)^{-1}}_2\norm{\Id_d \otimes \bC}_2\norm{\Id_d \otimes \bC^\top}_2\notag\\
    &\leq \frac{1}{2}\norm{(\bL\bN\bL^\top)^{-1}}_2\norm{\bC}^2_2\notag\\
    &\stackrel{\text{\Cref{assum:bounded-iterates}}}{\leq} \frac{\xi_u^2}{2}\norm{(\bL\bN\bL^\top)^{-1}}_2 \label{eq:LNL_inv}
\end{align}
Let us see how the operator $\bL\bN\bL^\top: \R^{d(d+1)/2} \to \R^{d(d+1)/2}$ acts on $\vech(\bA) \in \R^{d(d+1)/2}$. By definition, we have
\begin{align}
    (\bL\bN\bL^\top) \vech(\bA) &= \left(\bL~\frac{\bK + \Id_{d^2}}{2}~\bL^\top\right) \vech(\bA) \notag\\
    &= \frac{1}{2}( (\bL\bK\bL^\top) \vech(\bA) + \underbrace{(\bL\bL^\top)}_{\Id_{d(d+1)/2}} \vech(\bA))\notag\\
    &= \frac{1}{2} (\bL\bK\vtr(\Bar{\bA})+\vech(\bA)), \qquad (\because \bL^\top\vech(\bA) = \vtr(\Bar{\bA}))\notag\\
    &= \frac{1}{2} (\bL\vtr(\Bar{\bA}^\top)+\vech(\bA)),\qquad (\because \bK\vtr(\Bar{\bA}) = \vtr(\Bar{\bA}^\top))\notag\\
    &= \frac{1}{2} (\vech(\Bar{\bA}^\top)+\vech(\bA))), \qquad (\because \bL\vtr(\Bar{\bA}^\top) = \vech(\Bar{\bA}^\top)),\label{eq:tril_equiv}
\end{align}where \Cref{eq:tril_equiv} implies that the operator $\bL\bN\bL^\top$ is equivalent to halving the sub-diagonal entries while keeping the diagonal of the matrix $\bA$ intact. Therefore, $\bL\bN\bL^\top$ has eigenvalues between $1/2$ and 1, which implies that its inverse has eigenvalues bounded between $1$ and $2$, leading to $\norm{(\bL\bN\bL^\top)^{-1}}_2 \leq 2$. Thus, after plugging in \Cref{eq:LNL_inv} this bound, we have 
\begin{align}
    \norm{\bF_{\bC}^{-1}}_2 &\leq \xi_u^2,
\end{align}allowing us to complete the proof of the main claim.
\end{proof}

\subsection{Proof of \texorpdfstring{\cref{lemma:PL}}{ }}\label{app:PL}

\begin{proof}
Notice that \Cref{eq:PL_KL} is equivalent to 
    \begin{align}\label{eq:PL_ELBO}
        \norm{\nabla_{\btu}\cL(\btu)}_{\bF_{\btu}^{-1}}^2 \geq 2\mu \left(\cL(\btu) - \cL(\btu_*)\right),
    \end{align} because $\KLop\bigg[q_{\btu}(\bt)\mid\mid p(\bt)\bigg]$ can be written as $$\KLop\bigg[q_{\btu}(\bt)\mid\mid p(\bt)\bigg] = -1/2\log(\det \bV) + \bbE_{\bt \sim \mathcal{N}(\bmm, \bV)}[\Bar \ell(\bt)] + \text{const.} \stackrel{(\ref{eq:general-elbo})}{=} \cL(\btu) + \text{const.}$$
Then simply note that by \Cref{lemma:bdd_FIM_eig}, we obtain that
\begin{align*}
     \norm{\nabla_{\btu} \cL}^2_{\bF_{\btu}^{-1}} 
     &\geq \lambda_{\min}^g\norm{\nabla_{\btu} \cL}^2\stackrel{(\ref{eq:strong_convex_new})}{\geq} 2\delta\lambda_{\min}^g  \bigg( \cL(\bmm, \bC) - \cL(\bmm_*, \bC_*)\bigg)
\end{align*}
where now $\mu=\delta\lambda_{\min}$ is indeed the desired the PL constant.
\end{proof}

\subsection{Proof of \texorpdfstring{\cref{theorem:FR-flow}}{ }}\label{app:ng_flow}

\begin{proof}
For conciseness, we will refer $\KLop\bigg[q_{\btu_t}(\bt)\mid\mid p(\bt\mid\cD)\bigg] = \KLop_t$ throughout the proof. Let us define the Lyapunov function $\mathcal{E}(t) = e^{2\mu t}(\KLop_t - \KLop_*),$ then
\begin{align*}
&\dot{\mathcal{E}}_t =2\mu e^{2\mu t} (\KLop_t - \KLop_*)+ e^{2\mu t}\ip{\nabla_{\btu}\KLop_t}{\dot{\btu}} \stackrel{(\ref{eq:NGD_flow})}{=} e^{2\mu t}(2\mu  (\KLop_t - \KLop_*)-\norm{\nabla_{\btu}\KLop_t}_{\bF_{\btu}^{-1}}^2) \stackrel{\text{\Cref{lemma:PL}}}{\leq}0.
\end{align*}The energy functional $\mathcal{E}(t) $ is therefore monotonically decreasing, and hence
\begin{align}
    \mathcal{E}(t) = e^{2\mu t}(\KLop_t - \KLop_*) \leq \mathcal{E}(0) = \KLop_0 - \KLop_*
\end{align}
\end{proof}
\subsection{Proof of \Cref{theorem:global-tan-cholesky}}\label{app:proof-tan-chol}

\begin{proof}
From \Cref{assum:lipschitz}, we start by writing the following Taylor expansion for $\cL(\bmm, \bC)$ when both $\bmm$ and $\bC$ change. 
\begin{align}
\cL(\bmm_{t+1}, \bC_{t+1}) &\leq \cL(\bmm_t, \bC_t) + \nabla_\bmm \cL (\bmm_t, \bC_t)^\top (\bmm_{t+1} - \bmm_t) + \tr((\nabla_\bC \cL (\bmm_t, \bC_t))^\top (\bC_{t+1} - \bC_t)) \nonumber\\
&+ \frac{M}{2} \| \bmm_{t+1} - \bmm_t \|_2^2 +  \frac{M}{2}  \| \bC_{t+1} - \bC_t \|_F^2.\label{eq:tan-joint-taylor}
\end{align}
By plugging in values from \Cref{eq:tan-cholesky} in the joint Taylor expansion above, we arrive at:
\begin{align}
    &\cL(\bmm_{t+1}, \bC_{t+1}) -  \cL(\bmm_t, \bC_t) \notag\\
    &\leq \underbrace{-(\nabla_{\bmm}\cL(\bmm_t, \bC_t)^\top(\rho \bC_t \bC_t^\top \nabla_{\bmm} \cL(\bmm_t, \bC_t)) + \frac{M\rho^2}{2}\norm{ \bC_t \bC_t^\top \nabla_{\bmm} \cL(\bmm_t, \bC_t)}^2}_{A} \notag\\
    &\underbrace{-\rho\tr\bigg((\nabla_\bC \cL (\bmm_t, \bC_t))^\top\:\bC_t\Bar{\Bar{\bH}}_t\bigg) + \frac{M\rho^2}{2} \| \bC_t\Bar{\Bar{\bH}}_t\|_F^2}_{B}, \label{eq:taylor-A-B}
\end{align}where $\Bar{\Bar{\bH}}_t = \tril[\bC_t^\top\nabla_{\bC} \cL (\bmm_t, \bC_t)]$.
Using the following facts 
\begin{gather*}
    - \nabla_{\bmm}\cL(\bmm_t, \bC_t)^\top \bC_t \bC_t^\top \nabla_{\bmm} \cL(\bmm_t, \bC_t) \leq - \lambda_{\min} \norm{\nabla_{\bmm}\cL(\bmm_t, \bC_t)}^2\\
    \norm{\bC_t \bC_t^\top \nabla_{\bmm} \cL(\bmm_t, \bC_t)} \leq \norm{\bC_t \bC_t^\top}\norm{\nabla_{\bmm} \cL(\bmm_t, \bC_t)} \stackrel{\text{\Cref{remark:high_eigval}}}{\leq} \xi_u^2\norm{\nabla_{\bmm} \cL(\bmm_t, \bC_t)} \quad\quad 
\end{gather*}
and applying the lower for bound $\lambda_{\min}$  assumed in \Cref{assum:bounded-iterates} to term $A$, we arrive at
\begin{align*}
    A \leq  -\lambda_{\min} \rho\norm{\nabla_{\bmm}\cL(\bmm_t, \bC_t)}^2 + \frac{M\rho^2\xi_u^4}{2}\norm{\nabla_{\bmm}\cL(\bmm_t, \bC_t)}^2,
\end{align*}
which then gives us
\begin{align}
    A \leq \omega_m \norm{\nabla_{\bmm}\cL(\bmm_t, \bC_t)}^2 \label{eq:A-omega},
\end{align}where $$\omega_m = \frac{M\rho^2\xi_u^4}{2}-\lambda_{\min} \rho.$$
Now for term $B$, let us first look at the following term by repeatedly applying the matrix norm inequality ($\norm{\mathbf{A}\mathbf{B}}_F \leq \norm{\mathbf{A}}_F\norm{\mathbf{B}}_F$) to obtain
\begin{align*}
    \| \bC_t \Bar{\Bar{\bH}}_t\|_F^2 &\leq \norm{\bC_t}_F^2 \norm{\Bar{\Bar{\bH}}_t}_F^2 \stackrel{(a)}{\leq} \norm{\bC_t}_F^2\left(2\norm{\Bar{\bH}}_F^2 + \frac{1}{2}\norm{\diag(\bH)}_F^2\right) \stackrel{(b)}{\leq} \norm{\bC_t}_F^2\left(2\norm{\bH}_F^2 + \frac{1}{2}\norm{\diag(\bH)}_F^2\right)\\
    &\leq \norm{\bC_t}_F^2\left(2\norm{\bH}_F^2 + \frac{1}{2}\norm{\bH}_F^2\right) = \frac{5}{2}\norm{\bC_t}_F^2\norm{\bH}_F^2\\
    &= \frac{5}{2}\norm{\bC_t}_F^2\norm{\bC^\top \Bar{\bK}}_F^2 \leq \frac{5}{2}\norm{\bC_t}_F^4\norm{\Bar{\bK}}_F^2 \stackrel{(c)}{\leq} \frac{5}{2}\norm{\bC_t}_F^4\norm{\bK}_F^2 \\
    &= \frac{5}{2}\norm{\bC_t}_F^4\norm{\nabla_{\bC} \cL(\bmm_t, \bC_t)}_F^2,
\end{align*}
where $(a)$ comes from using the fact that $\forall \bA, \bB \in \R^{D \times D}$, we have $\norm{\bA - \bB}_F^2 \leq 2\norm{\bA}_F^2 + 2\norm{\bB}_F^2$, and choosing $\bA = \Bar\bH$ and $\bB = \diag(\bH)/2$. In $(b)$ and $(c)$ we have used $\norm{\Bar{\bA}}_F \leq \norm{\bA}_F$, as $\Bar{\bA}$ is a lower triangular form obtained by sending all supra-diagonal elements of $\bA$ to zero. For the first term in $B$, we have
\begin{align*}
    \tr\bigg((\nabla_\bC \cL (\bmm_t, \bC_t))^\top\:\bC_t \Bar{\Bar{\bH}}_t\bigg) &\leq \norm{\nabla_{\bC} \cL(\bmm_t, \bC_t)}_F \| \bC_t \Bar{\Bar{\bH}}_t\|_F \quad \quad \left(\because \tr(\bA\bB)\leq \norm{\bA}_F\norm{\bB}_F\right)\\
    &\leq  \sqrt{\frac{5}{2}}\norm{\bC_t}_F^2\norm{\nabla_{\bC} \cL(\bmm_t, \bC_t)}_F^2
\end{align*} 
This leads us to finally obtain
\begin{align*}
    &B \leq -\sqrt{\frac{5}{2}}\rho\norm{\bC_t}_F^2\norm{\nabla_{\bC} \cL(\bmm_t, \bC_t)}_F^2  + \frac{5M\rho^2}{4}\: \| \bC_t\|^4_F \norm{\nabla_{\bC} \cL(\bmm_t, \bC_t)}^2_F
\end{align*}
Now using the bounds in \Cref{assum:bounded-iterates}, we arrive at
\begin{align}
    B \leq \omega_C\norm{\nabla_{\bC} \cL(\bmm_t, \bC_t)}^2_F \label{eq:B-omega}
\end{align}where $$\omega_C =  \frac{5\rho^2\xi_u^4M}{4} - \sqrt{\frac{5}{2}}\rho\xi_l^2.$$ 
Now plugging  \Cref{eq:A-omega} and \Cref{eq:B-omega} in \Cref{eq:taylor-A-B} to get
\begin{align}
    &\Delta^{t+1} - \Delta^{t} \leq \omega_m\norm{\nabla_{\bmm} \cL(\bmm_t, \bC_t)}^2 + \omega_C\norm{\nabla_{\bC} \cL(\bmm_t, \bC_t)}^2_F \notag\\
    \Rightarrow \quad &\Delta^{t+1} - \Delta^{t} \leq -\eta \norm{\nabla_{(\bmm, \bC)} \cL(\bmm_t, \bC_t)}^2,\label{eq:Delta_dif}
\end{align}
where $\eta = \min\{-\omega_m, -\omega_C\}$ and we require that $\eta > 0$. Now from \Cref{assum:strong-convexity}, by joint $\delta$-strongly convexity of $\cL$ in $(\bmm, \bC)$, we have 
\begin{gather}
\cL(\bmm_t, \bC_t) - \cL(\bmm^*, \bC^*) \leq \frac{1}{2\delta}\norm{\nabla_{(\bmm, \bC)} \cL(\bmm_t, \bC_t)}^2 \notag\\
\Rightarrow  \Delta^{t} \leq \frac{1}{2\delta}\norm{\nabla_{(\bmm, \bC)} \cL(\bmm_t, \bC_t)}^2 \notag\\
\Rightarrow \quad \Delta^{t} \stackrel{(\ref{eq:Delta_dif})}{\leq} \frac{1}{2\eta\delta}(\Delta^{t} - \Delta^{t+1})\notag\\
\Rightarrow \quad \Delta^{t+1} \leq (1 - 2\eta\delta) \Delta^{t}
\end{gather}
from which we can conclude that we can indeed obtain a descent for $\cL$ in $(\bmm, \bC)$ if we require 
\begin{gather}
    0 < 1 - 2\eta\delta < 1
    \Rightarrow 0 < \eta < \frac{1}{2\delta}
\end{gather}
This would further lead us to the global convergence as
\begin{align}
    \Delta^{t+1} \leq (1 - 2\eta\delta)^{t+1} \Delta^{0}.
\end{align}
\end{proof}

\subsection{Permissible Step Sizes for SR-VN}\label{sec:adm-rho}

\Cref{theorem:global-tan-cholesky} requires the condition $0 < \eta < \frac{1}{2\delta}$ to hold. This might prompt the reader to inquire whether a specific step size $\eta$ can be determined to ensure the fulfillment of this condition. We answer this question below in the context of a constant step size that we also employ in our experimental work.

\textbf{Constant step size.} Let $\lambda_{\max} = \xi_u^2$ be the upper bound for the largest eigenvalue of $\bV_t$, as derived in \Cref{remark:high_eigval}, then \Cref{thm:eta} becomes $$\eta = \min\left\{\lambda_{\min} \rho-\frac{M\rho^2\xi_u^4}{2},\quad \sqrt{\frac{5}{2}}\rho\xi_l^2 - \frac{5\rho^2\xi_u^4M}{4}\right\}.$$
Note that both arguments inside the $\min$ above are quadratics in $\rho$. Let us choose $\rho$ such that
\begin{align}\label{eq:permissible-step-size}
    \rho \leq \max\{\rho_1, \rho_2\},
\end{align}where $$\rho_1 = \frac{\lambda_{\min}}{\lambda_{\max}^2 M}\quad \text{and} \quad\rho_2= \sqrt{\frac{2}{5}}\frac{\xi_l^2}{M \xi_u^4}$$ are the extremal points of the quadratic equations in the first and second terms inside the $\min$, respectively. Choosing $\rho = \rho_1$ for the first argument and $\rho = \rho_2$ for the second argument yields
\begin{equation}\label{eq:gamma_step_size}
\eta \leq \min\left\{\frac{\lambda_{\min}^2}{2 \lambda_{\max}^2 M},~ \frac{\xi_l^4}{2 M \xi_u^4} \right\}. 
\end{equation}In order to ensure that the rate of convergence is contractive, we require $(1 - 2 \eta \delta) < 1$, that is $0 < 2 \eta \delta < 1$. With the choice above, $\eta$ is obviously positive, so let us focus on $2 \eta \delta < 1 \implies \eta < \frac{1}{2 \delta}$. From the first quantity in the min appearing in \Cref{eq:gamma_step_size}, we need \begin{gather}
    \frac{\lambda_{\min}^2}{2 \lambda_{\max}^2 M} < \frac{1}{2 \delta} 
    \implies \textcolor{blue}{\frac{\lambda_{\min}^2}{\lambda_{\max}^2}} \cdot \textcolor{magenta}{\frac{\delta}{M}} < 1. \label{eq:perm-cond}
\end{gather}The quantity $\textcolor{blue}{\frac{\lambda_{\min}}{\lambda_{\max}}}$ is a lower bound on the inverse condition number of the preconditioning matrix $\bV_t$ and is thus upper bounded by 1. Since $M$ is the smoothness constant of $\cL(\bmm, \bC)$, then the quantity $\textcolor{magenta}{\frac{\delta}{M}}$ is upper bounded by the inverse condition number of the Hessian of $\cL(\bmm, \bC)$ which is itself upper bounded by 1. This means that \Cref{eq:perm-cond} is always satisfied with the proposed choice of step size. The same argument applies to the second quantity in the min appearing in \Cref{eq:gamma_step_size}.

\textbf{Intuition:} We here give an intuitive explanation for the choice of the step size $\rho$. By \Cref{theorem:global-tan-cholesky}, we see that the speed of convergence is exponential with respect to time $t$. The contraction factor $C_t := (1 - 2\eta\delta)^{t+1}$ directly depends on the choice of step size through $\eta$ defined in \Cref{thm:eta}. From the latter equation, one can observe that larger step sizes can be used if the objective function is smoother (which is controlled by $M$). In turn, larger step sizes allow for faster convergence as the algorithm takes larger steps towards the minimum, which means that $(1 - 2\eta\delta)$ is smaller and therefore $C_t$ contracts at a faster rate.

\paragraph{Complexity Calculation.}

To compute the complexity, we require $\mathcal{L}(\bmm_t, \bC_t) - \mathcal{L}(\bmm_*, \bC_*) \leq \epsilon$, meaning from \Cref{eq:global_convergence} we must have \begin{align*} (1-2\eta\delta)^t(\mathcal{L}(\bmm_0, \bC_0) - \mathcal{L}(\bmm_*, \bC_*)) &\leq \epsilon\ \implies t \leq \frac{\log(\epsilon/(\mathcal{L}(\bmm_0, \bC_0) - \mathcal{L}(\bmm_*, \bC_*)))}{\log(1-2\eta\delta)}. \end{align*}Now using \Cref{eq:gamma_step_size}, we arrive at 
\begin{align}\label{eq:time_complexity}
    t \geq \min\left\{\frac{\log(\epsilon/(\mathcal{L}(\bmm_0, \bC_0) - \mathcal{L}(\bmm_*, \bC_*)))}{\log(1-\frac{\delta}{M}\frac{\lambda_{\min}^2}{\lambda_{\max}^2})}, \frac{\log(\epsilon/(\mathcal{L}(\bmm_0, \bC_0) - \mathcal{L}(\bmm_*, \bC_*)))}{\log(1-\frac{\delta}{M}\frac{\xi_l^4}{\xi_u^4})}\right\},
\end{align}
where the switching of inequality happens because the $\log(\cdot)$ term in the denominator is negative for both terms inside $\min$.

\section{Biased SR-VN Convergence }\label{app:ngd_w_bias}

\begin{definition}[Biased Gradient Oracle]
    A map $\bg: \R^{d+d\times d}\times \cD \to \R^{d}$ s.t.
    \begin{align}
        \hat{\bg}(\btu) = \bg + \bb_g(\btu).
    \end{align}for a bias $b_g : \R^{d+d\times d} \to \R^{d}$
\end{definition}
\begin{definition}[Biased Hessian Oracle]
    A map $\bH: \R^{d+d\times d}\times \cD \to \R^{d \times d}$ s.t.
    \begin{align}
        \hat{\bH}(\btu) = \bH + \bb_H(\btu).
    \end{align}for a bias $b_H : \R^{d+d\times d} \to \R^{d \times d}$
\end{definition}
We assume that the bias is bounded:
\begin{assumption}(Bounded biases)\label{assum:bdd_bias}
    There exists constants $0 \leq m_g,m_H < 1$, and $\zeta_m^2, \zeta_H^2 \geq 0$ such that
    \begin{enumerate}
        \item $\norm{\bb_g(\btu)}^2 \leq m_g\norm{\nabla_{\bmm} \cL(\bmm_t, \bC_t)}^2 + \zeta_g^2, \quad \forall \btu \in \R^{d+d\times d}.$
        \item $\norm{\bb_H(\btu)\bC}^2_F \leq m_H\norm{\nabla_{\bC} \cL(\bmm_t, \bC_t)}_F + \zeta_H^2, \quad \forall \btu \in \R^{d+d\times d}.$
    \end{enumerate}
\end{assumption}

With the prerequisites established, we now present our theorem demonstrating the convergence of the biased SR-VN update. This bias may arise from estimating the expectation using piecewise bounds presented in  \cite{marlin2011piecewise}.
\begin{theorem}[With Bias Convergence]
\label{theorem:biased-tan-cholesky}
 Given Assumptions \ref{assum:init}--\ref{assum:bdd_bias} and considering the limit points $\bmm_*$ and $\bC_*$ in \Cref{eq:limit_pnts}, the updates provided by \Cref{algo:TC} satisfy
\begin{equation*}
\begin{aligned}
    &\cL(\bmm_{t+1}, \bC_{t+1})  - \cL(\bmm_*, \bC_*) \leq(1 - 2\eta\delta)^{t+1} (\cL(\bmm_0, \bC_0)  - \cL(\bmm_*, \bC_*))+\omega_m\zeta_g^2+\omega_C\zeta_H^2,
\end{aligned}
\end{equation*}
\end{theorem}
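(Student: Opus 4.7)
The plan is to mirror the five-step scheme used for \Cref{theorem:global-tan-cholesky}, carefully tracking how the oracle bias propagates through each step. Starting from the Lipschitz-smoothness Taylor bound \Cref{eq:tan-joint-taylor}, I would plug in the biased SR-VN updates
\begin{align*}
\bmm_{t+1} - \bmm_t &= -\rho \bC_t\bC_t^\top (\bg_t + \bb_g),\\
\bC_{t+1} - \bC_t &= -\rho \bC_t \tril[\bC_t^\top \bH_t \bC_t - \gamma\Id] - \rho \bC_t \tril[\bC_t^\top \bb_H \bC_t],
\end{align*}
so that each increment decomposes as \emph{(unbiased SR-VN step)}+\emph{(bias step)}. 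Substituting into the Taylor inequality, every linear and quadratic term in the expansion splits accordingly, isolating terms identical to those in the unbiased proof plus additional bias contributions.

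The second step is to control these new contributions. For the quadratic step-norm terms I would use $\|a+b\|^2 \leq (1+\alpha)\|a\|^2 + (1+\alpha^{-1})\|b\|^2$; for the linear cross-terms such as $-\rho\, \nabla_\bmm\cL^\top \bC_t\bC_t^\top \bb_g$ and $-\rho\,\tr(\nabla_\bC\cL^\top \bC_t \tril[\bC_t^\top \bb_H \bC_t])$ I would apply Young's inequality $2\langle u,v\rangle \leq \alpha\|u\|^2+\alpha^{-1}\|v\|^2$, choosing $\alpha$ small so that the resulting $\|\nabla_\bmm\cL\|^2$ and $\|\nabla_\bC\cL\|_F^2$ pieces only shave a controlled amount off the unbiased descent coefficients $-\omega_m,-\omega_C$ from \Cref{eq:A-omega}--\Cref{eq:B-omega}. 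The remaining bias factors $\|\bb_g\|^2$ and $\|\bb_H\bC_t\|_F^2$ are then replaced via \Cref{assum:bdd_bias} with $m_g\|\nabla_\bmm\cL\|^2 + \zeta_g^2$ and $m_H\|\nabla_\bC\cL\|_F^2 + \zeta_H^2$. The $\sqrt{5/2}$-type bounds on $\tril[\cdot]$ and the norm bounds on $\bC_t,\bV_t$ from \Cref{assum:bounded-iterates} are reused verbatim, as they act identically on the bias term $\bC_t\tril[\bC_t^\top\bb_H\bC_t]$.

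Collecting everything, I expect to land on a one-step inequality of the form
\begin{align*}
\cL(\bmm_{t+1},\bC_{t+1}) - \cL(\bmm_t,\bC_t) \leq -\tilde\eta \norm{\nabla_{(\bmm,\bC)}\cL(\bmm_t,\bC_t)}^2 + c_m \zeta_g^2 + c_C \zeta_H^2,
\end{align*}
with $\tilde\eta>0$ an effective descent coefficient and $c_m,c_C$ explicit constants depending on $\rho,\xi_l,\xi_u,M$. Combined with $\delta$-strong convexity (\Cref{assum:strong-convexity}) exactly as in the proof of \Cref{theorem:global-tan-cholesky}, this yields the recursion $\Delta^{t+1} \leq (1-2\tilde\eta\delta)\Delta^t + c_m\zeta_g^2 + c_C\zeta_H^2$. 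Unrolling and summing the resulting geometric series gives $\Delta^{t+1} \leq (1-2\tilde\eta\delta)^{t+1}\Delta^0 + (c_m\zeta_g^2+c_C\zeta_H^2)/(2\tilde\eta\delta)$, from which the claimed bound follows after identifying $\omega_m, \omega_C$ with $c_m/(2\tilde\eta\delta)$ and $c_C/(2\tilde\eta\delta)$ (and renaming $\tilde\eta$ as $\eta$, possibly at the cost of a slightly more restrictive step-size).

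The main obstacle is the bookkeeping in step two: the Young-inequality parameters and $\rho$ must be chosen so that (i) the relative-bias pieces $m_g\|\nabla_\bmm\cL\|^2$ and $m_H\|\nabla_\bC\cL\|_F^2$ are strictly smaller than the unbiased descent they are subtracted from—this is precisely what the assumptions $m_g,m_H<1$ in \Cref{assum:bdd_bias} make possible, paralleling the role of the noise ceiling in stochastic analyses—and (ii) the absolute-error coefficients $c_m,c_C$ remain finite, which requires keeping the Young parameter on the cross terms bounded away from zero. A minor but easily overlooked point is the application of $\|\tril[\bC_t^\top\bb_H\bC_t]\|_F \leq \sqrt{5/2}\,\xi_u^2\|\bb_H\bC_t\|_F$, mirroring the constant appearing in the $B$ term of the unbiased proof, so that the dependence on $\zeta_H^2$ inherits the same geometric factors as the unbiased $\omega_C$.
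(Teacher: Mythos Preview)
Your plan is sound and would produce a correct bound, but it takes a different route from the paper. The paper does \emph{not} split each increment into ``unbiased SR-VN step plus bias step'' and then tame cross terms with Young's inequality. Instead it reruns the $A$ and $B$ estimates of the unbiased proof with the biased oracles left intact, arriving directly at $A \leq \omega_m\|\hat{\bg}_t\|^2$ and $B \leq \omega_C\|(\hat{\bH}_t-\gamma(\bC_t\bC_t^\top)^{-1})\bC_t\|_F^2$; only afterwards does it expand $\|\hat{\bg}_t\|^2=\|\bg_t+\bb_g\|^2$ with a triangle-type bound and invoke \Cref{assum:bdd_bias}, so the relative-bias constants enter multiplicatively as $\eta=\min\{-\omega_m(1+m_g),\,-\omega_C(1+m_H)\}$ rather than via a tunable Young parameter. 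The paper also stops at the one-step recursion $\Delta^{t+1}\leq(1-2\eta\delta)\Delta^t+\omega_m\zeta_g^2+\omega_C\zeta_H^2$ and does not sum the geometric series, so the $1/(2\tilde\eta\delta)$ factor you derive and your subsequent identification of $\omega_m,\omega_C$ with $c_m/(2\tilde\eta\delta),\,c_C/(2\tilde\eta\delta)$ depart from how the paper uses those symbols (which are the fixed quantities from \Cref{thm:eta}). In short: the paper's argument is shorter because it avoids cross terms altogether, while your decomposition is more explicit, uses correct Young-type inequalities, and carries the unrolling through to the end.
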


\begin{proof}
Following similar steps, we can arrive at the following versions of \Cref{eq:A-omega} and \Cref{eq:B-omega}
\begin{align}
    A \leq \omega_m \norm{\hat{\bH_t}}^2, \qquad B \leq \omega_C\norm{\nabla_{\bC} \cL(\bmm_t, \bC_t)}^2_F,
\end{align}where $\nabla_{\bC}\cL(\bmm_t, \bC_t)= (\hat{\bH}_t-\gamma (\bC_t\bC_t^\top)^{-1})\bC_t$. Now using \Cref{assum:bdd_bias}, one can get
\begin{align}
    A \leq \omega_m \norm{\hat{\bH_t}}^2 = \omega_m \norm{\bH_t + \bb_g(\btu_t)}^2 \leq \omega_m \bigg(\norm{\bH_t}^2+\norm{\bb_g(\btu_t)}^2\bigg) \leq \omega_m \bigg((1+m_g)\norm{\bH_t}^2+\zeta_g^2\bigg).\label{eq:A-bias}
\end{align}Similarly,
\begin{align}
    B &\leq \omega_C\norm{\nabla_{\bC} \cL(\bmm_t, \bC_t)}^2_F = \omega_C\norm{\bigg(\bH_t + \bb_H(\btu_t)-\gamma (\bC_t\bC_t^\top)^{-1}\bigg)\bC_t}^2_F\notag\\
    &\leq \omega_C\norm{\bigg(\bH_t-\gamma (\bC_t\bC_t^\top)^{-1}\bigg)\bC_t}^2_F + \omega_C\norm{\bb_H(\btu_t)\bC_t}_F^2\notag\\
    &=\omega_C\bigg(\norm{\nabla_{\bC} \cL(\bmm_t, \bC_t)}^2_F +\norm{\bb_H(\btu_t)\bC_t}_F^2\notag\bigg)\\
    &\leq\omega_C \bigg((1+m_H)\norm{\nabla_{\bC} \cL(\bmm_t, \bC_t)}^2+\zeta_H^2\bigg)\label{eq:B-bias}
\end{align}
Now plugging  \Cref{eq:A-bias} and \Cref{eq:B-bias} in \Cref{eq:taylor-A-B} to get
\begin{align}
    &\Delta^{t+1} - \Delta^{t} \notag\\
    &\leq \omega_m(1+m_g)\norm{\nabla_{\bmm} \cL(\bmm_t, \bC_t)}^2 + \omega_C(1+m_H)\norm{\nabla_{\bC} \cL(\bmm_t, \bC_t)}^2_F + \omega_m\zeta_g^2+\omega_C\zeta_H^2 \\
    \Rightarrow \quad &\Delta^{t+1} - \Delta^{t} \leq -\eta \norm{\nabla_{(\bmm, \bC)} \cL(\bmm_t, \bC_t)}^2+ \omega_m\zeta_g^2+\omega_C\zeta_H^2,\label{eq:Delta_diff_bias}
\end{align}
where $\eta = \min\{-\omega_m(1+m_g), -\omega_C(1+m_H)\}$ and we require that $\eta > 0$. Now from \Cref{assum:strong-convexity}, by joint $\delta$-strongly convexity of $\cL$ in $(\bmm, \bC)$, we have 
\begin{gather}
\cL(\bmm_t, \bC_t) - \cL(\bmm^*, \bC^*) \leq \frac{1}{2\delta}\norm{\nabla_{(\bmm, \bC)} \cL(\bmm_t, \bC_t)}^2 \notag\\
\Rightarrow  \Delta^{t} \leq \frac{1}{2\delta}\norm{\nabla_{(\bmm, \bC)} \cL(\bmm_t, \bC_t)}^2 \notag\\
\Rightarrow \quad \Delta^{t} \stackrel{(\ref{eq:Delta_diff_bias})}{\leq} \frac{1}{2\eta\delta}(\Delta^{t} - \Delta^{t+1})+\frac{1}{2\eta\delta}(\omega_m\zeta_g^2+\omega_C\zeta_H^2)\notag\\
\Rightarrow \quad \Delta^{t+1} \leq (1 - 2\eta\delta) \Delta^{t}+ \omega_m\zeta_g^2+\omega_C\zeta_H^2
\end{gather}
\end{proof}
\begin{remark}
It is worth noting that the piecewise method for computing expectations, given in \cite{marlin2011piecewise}, involves computing a 1-D integral which one can compute below numerical float precision. In other words, the bias with piecewise bounds can be made arbitrarily small by increasing the number of pieces. The bias is therefore very small, and hence the terms $\zeta_g^2$ and $\zeta_H^2$ are negligible which shows that SR-VN converges as long as this holds.
\end{remark}

\section{Piecewise versus stochastic implementation}\label{sec:piece_vs_stoc}

In this section, we make a comparison between the piecewise and the stochastic implementation of the expectation in ELBO, see \Cref{fig:piece_vs_stoc}. This plot highlights why the piecewise implementation is a more accurate and faster representation of the deterministic algorithm we analyzed, and thus the preferred choice for the experiments carried out in the paper.

\begin{figure}[t!]
   \centering
\includegraphics[width=0.9\linewidth]{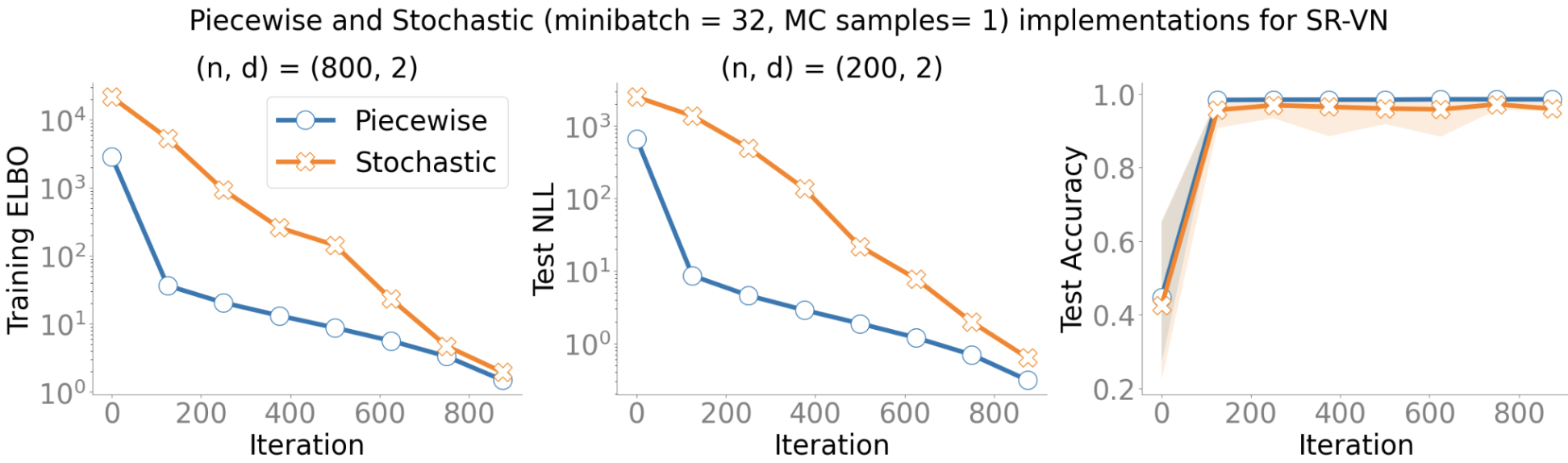}
\caption{\small{The provided comparison highlights the differences between the piecewise implementation and the
(minibatch + MC sampling) stochastic implementation for SR-VN. The results clearly demonstrate that the
stochastic approach is much slower compared to the piecewise method. Additionally, while the stochastic
implementation converges to a slightly higher ELBO, the piecewise implementation reaches a more accurate
solution more rapidly. These findings are based on iterates averaged over $5$ different initializations. The dataset taken is the same as displayed in \Cref{fig:runtime_comp}, taken with different $n$’s.}}
\label{fig:piece_vs_stoc}
\end{figure}

\newpage
\section{Experiments Detail}\label{app:exp}

For all experiments, we first use grid search to tune model hyper-parameters, where the search is performed in a specific range of values. The resultant values were then fixed during our experiments. The statistics of the datasets and the model hyper-parameters used are given in \Cref{tab:dataset-hyperparameters}.  In practice, implementing deterministic NGD is often feasible by employing piecewise analytical linear and quadratic bounds to compute expected values. For example, please refer to \citep{marlin2011piecewise}. Our experiments adopt this technique and replicate the setup outlined in \citep{khan2017conjugate}.

\begin{table}[H] 
    \centering
    \caption{Dataset Statistics and Model Hyperparameters}
    \label{tab:dataset-hyperparameters}
    \begin{tabular}{|c|c|c|c|c|c|c|c|}
        \hline
        \multirow{2}{*}{Dataset} & \multirow{2}{*}{$\beta$} & \multirow{2}{*}{$N$} & \multirow{2}{*}{$d$} & \multirow{2}{*}{$N_{\text{train}}$} & \multicolumn{3}{c|}{Step Sizes} \\
        \cline{6-8}
         &  &  &  &  & VN & SQ-VN & BW-GD \\
        \hline
        Australian-scale & $10^{-5}$ & 690 & 14 & 552 & $5 \times 10^{-3}$ & $5 \times 10^{-3}$ & $4.4 \times 10^{-3}$ \\
        Diabetes-scale & $10^{-2}$ & 768 & 8 & 614 & $5 \times 10^{-3}$ & $5 \times 10^{-3}$ & $9 \times 10^{-4}$ \\
        breast-cancer & $10^{-1}$ & 683 & 10 & 546 & $9 \times 10^{-3}$ & $6 \times 10^{-3}$ & $6.3 \times 10^{-3}$ \\
        Mushrooms  & $10^{-2}$ & 8124 & 112 & 6499 & $2.5 \times 10^{-4}$ & $2.5 \times 10^{-4}$ & $8.5 \times 10^{-5}$ \\
        \hline
        Phishing & $10^{-2}$ & 11055 & 68 & 8844 & $4 \times 10^{-4}$ & $4 \times 10^{-4}$ & $8 \times 10^{-5}$ \\
        MNIST & $10^{-1}$ & 70000 & 784 & 60000 & $5 \times 10^{-6}$ & $5 \times 10^{-6}$ & $10^{-6}$ \\
        Covtype-scale & $2 \times 10^{-2}$ & 581012 & 54 & 500000 & $10^{-5}$ & $10^{-5}$ & $10^{-6}$ \\ \hline
        Leukemia & $2 \times 10^{-1}$ & 38 & 7129 & 34 & $5 \times 10^{-6}$ & $5 \times 10^{-6}$ & $1.5 \times 10^{-6}$ \\ \hline
    \end{tabular}
\end{table}

\newpage

\subsection{Runtime comparison}\label{sec:runtime_comp}

It is worth noting that each algorithm differs in computational complexity \textit{not} because of the training data size---which is the same across all---but due to its specific update operations after computing the expected gradient $\mathbf{g}$ and Hessian $\bH$. Since these quantities cost the same for each method, we focus on the subsequent matrix/vector steps.

\begin{figure}[H]
   \centering
\includegraphics[width=\linewidth]{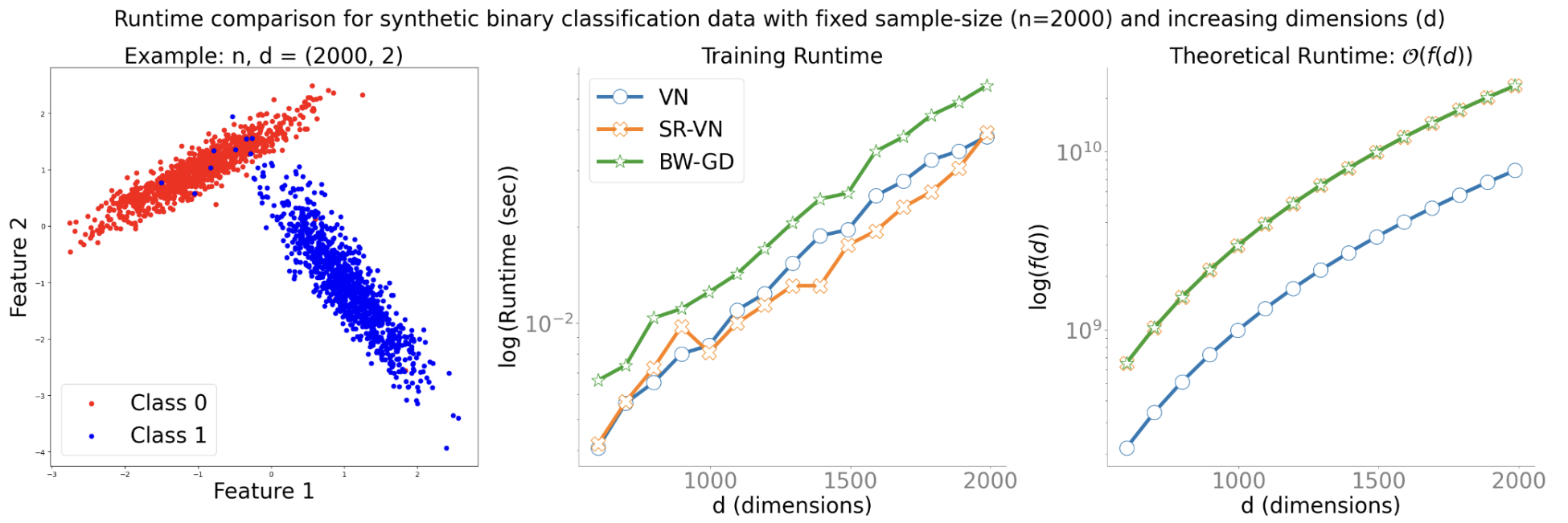}
\caption{\small{We compared the run times of all three algorithms across increasing dimensions while keeping the
number of data points $(n)$ fixed. The run times were averaged over $1000$ iterations per algorithm. The theoretical
run times, derived in our rebuttal answer, are functions of $d$ and are plotted to reflect the floating point operations.
The results indicate that the run times can vary significantly among the algorithms, depending on the specific
data and dimensionality.}}
\label{fig:runtime_comp}
\end{figure}

\begin{enumerate}
    \item \textbf{Complexity for VN:}
    \begin{itemize}
        \item $\bS$ update: scalar multiplication + matrix addition $\to \mathcal{O}(2d^2)$.
        \item $\bmm$ update: matrix inversion $\to \mathcal{O}(d^3)$, plus matrix-vector product $\to \mathcal{O}(d^2)$, plus matrix addition $\to \mathcal{O}(d^2)$, for a total $\mathcal{O}(2d^2 + d^3)$.
        \item \textbf{Overall cost:} $\mathcal{O}(d^3 + 4d^2)$.
    \end{itemize}

    \item \textbf{Complexity for SR-VN:}
    \begin{itemize}
        \item $\bC$ update: includes $\bC^\top \bH\bC$ ($\mathcal{O}(d^3)$) plus some matrix additions/subtractions ($\mathcal{O}(d^2)$) and multiplications ($\mathcal{O}(d^3)$).
        \item $\bmm$ update: $\bC\bC^\top$ multiplication ($\mathcal{O}(d^3)$) plus matrix-vector products ($\mathcal{O}(d^2)$).
        \item \textbf{Overall cost:} $\mathcal{O}(3\,d^3 + \tfrac{9}{2}\,d^2 + \tfrac{d}{2})$.
    \end{itemize}

    \item \textbf{Complexity for BW-GD:}
    \begin{itemize}
        \item $\bmm$ update: scalar multiplication + matrix subtraction $\to \mathcal{O}(3d^2)$.
        \item $\bM$ update: two matrix additions + scalar operations $\to \mathcal{O}(2d^2)$.
        \item $\bV$ update: matrix inversion $\to \mathcal{O}(d^3)$ plus two matrix multiplications $\to \mathcal{O}(2d^3)$, totaling $\mathcal{O}(3d^3)$.
        \item \textbf{Overall cost:} $\mathcal{O}(3d^3 + 5d^2)$.
    \end{itemize}
\end{enumerate}
 These matrix/vector operations can dominate the wall-clock time for each algorithm. To illustrate how runtime scales in practice, we compare VN, SR-VN, and BW-GD in \Cref{fig:runtime_comp} as the dimension grows, highlighting their differing execution costs.

\newpage
\subsection{Algorithmic Details and Additional Results}\label{app:algo-details}

In \Cref{tab:dataset-performance}, we provide descriptive statistics for different metrics achieved by each algorithm when it is run for every dataset considered.

\begin{table}[H] 
    \centering
    \caption{Performance Comparison on different Datasets}
    \label{tab:dataset-performance}
    \begin{tabular}{|c|c|c|c|c|c|}
        \hline
        Dataset & Algorithm & Train ELBO & Test NLL & Test Accuracy & Time (sec) \\
        \hline
        \multirow{3}{*}{Australian-scale} & VN & 196.82 & 54.61 & 0.877 & 0.53\\
         & BW-GD & 196.82 & 54.61 & 0.877 & 0.53\\
         & SQ-VN & 196.82 & 54.61 & 0.877 & 0.53\\
         \hline
        \multirow{3}{*}{Diabetes-scale} & VN & 301.18 & 79.72 & 0.74 & 0.7\\
         & BW-GD & 301.18 & 79.72 & 0.74 & 0.49\\
         & SQ-VN & 301.18 & 79.72 & 0.74 & 0.53\\
         \hline
        \multirow{3}{*}{Breast-cancer} & VN & 52.86 & 13.62 & 0.956 & 0.61\\
         & BW-GD & 52.86 & 13.62 & 0.956 & 0.61\\
         & SQ-VN & 52.86 & 13.62 & 0.956 & 0.61\\
         \hline
        \multirow{3}{*}{mushrooms} & VN & 57.51 & 6.69 & 1.0 & 1.04\\
         & BW-GD & 78.41 & 7.02 & 0.999 & 1.04\\
         & SQ-VN & 53.56 & 6.43 & 0.999 & 1.05\\
         \hline
        \multirow{3}{*}{Leukeumia} & VN & 339.41 & 727.96 & 0.706 & 0.36\\
         & BW-GD & 564.65 & 796.76 & 0.676 & 0.38\\
         & SQ-VN & 339.38 & 727.99 & 0.706 & 0.37\\
         \hline
         \hline
        \multirow{3}{*}{Phishing} & VN & 1267.77 & 334.19 & 0.938 & 0.99 \\
         & BW-GD & 1377.645 & 349.10 & 0.933 & 0.97\\
         & SQ-VN & 1268.06 & 333.37 & 0.939 & 0.98\\
        \hline
        \multirow{3}{*}{MNIST} & VN & 6317.52 & 1017.64 & 0.989 & 0.22 \\
         & BW-GD & 6367.23 & 1102.34 & 0.988 & 0.23\\
         & SQ-VN & 6301.87 & 1012.37 & 0.989 & 0.2\\
        \hline
        \multirow{3}{*}{Covtype-scale} & VN & 27066.64 & 10741.37 & 0.745 & 0.95\\
         & BW-GD & 28036.23 & 11259.20 & 0.714 & 0.98\\
         & SQ-VN & 27065.02 & 10740.73 & 0.745 & 0.91\\
        \hline
    \end{tabular}
\end{table}

\newpage
\subsection{Iteration Plots with Accuracy}\label{app:add_plots}

\begin{figure}[htb!]
   \centering
\begin{tabular}{ccc}
    \includegraphics[width=\linewidth]{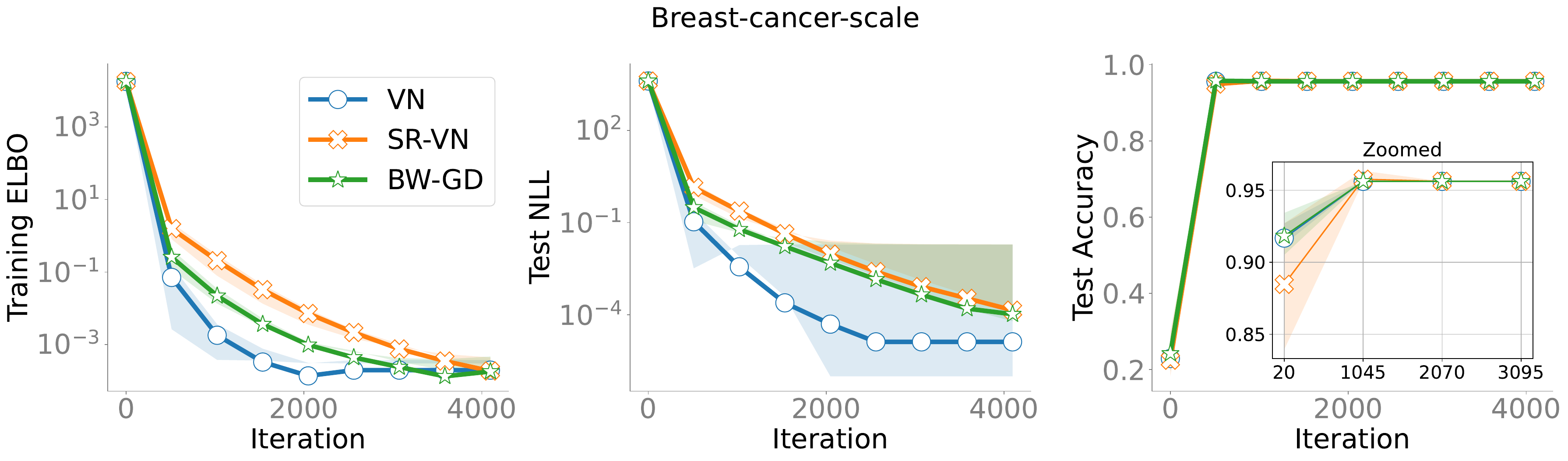}\\
    \includegraphics[width=\linewidth]{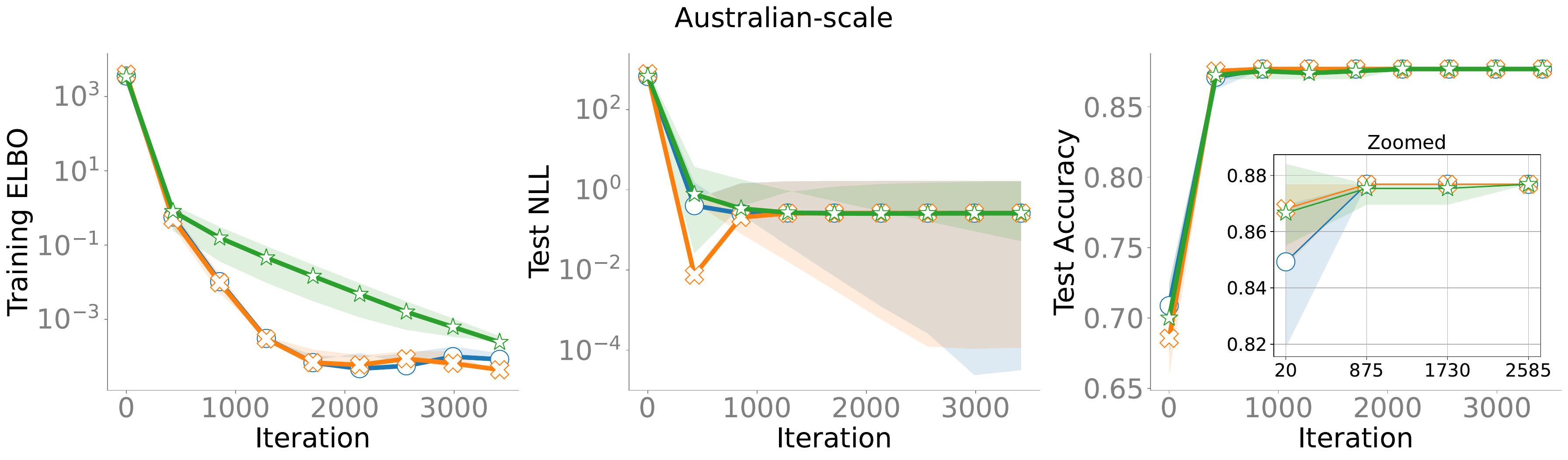}\\
    \includegraphics[width=\linewidth]{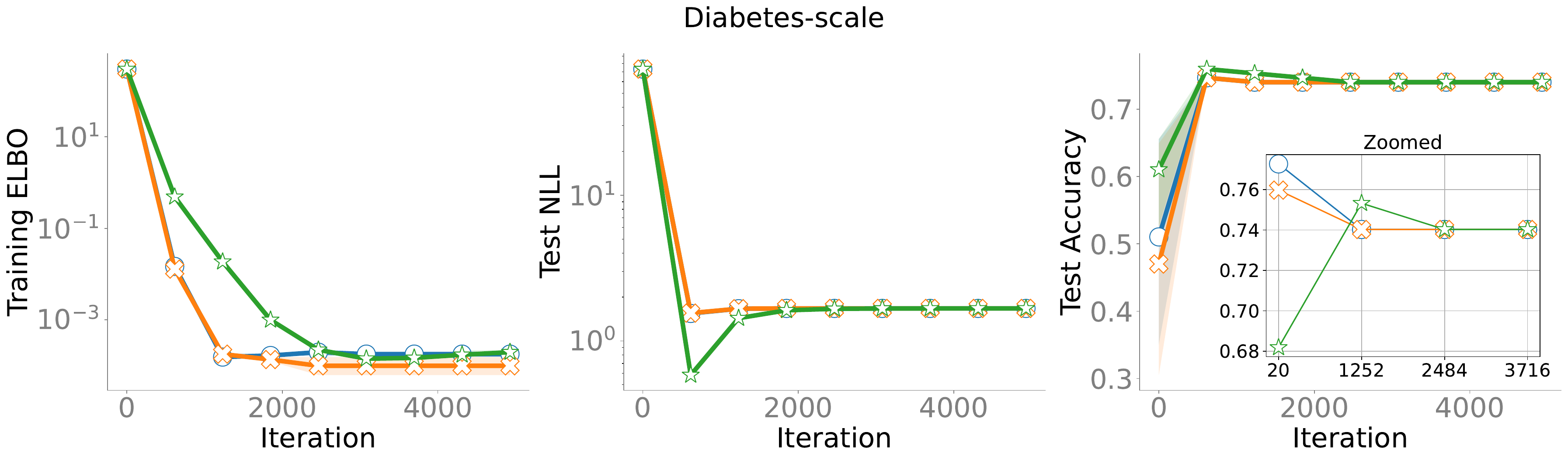}\\
    \includegraphics[width=\linewidth]{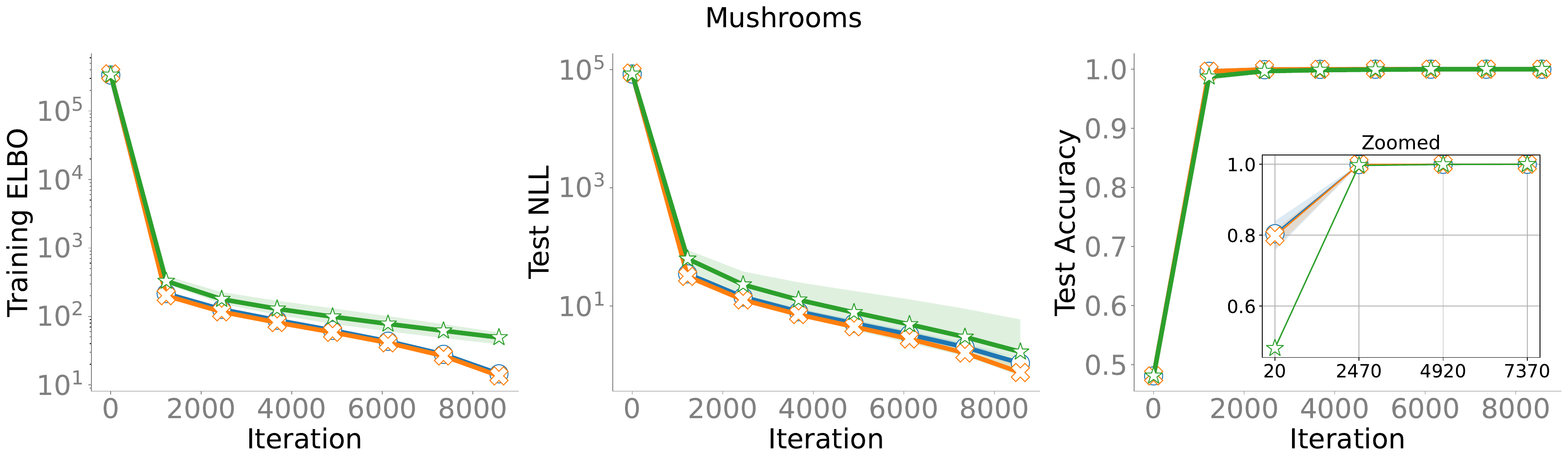}\\
\end{tabular}
\caption{Plots with respect to the number of iterations for small-scale datasets.}
\end{figure}

\newpage
\begin{figure}[htb!]
   \centering
\begin{tabular}{ccc}
    \includegraphics[width=\linewidth]{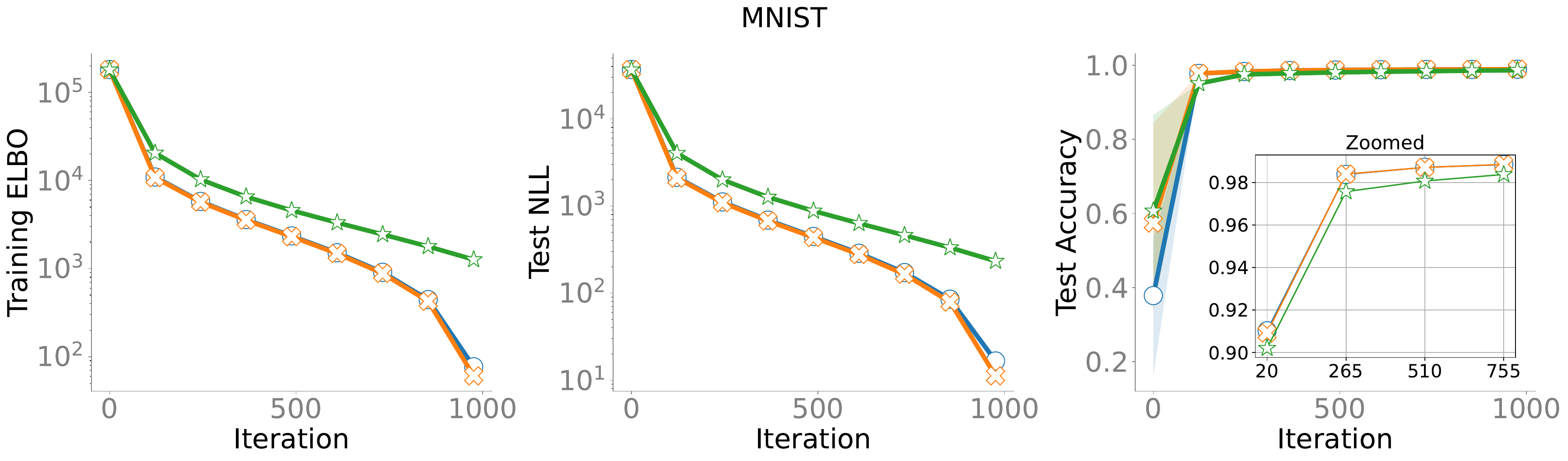}\\
    \includegraphics[width=\linewidth]{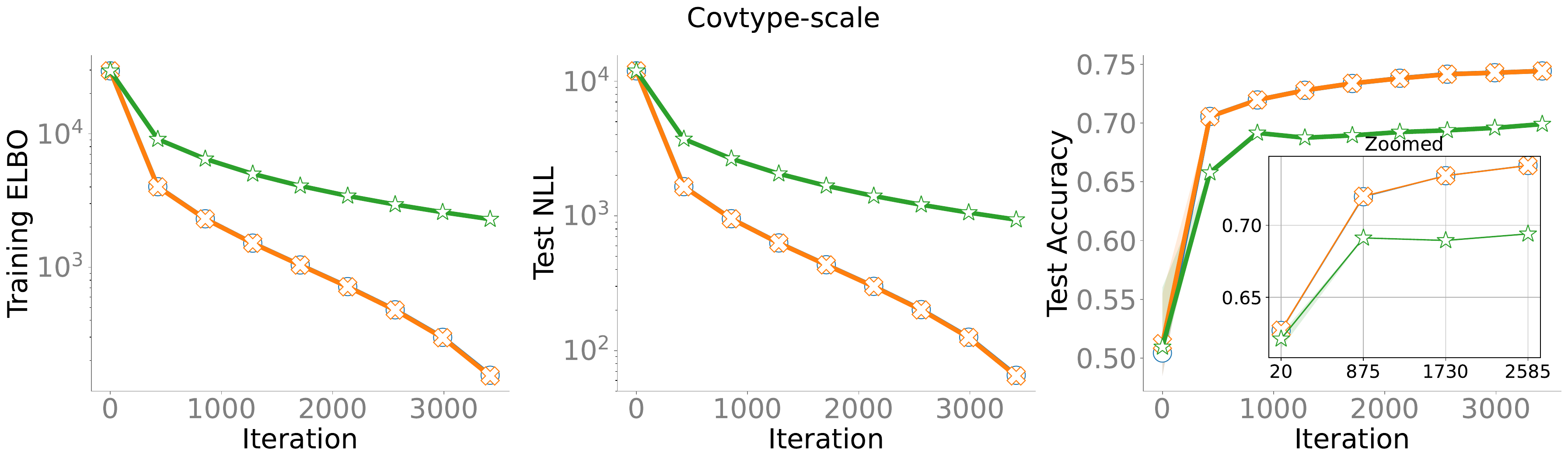}\\
    \includegraphics[width=\linewidth]{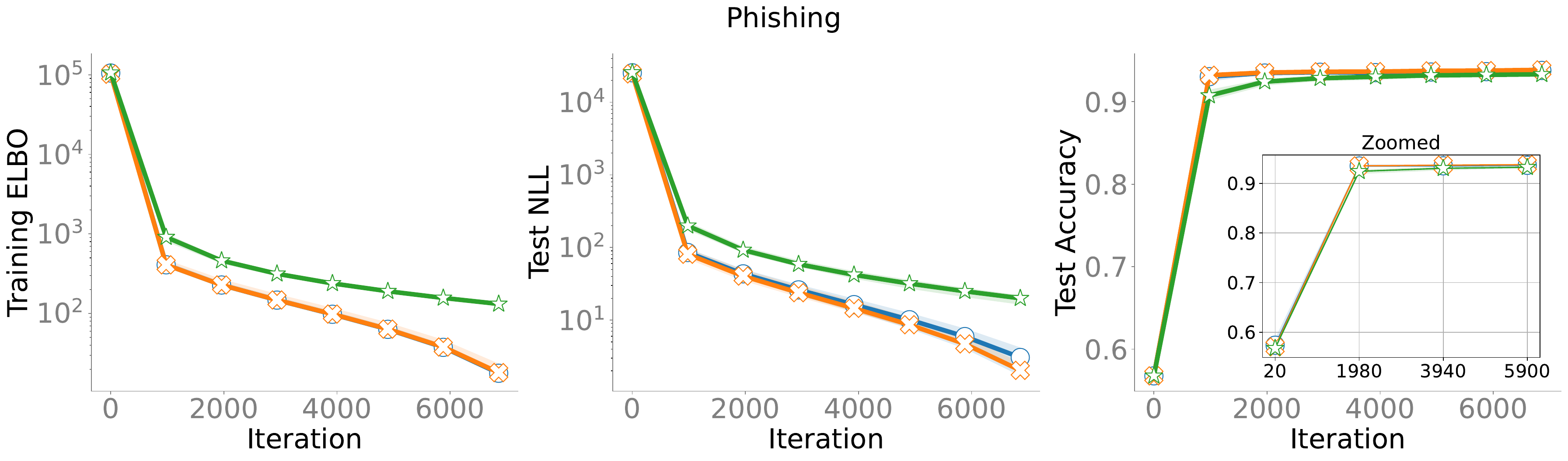}\\
    \includegraphics[width=\linewidth]{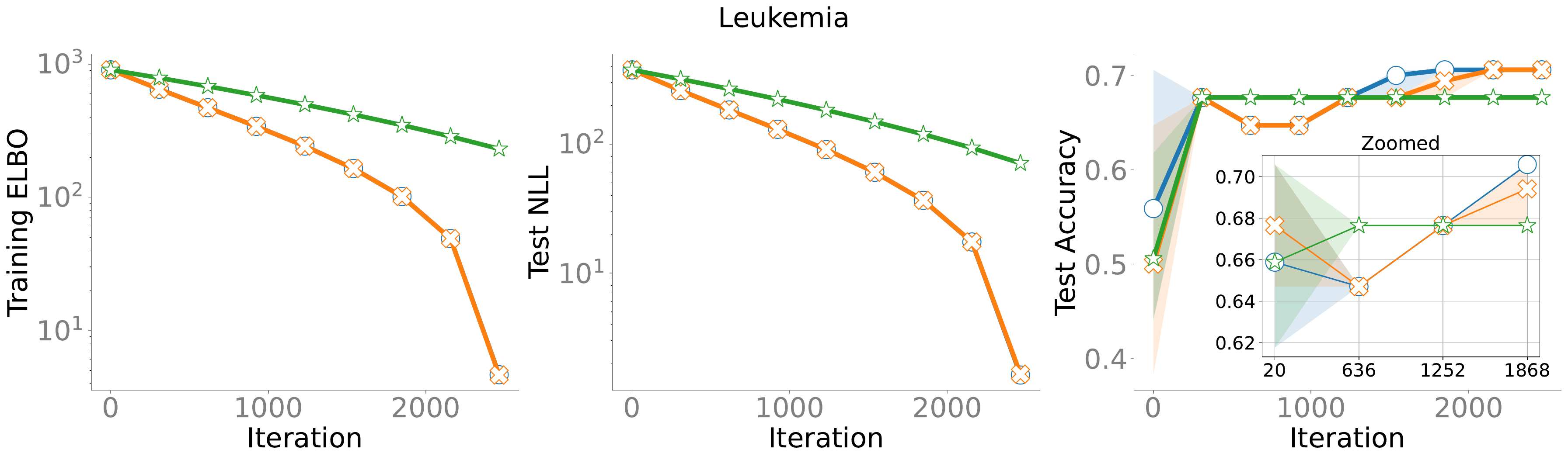}
\end{tabular}
\caption{Plots with respect to the number of iterations for large-scale datasets.}
\end{figure}

\newpage

\subsection{Time Plots with Accuracy}


\begin{figure}[htb!]
   \centering
\begin{tabular}{ccc}
\includegraphics[width=\linewidth]{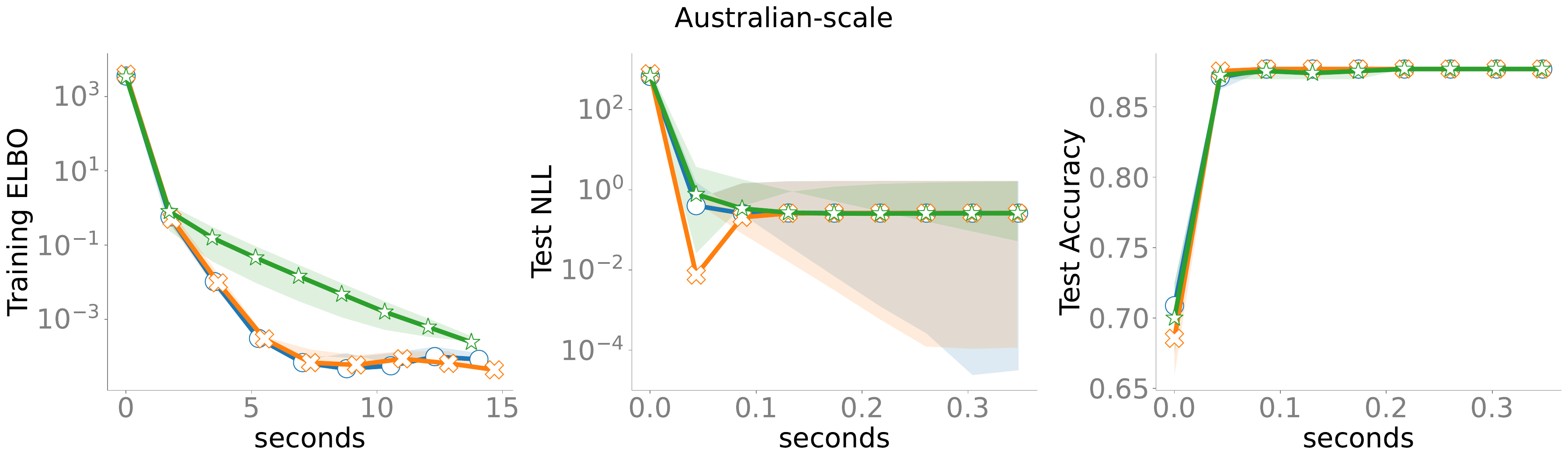}\\
\includegraphics[width=\linewidth]{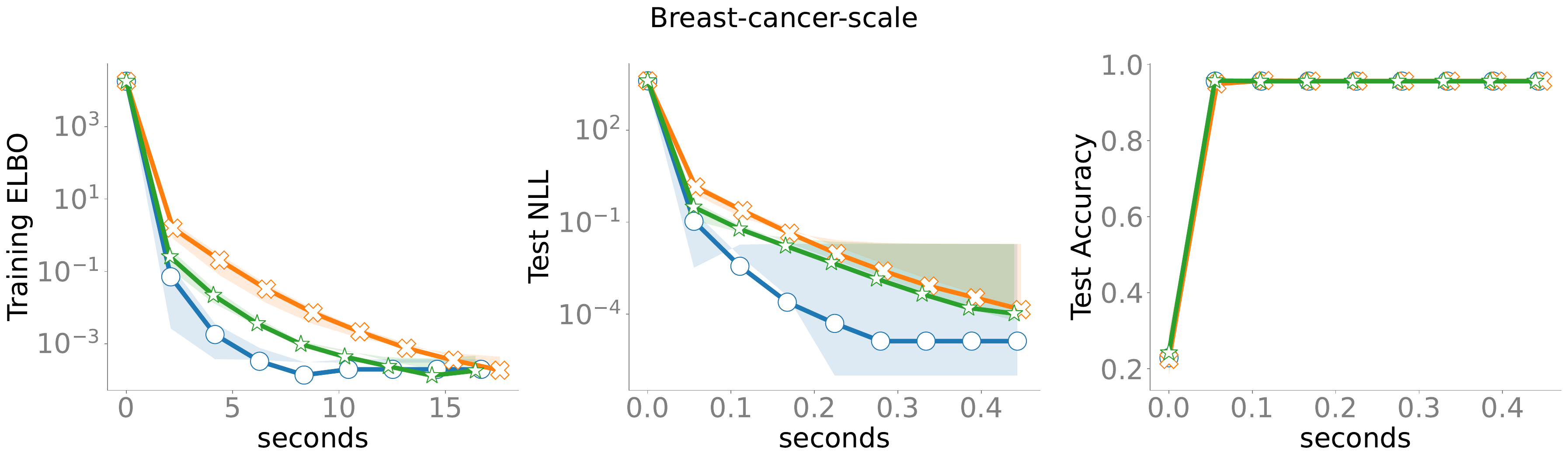}\\
\includegraphics[width=\linewidth]{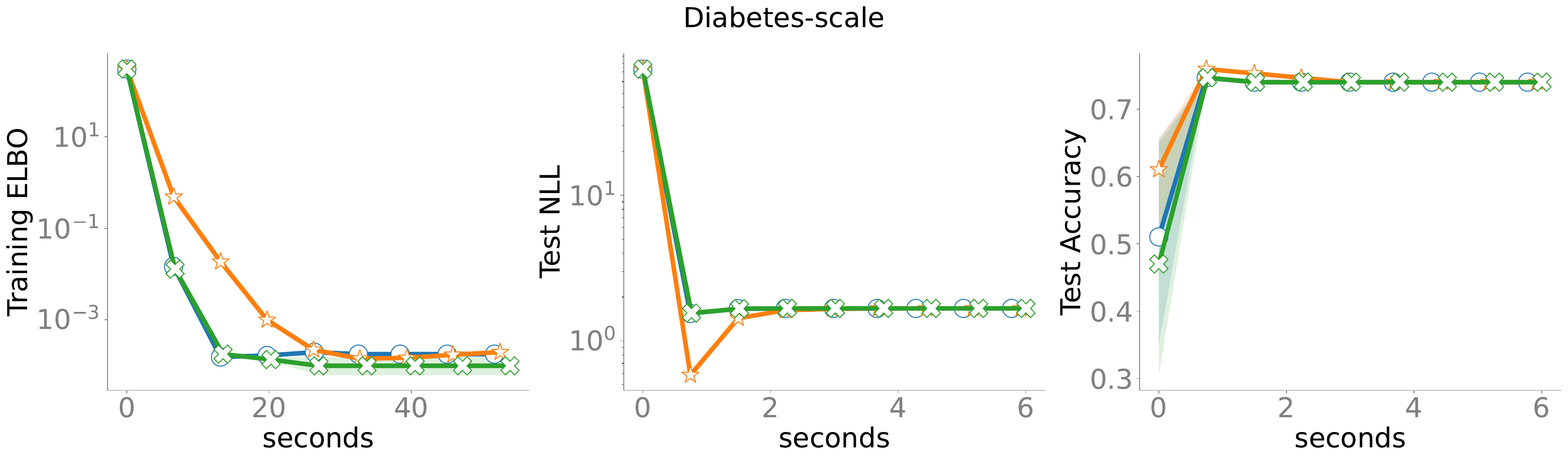}\\
\includegraphics[width=\linewidth]{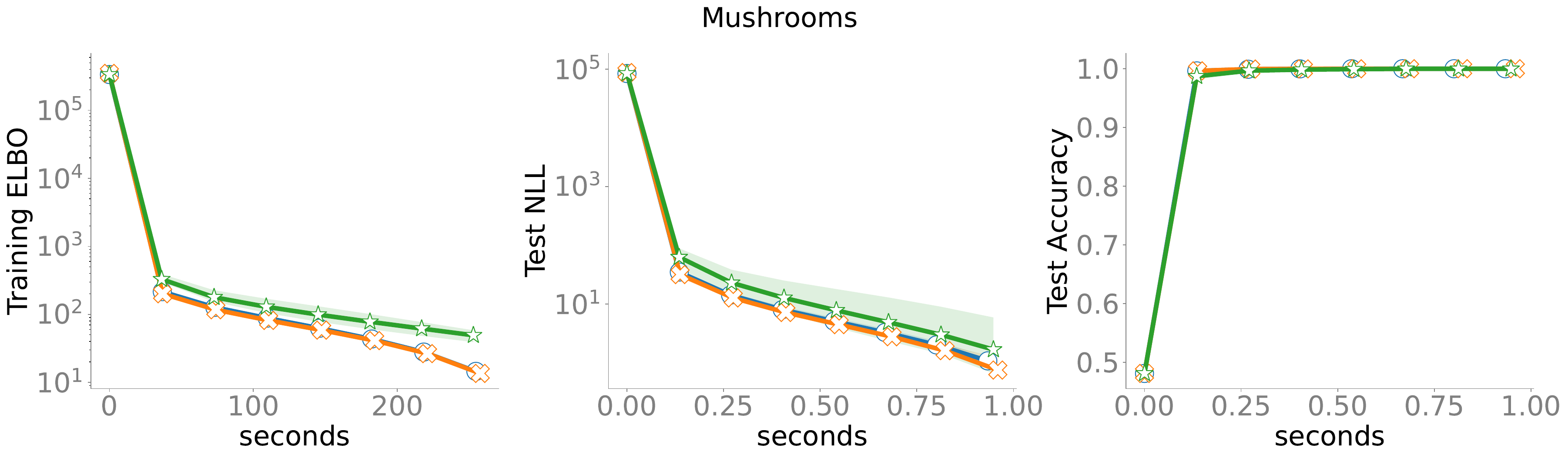}
\end{tabular}
\caption{Plots with respect to time for small-scale datasets.}
\end{figure}
\newpage

\begin{figure}[htb!]
   \centering
\begin{tabular}{ccc}
\includegraphics[width=\linewidth]{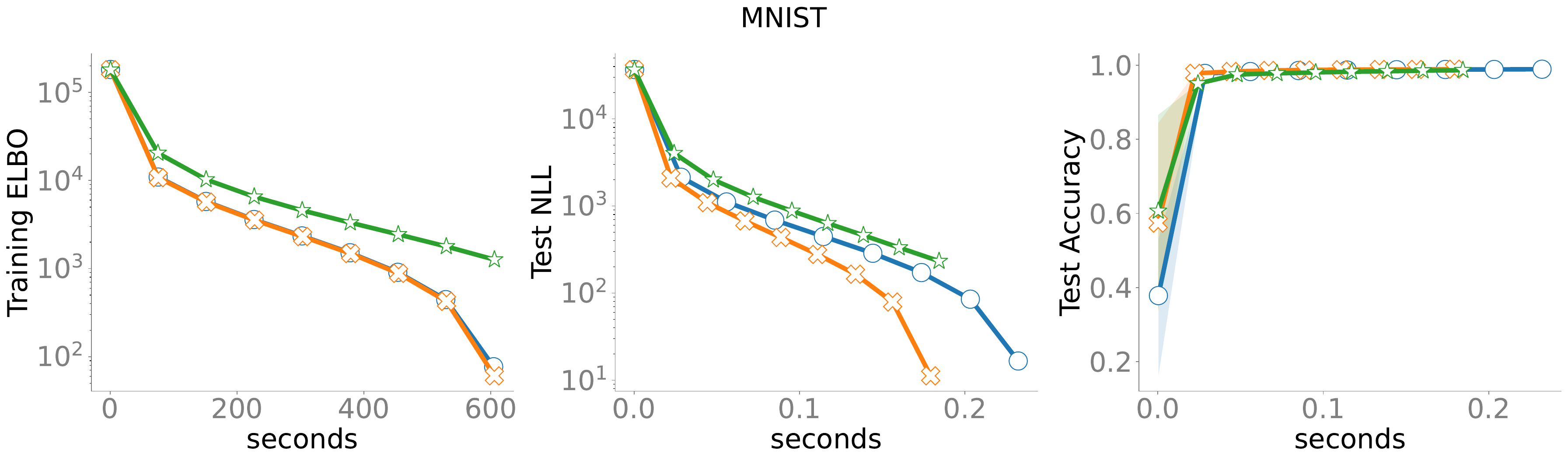}\\
\includegraphics[width=\linewidth]{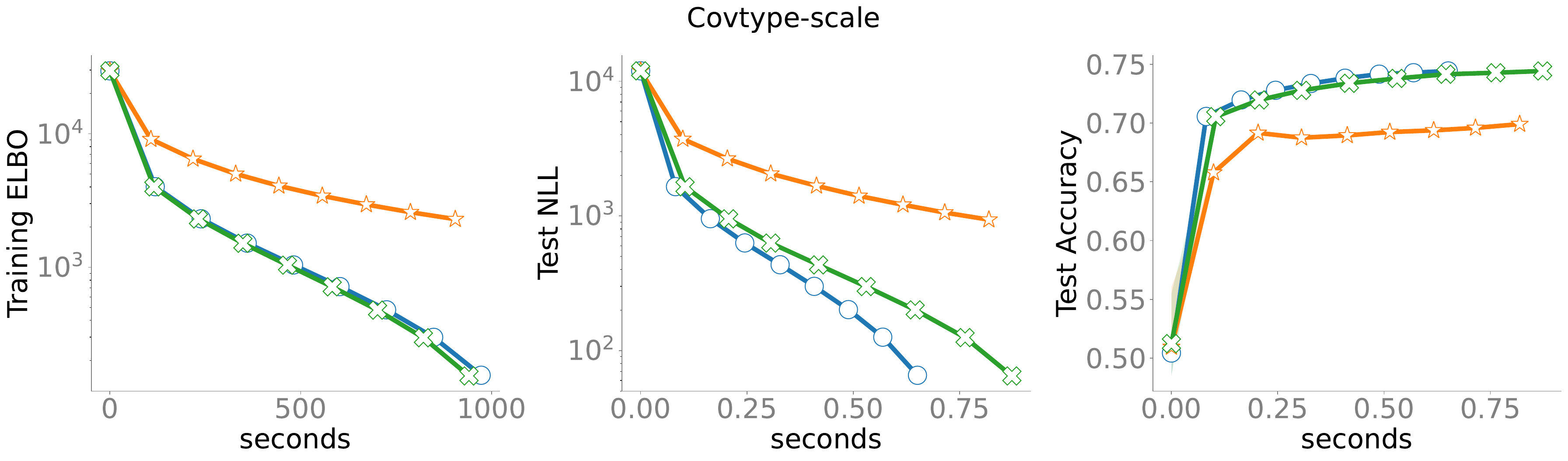}\\
\includegraphics[width=\linewidth]{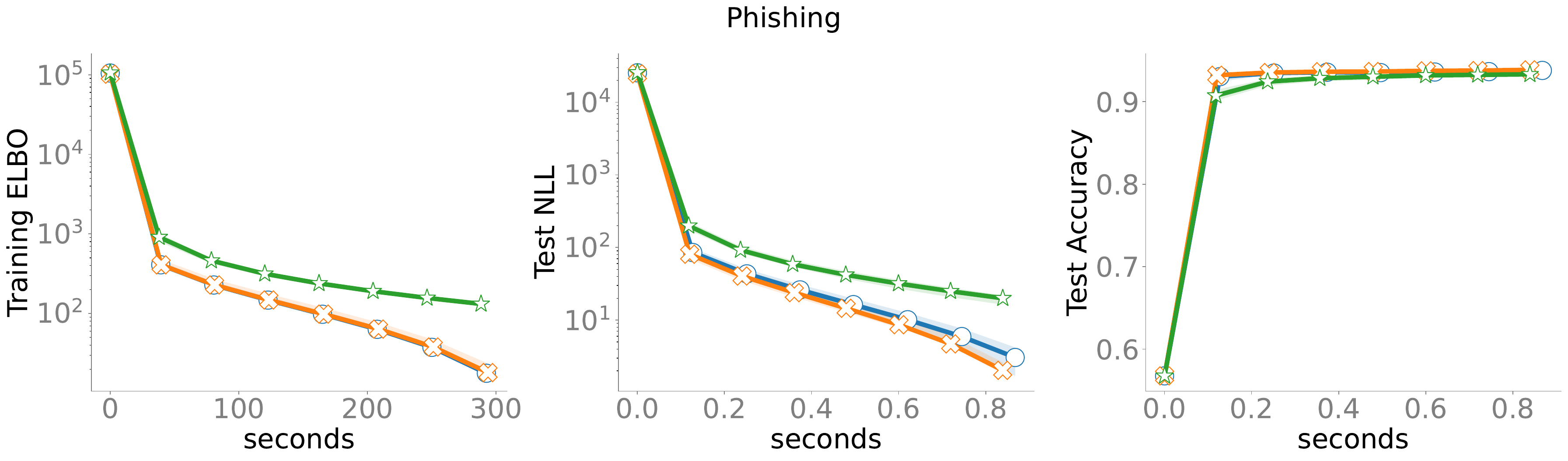}\\
\includegraphics[width=\linewidth]{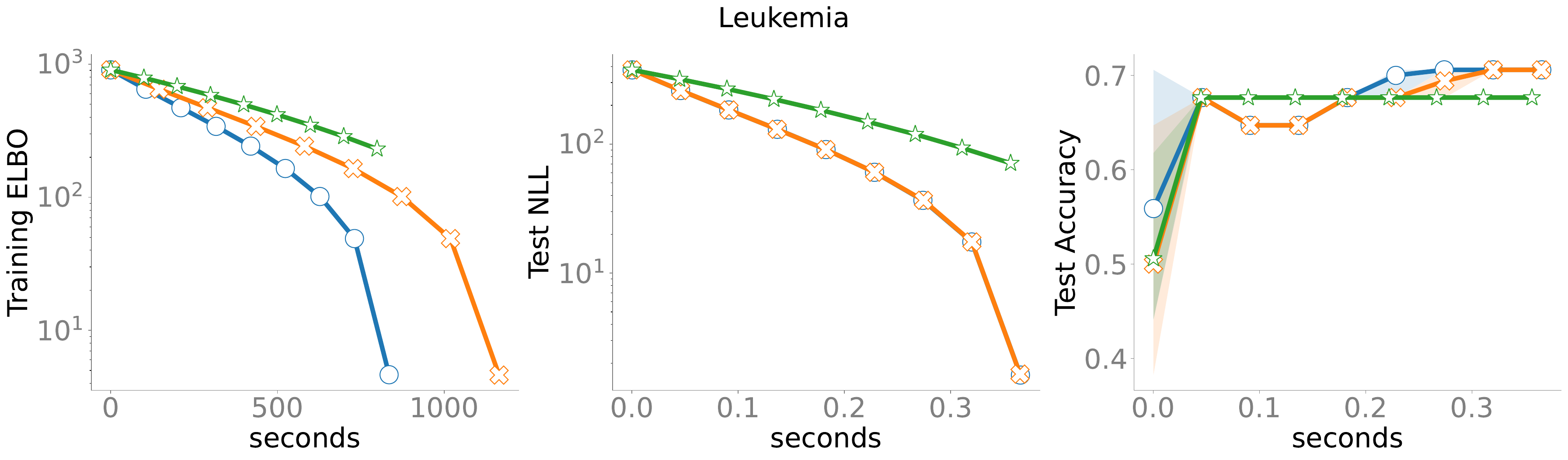}
\end{tabular}
\caption{Plots with respect to time for large-scale datasets.}
\end{figure}

\newpage

\section{Comparing Geometries: SR-VN vs. Alternatives}
In this section, we discuss two complementary comparisons for SR-VN: (1) a theoretical analysis of its convergence rate relative to BW-GD, and (2) an empirical evaluation against an enhanced method tailored for the BW geometry.

\paragraph{Theoretical rates comparison, SR-VN vs BW-GD:} To estimate the convergence rate for BW-GD, we refer to \cite[Theorem 4]{lambert2022variational}, which—under the assumption of strong log-concavity—provides an exponential rate in the Wasserstein distance once we ignore the variance term because of its deterministic nature. Consequently, in terms of asymptotic guarantees, BW-GD and SR-VN both exhibit exponential convergence rates. However, the specific details—namely, \textbf{which geometry} (Fisher–Rao vs. Wasserstein) is deployed, and \textbf{which measure} (KL in parameter space vs. Wasserstein distance in probability space) is analyzed—separate these methods apart. As a result, establishing a theoretical comparison is challenging, leading us to focus on experimental evidence in the next section.

\paragraph{Additional experimental comparisons.} 
Although BW-GD is a natural baseline for SR‑VN in the BW geometry, more recent studies have proposed improved and more stable stochastic BW methods (e.g., \cite{diao2023forward, liu2024towards}). We focus here on \cite{liu2024towards}, which uses SVGD and has demonstrated better performance than \cite{diao2023forward}. Specifically, we implemented a \textbf{deterministic} version of \cite[Algorithm 1]{liu2024towards}—i.e., replacing sample estimates with the full expectation—and refer to this method as \textbf{SVGD-density}. It is worth mentioning that \cite{liu2024towards} demonstrated improvements in the stochastic setting compared to BW-GD. However, since we focus on the deterministic implementation here, it remains unclear whether the same advantages apply—this is precisely what we aim to address next.

We evaluated SVGD-density (using various step sizes) on the same 2D Bayesian linear regression problem from \Cref{fig:convergence-comparison}, as shown in \Cref{fig:comparison_SVGD} (upper). Additionally, we tested it on the Bayesian Logistic Regression scenario in \Cref{fig:large_scale}, as shown in \Cref{fig:comparison_SVGD} (lower). In both plots, SVGD-density shows the slowest convergence among the compared algorithms. Although it can approach the performance of BW-GD, given a finely tuned step size e.g. in \Cref{fig:comparison_SVGD} (upper), its performance is generally worse compared to SR-VN or VN.

\begin{figure}[htb!]
   \centering
\begin{tabular}{cc}
\includegraphics[ trim={2cm 0 0 0},clip,width=0.54\textheight]{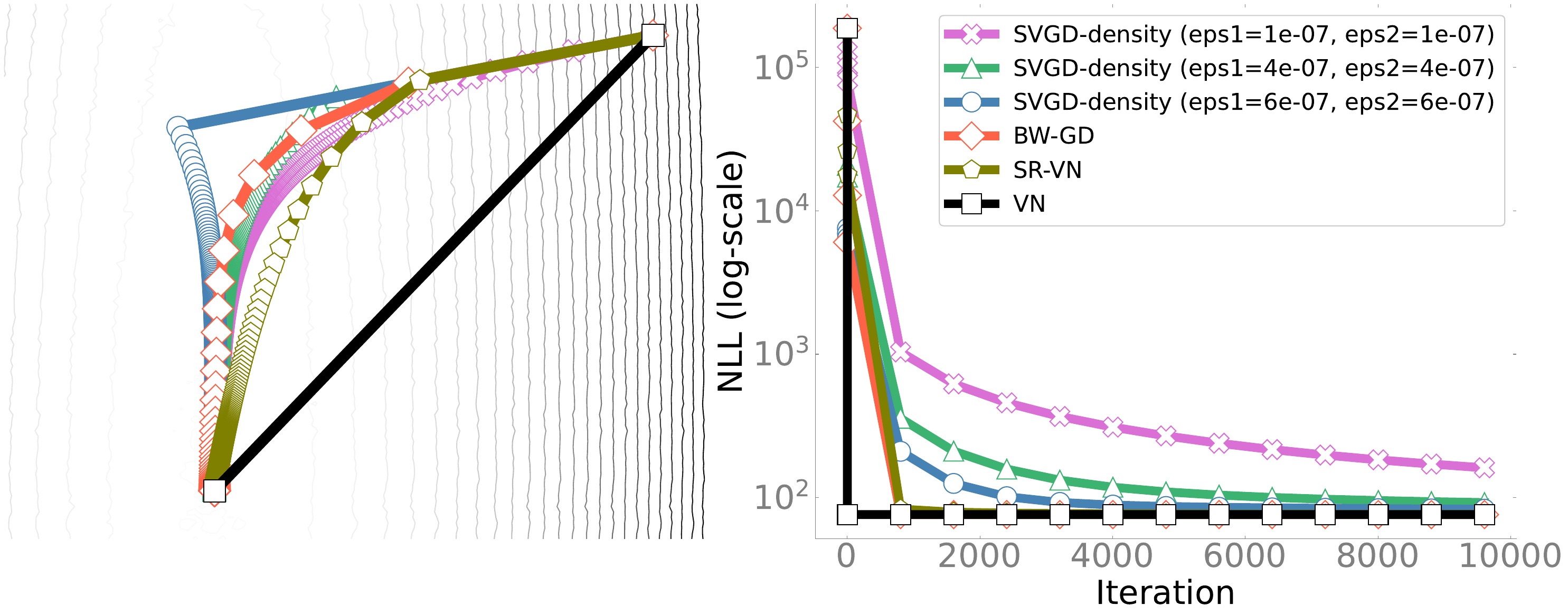}\\
\includegraphics[width=0.88\linewidth]{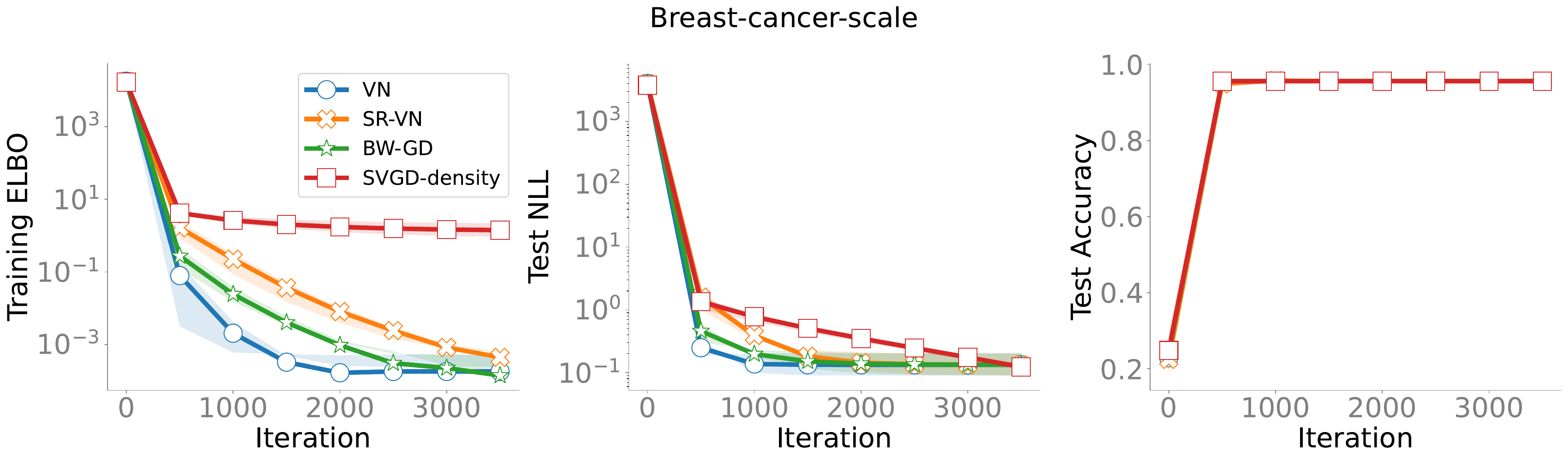}
\end{tabular}
\caption{\small{Comparison of SVGD-density: Bayesian linear regression (top) and Bayesian logistic regression (bottom).}}
\label{fig:comparison_SVGD}
\end{figure}


\section{Non-convex Model} \label{sec:non_convex}
In this section, we evaluate all algorithms on the same Bayesian logistic regression loss in \Cref{eq:logistic-loss} with the addition of a non-convex regularizer $\sum_{i=1}^d m_i^2/(1+m_i^2)$.

The results in \Cref{fig:non_convex} indicate that all compared methods still converge despite the non-convexity introduced in the mean parameter, suggesting promising directions for further investigation into non-convex scenarios.

\begin{figure}[H]
   \centering
\includegraphics[width=\linewidth]{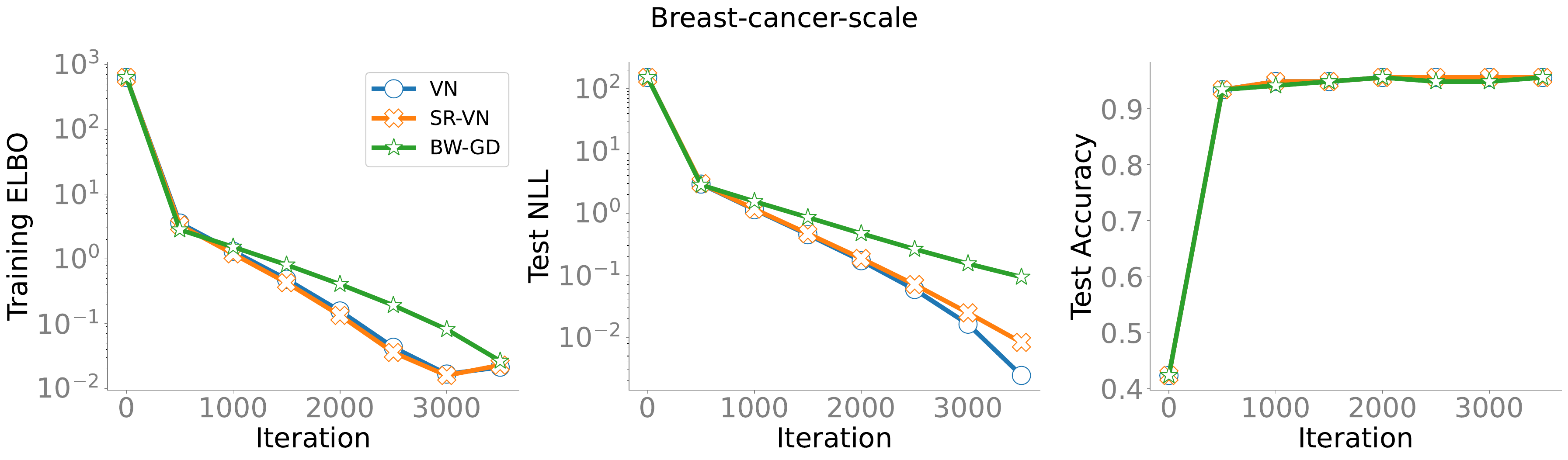}
\caption{\small{To evaluate performance under non-convex conditions, we augment the Bayesian logistic regression problem in \Cref{eq:logistic-loss} with a non-convex regularization term: $\sum_{i=1}^d m_i^2/(1+m_i^2)$. }}
\label{fig:non_convex}
\end{figure}

\end{document}